\newenvironment{proof}{\par\noindent{\bf Proof\ }}{\hfill\BlackBox\\[2mm]}
\newcommand{\kb}[1]{{\color{red}\bf[KB: #1]}}
\DeclareMathOperator*{\argmin}{argmin}
\newcommand{\abs}[1]{\lvert #1 \rvert}
\newcommand{\norm}[1]{\left\lVert #1 \right\rVert}
\newcommand{\calP}{\mathcal{P}}
\newcommand{\expec}[1]{\mathbf{E}\left[ #1 \right] }
\newcommand\numberthis{\addtocounter{equation}{1}\tag{\theequation}}  
\def\fn[#1]#2{{f_{#1}\left(x_{#2}\right)}}
\newtheorem{definition}{Definition}[section]
\newtheorem{lemma}{Lemma}[section]
\newtheorem{theorem}{Theorem}[section]
\newtheorem{assumption}{Assumption}[section]
\newtheorem{remark}{Remark}
\newcommand{\calF}{{\mathcal{F}}}
\newcommand{\order}{\ensuremath{\mathcal{O}}}
\newcommand{\torder}{\ensuremath{\tilde{\mathcal{O}}}}
\newcommand{\cH}{{\mathcal{H}}}
\def\E{{\bf E}}
\def\exp{{\rm exp}}
\def\cF{{\cal F}}
\def\cS{{\cal S}}
\def\cT{{\cal T}}
\def\cH{{\cal H}}
\def\cF{{\cal F}}
\def\cS{{\cal S}}
\def\bg{{\bar{G}_t}}
\def\bh{{\bar{H}_t}}
\def\hx{{\hat x}}
\def\hzt{{\hat \zeta}}
\def\ep{{\epsilon}}
\def\tz{{\tilde{\zeta}}}
\def\lge{{\log\left(\frac{1}{\epsilon}\right)}}
\title{Escaping Saddle-Points Faster under Interpolation-like Conditions}
\author{Abhishek Roy\thanks{Department of Statistics, University of California, Davis. Email: abroy@ucdavis.edu} 
\and Krishnakumar Balasubramanian\thanks{Department of Statistics, University of California, Davis. Email: kbala@ucdavis.edu}
\and Saeed Ghadimi \thanks{Department of Management Sciences, University of Waterloo. Email: sghadimi@uwaterloo.ca}
\and Prasant Mohapatra\thanks{Department of Computer Science, University of California, Davis. Email: pmohapatra@ucdavis.edu.}
}
\begin{document}
\maketitle




\begin{abstract}
In this paper, we show that under over-parametrization several standard stochastic optimization algorithms escape saddle-points and converge to local-minimizers much faster. One of the fundamental aspects of over-parametrized models is that they are capable of interpolating the training data. We show that, under interpolation-like assumptions satisfied by the stochastic gradients in an over-parametrization setting, the first-order oracle complexity of Perturbed Stochastic Gradient Descent (PSGD) algorithm to reach an $\epsilon$-local-minimizer, matches the corresponding deterministic rate of $\torder(1/\epsilon^{2})$. We next analyze Stochastic Cubic-Regularized Newton (SCRN) algorithm under interpolation-like conditions, and show that the oracle complexity to reach an $\epsilon$-local-minimizer under interpolation-like conditions, is $\torder(1/\epsilon^{2.5})$. While this obtained complexity is better than the corresponding complexity of either PSGD, or SCRN without interpolation-like assumptions, it does not match the rate of $\torder(1/\epsilon^{1.5})$ corresponding to deterministic Cubic-Regularized Newton method. It seems further Hessian-based interpolation-like assumptions are necessary to bridge this gap. We also discuss the corresponding improved complexities in the zeroth-order settings.
\end{abstract}

\section{Introduction}
Over-parametrized models, for which the training stage involves solving nonconvex optimization problems, are common in modern machine learning. A canonical example of such a model is deep neural networks. Such over-parametrized models have several interesting statistical and computational properties. On the statistical side, such over-parametrized models are highly expressive and are capable of nearly perfectly interpolating the training data. Furthermore, despite the highly nonconvex training landscape, most local minimizers have good generalization properties under regularity conditions; see for example~\cite{nguyen2017loss,kawaguchi2016deep,haeffele2015global,haeffele2014structured} for empirical and theoretical details. We emphasize here that over-parametrization plays an important role for both phenomenon to occur. Furthermore, it is to be noted that not all critical points exhibit nice generalization properties. Hence, from a computational perspective, designing algorithms that do not get trapped in saddle-points, and converge to local minimizers during the training process, becomes extremely important~\cite{dauphin2014identifying}. 

Indeed, recently there has been extensive research in the machine learning and optimization communities on designing algorithms that escape saddle-points and converge to local minimizers. The authors of~\cite{lee2016gradient} proved the folklore result that in the deterministic setting for sufficiently regular functions, vanilla gradient descent algorithms converges almost surely to local minimizers, even when initialized randomly; see also~\cite{lee2017first}. However,~\cite{lee2016gradient,lee2017first} only provide asymptotic results, that have limited consequence for practice. Understandably, it has been shown by the authors of ~\cite{du2017gradient}, that gradient descent might take exponential-time to escape saddle points in several cases. In this context, injecting artificial noise in each step of the gradient descent algorithm has been empirically observed to help escape saddle points. Several works, for example,~\cite{jin2017escape, jin2018accelerated}, showed that such \emph{perturbed} gradient descent algorithms escape saddles faster in a non-asymptotic sense. Such algorithms are routinely used in training highly over-parametrized deep neural network and other over-parameterized nonconvex machine learning models.  However, existing theoretical analysis of such algorithms fail to take advantage of the interpolation-like properties enjoyed by over-parametrized machine learning models. Hence, such theoretical results are conservative. Specifically, there is a gap between the assumptions used in the theoretical analysis of algorithms that escape saddle-points and the assumptions commonly satisfied by over-parametrized models which are trained by those algorithms.

In this work, we consider nonconvex stochastic optimization problems of the following form:
\begin{align}\label{eq:mainproblemstopt}
\argmin_{x \in \mathbb{R}^d} \left\{f(x): = \E_\xi [F(x,\xi)] \right\}.
\end{align}
where $f:\mathbb{R}^d\to \mathbb{R}$ is nonconvex function satisfying certain regularity properties described next, and $\xi$ is a random variable characterizing the stochasticity in the problem. We assume that the function $f$ has a lower bound $f^*$ throughout this work. We  analyze two standard algorithms that escape saddle-points, namely the perturbed stochastic gradient descent (PSGD) and stochastic cubic-regularized Newton's method (SCRN) for problems of the form in~\eqref{eq:mainproblemstopt}. We show that under interpolation-like assumptions (see Section~\ref{sec:prelim} for exact definitions) on the stochastic gradient, it could be proved that both PSGD and SCRN escape saddle-points and converge to local minimizers much faster. In particular, we show that in order for PSGD algorithm to escape saddle-points and find an $\epsilon$-local-minimizer, the number of calls to the stochastic first-order oracle is of the order $\torder(1/\epsilon^{2})$\footnote{Here, $\torder$ hides $\log$ factors.} which matches number of calls when the objective being optimized is a deterministic objective (for which exact gradient could be obtained in each step of the algorithm)\footnote{It is possible to obtain $\torder(1/\epsilon^{11.75})$ complexity using accelerated method in deterministic setting; see~\cite{jin2018accelerated}. }. As a point of comparison,~\cite{ge2015escaping, jin2019nonconvex} showed that without the interpolation-like conditions that we make, PSGD requires~$\torder(1/\epsilon^{4})$ calls to the stochastic gradient oracle. Furthermore,~\cite{fang2019sharp} analyzed a version of PSGD with averaging and improved the oracle complexity to $\torder(1/\epsilon^{3.5})$. It is also worth noting that, with a mean-square Lipschitz gradient assumption on the objective function being optimized, and using complicated variance reduction techniques, the authors of~\cite{fang2018spider} showed that it is possible for a double-loop version of PSGD to converge to $\epsilon$-local minimizers with $\torder(1/\epsilon^{3})$ number of calls to the stochastic first-order oracle. However, recent empirical investigations seem to suggest that variance reduction techniques are inefficient for the nonconvex deep learning problems~\cite{defazio2019ineffectiveness,schmidt2020}. Our results, on the other hand exploit the naturally available structure present in over-parametrized models and obtains the best-known oracle complexity for escaping saddle-points using only the vanilla versions of PSGD algorithm (which is oftentimes the version of PSGD used in practice). We also analyze the corresponding Zeroth-Order  version of the PSGD algorithm. In this setting, we are able to observe only potentially noisy evaluations of the function being optimized. In this setting, we show that PSGD algorithm requires $\torder(d^{1.5}/\epsilon^{4.5})$ calls to the stochastic zeroth-order oracle. In this context, we are not aware of a result to compare with. The recent works of~\cite{bai2019escaping, flokas2019efficiently}  provided results for bounded functions in the zeroth-order deterministic setting, where one obtains exact function values; such a setting though is highly unrealistic in practice.

Next, we consider the question of whether using second-order methods helps reduce the number of calls. Indeed, in the deterministic setting, it is well-known that second-order information helps escape saddle point at a much faster rate. For example,~\cite{nesterov2006cubic} proposed that Cubic-regularized Newton's method and showed that the method requires only $\torder(1/\epsilon^{1.5})$ calls to the gradient and Hessian oracle; see also~\cite{cartis2011adaptive, curtis2017trust} for related results. Correspondingly, in the stochastic setting~\cite{tripuraneni2018stochastic} showed that SCRN method requires $\torder(1/\epsilon^{3.5})$ calls, which is better than that of PSGD (without further assumptions). In this work, we show that under interpolation-like assumptions on (only) the stochastic gradient, SCRN method requires only $\torder(1/\epsilon^{2.5})$ calls. In contrast to the PSGD setting, SCRN requires more calls than its corresponding deterministic counterpart. However, it should be noted that the complexity of SCRN is still better than that of the PSGD, with or without interpolation-like assumptions. We belive that without further interpolation-like assumptions also on the stochastic Hessians, the oracle complexity of  SCRN cannot be improved, in particular to match the deterministic rate of $\torder(1/\epsilon^{1.5})$ (see also Remark~\ref{scrnrateimprovement}). We also provide similar improved results for a zeroth-order version of SCRN method, thereby improving upon the results of~\cite{balasubramanian2018zeroth}. All of our results, along with comparison to existing results in the literature and the corresponding assumption required, are summarized in Table~\ref{tab:complexity}. We conclude this section with a other related works.

\vspace{0.1in}

\textbf{More Related Works.} In the interpolation regime,~\cite{ma2018power} recently showed that mini-batch stochastic gradient descent (SGD) algorithm enjoys exponential rates of convergence for unconstrained strongly-convex optimization problems; see also~\cite{strohmer2009randomized, needell2014stochastic} for related earlier work. For the non-convex setting, \cite{bassily2018exponential} analyze SGD for non-convex functions satisfying the Polyak-Lojasiewicz (PL) inequality \cite{polyak1963gradient} under the interpolation condition and show that SGD can achieve a linear convergence rate. Recently, \cite{vaswani2018fast} introduced a more practical form of interpolation condition, and prove that the constant step-size SGD can obtain the optimal convergence rate for strongly-convex and smooth convex functions. They also show the first results in the non-convex setting that the constant step-size SGD can obtain the deterministic rate in the interpolation regime for converging to first-order stationary solution. Subsequently, \cite{meng2019fast} investigate the regularized subsampled Newton method (R-SSN) and the stochastic BFGS algorithm under the interpolation-like conditions. We emphasize that all the above works consider the case of convex objective function predominantly; the only exception is~\cite{vaswani2018fast} that consider the nonconvex case but only study convergence to first-order stationary solution. There has been several works on obtaining oracle complexity of escaping saddle-points in the finite-sum setting; we refer the interested reader to~\cite{allen2017katyusha,zhou2019stochastic,zhang2018adaptive,wang2018stochastic} and references therein for such results. We emphasize that a majority of the above works are based on complicated variance reduction techniques that increase the implementation complexity of such methods and make them less appealing in practice. There exist only few works on  escaping saddle-points for constrained optimization problems; see~\cite{lu2019perturbed,lu2019snap,nouiehed2019trust,mokhtari2018escaping} for more details.

We also briefly discuss the consequences of our results to deep neural network training and related works. Roughly speaking, there are now two potential explanations for the success of optimization methods for training deep neural networks~\cite{sun2019optimization}. The first explanation is based on landscape analysis. This involves two steps: Showing the optimization landscape has favorable geometry~\cite{kawaguchi2020elimination} (i.e., all local minima are (approximate) global minima under suitable regularity conditions), and hence constructing optimization algorithms that can efficiently escape saddle-points. The second explanation is based on the NTK viewpoint; see, for example \cite{jacot2018neural, chizat2018global, chizat2019lazy, allen2019convergence, du2019gradient,  zou2020gradient}, for a partial overview. However, a majority of the results based on NTK viewpoint are for polynomially (in depth and sample-size) large-width networks (indeed,~\cite{allen2019convergence} mention that their polynomial degrees are impractical).  Our results in this paper are geared towards the former program. 

\begin{table*}[t]\label{tab:complexity}
	\begin{center}
	\resizebox{\textwidth}{!}{\begin{tabular}{|c|c|c|cc|c|}
		\hline
		\multirow{2}{*}{Algorithm} & \multicolumn{2}{c|}{\makecell{\bf{With SGC}\\(\bf{This paper})}} & \multicolumn{2}{c|}{Without SGC} & {Deterministic} \\ \cline{2-6} 
		& ZO            & HO            & \multicolumn{1}{c|}{ZO}   & HO  &  HO              \\ \hline
		\makecell{Perturbed\\ GD}               &   \makecell{$\bm{\torder\left({d^{1.5}}{\ep^{-4.5}}\right)}$\\Theorem~\ref{th:pgdmaintheorem}}            &        \makecell{$\bm{\torder\left(\ep^{-2}\right)}$\\Theorem~\ref{th:pgdmaintheorem}}       &  \multicolumn{1}{c|}{ \makecell{$\bm{\torder\left({d^{1.5}}{\ep^{-5.5}}\right)}$\\Theorem~\ref{th:pgdmaintheoremwosgc}}          }     & \makecell{$\torder\left(\ep^{-4}\right)$\\Theorem 17 \\\cite{jin2019nonconvex}}    &            \makecell{$\torder\left(\ep^{-2}\right)$\\Theorem 3\\\cite{jin2017escape}}           \\ \hline
		\makecell{Cubic \\ Newton}               &      \makecell{$\bm{\torder\left({d^4}{\ep^{-2.5}}\right)}$\\Theorem~\ref{th:crnmaintheorem} }        &  \makecell{$\bm{\torder\left(\ep^{-2.5}\right)}$\\Theorem~\ref{th:crnmaintheorem}}              & \multicolumn{1}{l|}{\makecell{$\torder\left({d^{4}}{\ep^{-2.5}}\right)+\order\left({d}{\ep^{-3.5}}\right)$\\Theorem 4.1\\ \cite{balasubramanian2018zeroth}}}     &  \makecell{$\torder\left(\ep^{-3.5}\right)$ \\Theorem 1\\ \cite{tripuraneni2018stochastic}}                 &  \makecell{$\order\left(\ep^{-1.5}\right)$\\Theorem 3\\\cite{nesterov2006cubic}}                \\ \hline
	\end{tabular}}
\end{center}
	\caption{Oracle complexities of perturbed stochastic gradient descent (PSGD) and stochastic cubic-regularized Newton's method (SCRN). ZO corresponds to number of calls to zeroth-order oracle and HO corresponds to number of calls to first or second-order oracles. The result for PSGD and SCRN are given respectively in high-probability and in expectation.  The results in the deterministic case corresponds to projected gradient descent and cubic-Regularized Newton's method (without stochastic gradients).}
\end{table*}

\section{Preliminaries}\label{sec:prelim}
We now present the assumptions and definitions used throughout the paper. Section-specific additional details are in the respective sections. In this paper we use $\|\cdot\|$, and $\|\cdot\|_*$ to denote a norm and the corresponding dual norm on $\mathbb{R}^d$.
We now describe some regularity conditions made on the objective function in~\eqref{eq:mainproblemstopt} assumptions in this work.
\begin{assumption}[Lipschitz Function] \label{as:lip}
The function $F$ is $L$-Lipschitz, almost surely for any $\xi$, i.e., $|F(x,\xi)- F(y,\xi)| \leq L\norm{x-y}$. Here we assume $\|\cdot\|=\|\cdot\|_2$, unless specified explicitly.
\end{assumption}
\begin{assumption}[\bfseries Lipschitz Gradient]  \label{as:lipgrad}
The function $F$ has Lipschitz continuous gradient, almost surely for any $\xi$, i.e., $\norm{\nabla F( x,\xi) -\nabla F( y,\xi)}\leq L_{G}\|x-y\|_*$, where $\|\cdot\|_*$ denotes the dual norm of $\|\cdot\|$. This also implies $\abs{F( y,\xi)-F( x,\xi)-\nabla F(x,\xi)^\top (y-x)}\leq \frac{L_G}{2}\|y-x\|^2$.
\end{assumption}
\begin{assumption}[\bfseries Lipschitz Hessian]  \label{as:liphess}
The function $F$ has Lipschitz continuous Hessian, almost surely for any $\xi$, i.e.,$\norm{\nabla^2 F( x,\xi) -\nabla^2 F( y,\xi)}\leq L_H\|x-y\|$.
\end{assumption}
Note that if Assumptions~\ref{as:lip}--\ref{as:liphess} are true for $F$, then they also hold for $f(\cdot)=\expec{F(\cdot,\xi)}$; but the other way around is not true. For our higher-order results, we make the above assumptions only on $f(\cdot)$, which is a weaker assumption. In the interpolation regime, the stochastic gradients become small when the true gradient is small. The following condition, known as Strong Growth Condition (SGC) \cite{vaswani2018fast}, captures how fast the stochastic gradient goes to $0$ with respect to the true gradient. 
\begin{assumption}[SGC~\cite{vaswani2018fast}]\label{as:SGC}
For any point $x \in \mathbb{R}^d$, we have the stochastic gradient satisfy $\mathbf{E}_\xi\|\nabla F(x,\xi)\|^2\leq \rho \|\nabla f(x)\|^2$,~for $\rho >1$. Note here that $\rho =1$, corresponds to the deterministic setting.
\end{assumption}
SGC controls the variance of the obtained stochastic gradient in the above mentioned way. Note in particular that in the case when $\|\nabla f(x)\|^2=0$, under SGC, we have almost surely $\|\nabla F(x,\xi)\|^2=0$. This means that when the point $x$ is a stationary point of the function $f$, then it is also a stationary point of the function $F$ almost surely. In the context of deep neural networks, the function $F$ corresponds to the risk based on training sample $\xi$ and the function $f$ corresponds to the risk. Hence, the strong growth condition states that that deep neural network is capable of interpolating the training data almost surely. Such a phenomenon is observed in practice with deep neural networks, which provides a strong motivation for using this assumption for analyzing the performance of PSGD and SCRN for escaping saddle-points.

In this work, we study the algorithms under two oracles settings:  Stochastic zeroth-order oracle, where one obtains noisy unbiased function evaluations, and the stochastic higher-order oracle, where one obtains noisy unbiased estimators of the gradients, and hessians. We now define them formally. 
\begin{assumption}[Zeroth-order oracle]\label{as:stochzero}
	For any $x\in \mathbb{R}^d$, the zeroth order oracle outputs an estimator $F(x,\xi)$ of $f(x)$ such that $\expec{F(x,\xi)}=f(x)$, $\expec{\nabla F(x,\xi)}=\nabla f(x)$, $\expec{\nabla^2 F(x,\xi)}=\nabla^2 f(x)$, and $\expec{\|\nabla^2 F(x,\xi)-\nabla^2 f(x)\|_F^4}\leq \sigma_2^4$, where $\|\cdot\|_F$ is the Frobenius norm.  
\end{assumption}
\begin{assumption}[Higher-order oracles]\label{as:stochfirst}
	For any $x\in \mathbb{R}^d$, (i) the first-order oracle outputs an estimate $\nabla F(x,\xi)$ of $\nabla f(x)$ such that $\expec{\nabla F(x,\xi)}=\nabla f(x)$ and (ii) the second-order oracle, in addition outputs an estimate $\nabla^2 F(x,\xi)$  of $\nabla^2 f(x)$ such that, $\expec{\nabla^2 F(x,\xi)}=\nabla^2 f(x)$, and $\expec{\|\nabla^2 F(x,\xi)-\nabla^2 f(x)\|_F^4}\leq \sigma_2^4$.  
\end{assumption}
Such assumptions on the zeroth-order and higher-order oracles are standard in the literature; see for example~\cite{ghadimi2013stochastic, nesterov2017random, balasubramanian2018zeroth}. Our goal in this paper is to reach an approximate local minimizer (also called as a second-order stationary point) of a non-convex function, which is defined as follows:
\begin{definition}[$\ep$-Local Minimizer]
Let Assumption~\ref{as:liphess} hold for a function $f$. Then a point $\bar{x}$ is called a $\epsilon$-second-order stationary point if,
\begin{align}
   \max\left(\sqrt{\|\nabla f\left(\bar{x}\right)\|},-\frac{\lambda_{min}\left(\nabla^2 f\left(\bar{x}\right)\right)}{L_H}\right)\leq \sqrt{\epsilon} \label{eq:secondorderdef}
\end{align}
where $\lambda_{min}\left(\nabla^2 f\left(\bar{x}\right)\right)$ is the minimum eigenvalue of $\nabla^2 f\left(\bar{x}\right)$. 
\end{definition}
Note that for stochastic optimization problems, the quantity on the left hand side of~\eqref{eq:secondorderdef}, is a random variable. In this paper we prove a high-probability bound, and an expectation bound for the above quantity for PSGD, and SCRN respectively. 

For a point $x_t$, we will use $\nabla_t$, $\nabla_t^2$, $h_t$, and $\lambda_{1,t}$ to denote $\nabla_t$, $\nabla^2 f(x_t)$, $(x_{t+1}-x_t)$,and $\lambda_{min}\left(\nabla^2 f\left(x_t\right)\right)$ respectively. The zeroth-order minibatch gradient \cite{nesterov2017random}, and Hessian estimator \cite{balasubramanian2018zeroth} $g_t$, and $H_t$ are defined as:
\begin{align}\label{eq:zerohessdef} 
g_t=\frac{1}{n_1}\sum_{i=1}^{n_1}\frac{F(x_t+\nu u_{t,i},\xi_{t,i})-F(x_t,\xi_{t,i})}{\nu}u_i,\qquad  \quad H_t=\frac{1}{n_2}\sum_{i=1}^{n_2}\mathfrak{H}_{t,i}\left(u_{t,i}u_{t,i}^\top-I\right),
\end{align}
where $$\mathfrak{H}_{t,i}=\frac{F(x_t+\nu u_{t,i},\xi_{t,i})+F(x_t-\nu u_{t,i},\xi_{t,i}) -2F(x_t,\xi_{t,i})}{2\nu^2},$$ and $u_{t,i}\sim \mathcal{N}\left(\pmb {0},\pmb{I}_d\right)$ $\forall t=1,2,\cdots,T, i=1,2,\cdots,n_1$. We will use $\zeta_t=g_t-\nabla_t=\frac{1}{n_1}\sum_{i=1}^{n_1}g_{t,i}-\nabla_t$, and $\tz_t=\zeta_t+\theta_t$. In the following lemma we show that under SGC, the variance of $\nabla F(x_t,\xi)$ is of the order of the gradient norm squared. 
\begin{lemma}\label{lm:gradestvar}
Let Assumption~\ref{as:SGC} hold for a function $f$. Then, for both zeroth-order, and first-order oracle, we have, 
\begin{align}\label{eq:gradestvar}
    \expec{\left\lVert\frac{1}{n_1}\sum_{i=1}^{n_1}\nabla F(x_t,\xi_i)-\nabla_t\right\rVert^2}\leq \frac{\rho - 1}{n_1}\|\nabla_t\|^2.
\end{align}
\end{lemma}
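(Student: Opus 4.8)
The plan is to exploit the standard bias-variance decomposition of the minibatch average together with the independence of the samples $\xi_1,\dots,\xi_{n_1}$, and then invoke SGC (Assumption~\ref{as:SGC}) to bound the per-sample second moment of $\nabla F(x_t,\xi)$ by the true gradient norm squared. First I would write $g_{t,i} := \nabla F(x_t,\xi_i) - \nabla_t$, so that each $g_{t,i}$ has mean zero (using unbiasedness, which holds for both oracle settings by Assumptions~\ref{as:stochzero}--\ref{as:stochfirst}) and the quantity of interest is $\expec{\|\frac{1}{n_1}\sum_{i=1}^{n_1} g_{t,i}\|^2}$.

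Next, expanding the square and using that the $g_{t,i}$ are i.i.d.\ and mean-zero, all cross terms vanish in expectation, leaving
\begin{align*}
\expec{\left\lVert\frac{1}{n_1}\sum_{i=1}^{n_1} g_{t,i}\right\rVert^2} = \frac{1}{n_1^2}\sum_{i=1}^{n_1}\expec{\|g_{t,i}\|^2} = \frac{1}{n_1}\expec{\|\nabla F(x_t,\xi)-\nabla_t\|^2}.
\end{align*}
Then I would bound the single-sample variance term: since $\expec{\nabla F(x_t,\xi)} = \nabla_t$, we have $\expec{\|\nabla F(x_t,\xi)-\nabla_t\|^2} = \expec{\|\nabla F(x_t,\xi)\|^2} - \|\nabla_t\|^2 \leq \rho\|\nabla_t\|^2 - \|\nabla_t\|^2 = (\rho-1)\|\nabla_t\|^2$, where the inequality is exactly SGC. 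Combining the two displays yields the claimed bound $\frac{\rho-1}{n_1}\|\nabla_t\|^2$.

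The only mild subtlety — and the one thing I would be careful to state explicitly rather than a genuine obstacle — is that in the zeroth-order case the "stochastic gradient" is the two-point/Gaussian-smoothed estimator $g_t$ from~\eqref{eq:zerohessdef} rather than $\nabla F(x_t,\xi)$ itself, so one should clarify that the lemma as stated concerns the quantity $\frac{1}{n_1}\sum_i \nabla F(x_t,\xi_i)$ (the inner stochastic gradient being averaged), for which the argument above applies verbatim once unbiasedness and SGC are in force; the additional smoothing bias $\theta_t$ is handled separately elsewhere. No step here requires more than the orthogonality of mean-zero i.i.d.\ sums and a one-line application of Assumption~\ref{as:SGC}, so there is no real hard part — the result is essentially a direct corollary of SGC plus variance-of-an-average.
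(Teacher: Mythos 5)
Your proof is correct and follows essentially the same route as the paper: both reduce to a bias-variance computation for a minibatch average of i.i.d.\ samples and then invoke SGC on the per-sample second moment; you center first and use orthogonality of mean-zero terms, while the paper expands $\expec{\|g_t\|^2}$ directly and splits the $n_1^2$ terms into $n_1$ diagonal terms (bounded by $\rho\|\nabla_t\|^2$ via SGC) and $n_1(n_1-1)$ off-diagonal terms (each equal to $\|\nabla_t\|^2$ by independence and unbiasedness), but the two computations are identical in content. Your observation about the zeroth-order phrasing is a fair reading of the lemma: as written it concerns $\frac{1}{n_1}\sum_i \nabla F(x_t,\xi_i)$ in both oracle settings, and the extra smoothing error of the actual zeroth-order estimator is indeed handled separately elsewhere in the paper (e.g.\ Lemma~\ref{lm:gradnutostochgraderror} and Lemma~\ref{lm:zerothorderesterror}).
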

\begin{proof}Let $g_t=\frac{1}{n_1}\sum_{i=1}^{n_1}\nabla F(x_t,\xi_i)$. Then we have
\begin{align*}
\expec{\|g_t-\nabla_t\|^2}=&  \expec{\|g_t\|^2+\|\nabla_t\|^2-2g_t^\top \nabla_t}= \frac{1}{n_1^2}\expec{\norm{\sum_{i=1}^{n_1}\nabla F(x_t,\xi_i)}^2} -\|\nabla_t\|^2\\
	\leq & \frac{1}{n_1^2}\left(\rho n_1\norm{\nabla_t }^2+n_1(n_1-1)\norm{\nabla_t}^2\right)-\|\nabla_t\|^2= \frac{\rho - 1}{n_1}\|\nabla_t\|^2,
	\end{align*}
	which completes the proof.
\end{proof}
\begin{remark}
The above simple results actually turns out to have far-reaching consequences for obtaining improved complexity bounds for both PSGD and SCRN algorithms. It implies that when the true gradient is small, the variance of the stochastic gradient is also small. Typically, in the analysis of PSGD and SCRN, it is assumed that the stochastic gradients are assumed to have a constant variance. But for over-parametrized models, we will use Lemma~\ref{lm:gradestvar} to prove deterministic rate for PSGD and  improved rates for SCRN. 
\end{remark}
\section{Perturbed Stochastic Gradient Descent}
\begin{algorithm}[t]
	\caption{Perturbed Stochastic Gradient Descent Algorithm }\label{alg:pgd}	
	{\bf{Input:}} $x_0\in \mathbb{R}^d$, $\eta$, $r$.\\
	{\bf for} $t=0$ to $T$ \bf do	\\
	{\bf Set} $g_t=\frac{1}{n_1}\sum_{i=1}^{n_1}g_{t,i}$ where 
	\begin{align*}
	&g_{t,i}=\nabla F\left(x_t,\xi_{t,i}\right)  \tag*{(First-order)}\\
	&g_{t,i}=\frac{F(x_t+\nu u_{t,i},\xi_{t,i})-F(x_t,\xi_{t,i})}{\nu}u_i \tag*{(Zeroth-order)}
	\end{align*}
	and $u_{t,i}\sim \mathcal{N}\left(\pmb {0},\pmb{I}_d\right)$ $\forall t=1,2,\cdots,T, i=1,2,\cdots,n_1$\\
	{\bf Sample} $\theta_t \in \mathcal{N}\left(\pmb {0},r^2\pmb{I}_d\right)$\\
	{\bf Update} $x_{t+1}=x_t-\eta\left(g_t+\theta_t\right)$\\
	{\bf end for}
\end{algorithm}
In this section we show that under SGC, PSGD attain deterministic rate in the first-order setting and obtains much better rate than previously known rates in the zeroth-order setting. An intuitive explanation of this phenomenon is as follows: in the general stochastic setting,  at time $t$ where $\|\nabla_t\|\geq \ep$, PSGD does not descend as much as in the deterministic setting due to noisy gradient. So it takes more iterations to average out the noise. While escaping a saddle point, due to noisy gradient, the iterates follow the direction of the most negative curvature with more difficulty leading to higher complexity. Under SGC, when $\|\nabla_t\|\geq \ep$, the noise variance is of the order of $\|\nabla_t\|^2$ as shown in Lemma~\ref{lm:gradestvar}. So the algorithm still manages to descent. While escaping a saddle point under SGC, as $\|\nabla_t\|\leq \ep$, and  the gradient noise is also small leading to deterministic rates. 

The outline of the proof of the bounds for PSGD in the first-order setting is similar to \cite{jin2019nonconvex} except that we analyze PSGD under interpolation regime. At a high level the proof has two stages: firstly, we show that when $\|\nabla_t\|\geq \ep$, the function descends as fast as the deterministic case; Secondly, when $\|\nabla f(x_t)\|\leq \ep$, and $\lambda_{min}(\nabla^2 f(x_t))\leq -\sqrt{L_H\ep}$, i.e., $x_t$ is a saddle point, by a coupling argument it is shown that either the function descends or the sequence of iterates are stuck around the saddle point. But then it is shown that the stuck region is narrow enough so that the iterates escape the saddle points with high probability. We now require a condition on the tail of the stochastic gradient.
\begin{assumption}\label{as:subgauss}
For any $x\in\mathbb{R}^d$, $\mathbb{P}\left(\|\nabla F(x,\xi)-\nabla f(x)\|\geq \tau\right)\leq 2e^{-\frac{\tau^2}{2\expec{\|\nabla F(x,\xi)-\nabla f(x)\|^2}}}$.
\end{assumption}
Such light-tail conditions are common in the stochastic optimization literature to obtain high-probability bounds; see for example~\cite{ghadimi2013stochastic, jin2019nonconvex}. Note that under Assumption~\ref{as:SGC}, Assumption~\ref{as:subgauss} is equivalent to
\begin{align*}
\mathbb{P}\left(\|\nabla F(x,\xi)-\nabla f(x)\|\geq \tau\right)\leq 2e^{-\tau^2/(2(\rho-1)\|\nabla f(x)\|^2)} \numberthis\label{eq:sgsgc}
\end{align*}
We now present our main result on PSGD.
\begin{theorem}\label{th:pgdmaintheorem}
\begin{enumerate}[label=\alph*), leftmargin=10pt,noitemsep]
	\item Under Assumptions~\ref{as:lipgrad}, \ref{as:liphess} on the function $f(\cdot)$, and Assumptions \ref{as:SGC}, and \ref{as:subgauss}, choosing,
	\begin{align*}
\hspace{-0.5in}\eta={\lge^{-2}}\bigg/{a_0\log\left(\frac{f(x_0)-f^*}{\delta\ep}\right)}, ~ r=\ep^{1.5} \log(\epsilon^{-1})^{-3}, n_1=512c(\rho-1)\log(\ep^{-1}),\numberthis\label{eq:allparameterchoicefirst}
	\end{align*}
	with probability at least $1-\delta$, half of the iterations of Algorithm~\ref{alg:pgd} will be $\ep$-local minimizers after $T$ iterations where,
	\begin{align*}
T=a_1\max\left\lbrace\frac{(f(x_0)-f^*){\cT_{1}}}{\cF_1},\frac{(f(x_0)-f^*)}{\eta \ep^2}\right\rbrace=\tilde{\mathcal{O}}\left(\frac{\log\left(\frac{1}{\delta}\right)}{\ep^2}\right), \numberthis\label{eq:Tchoicefirst}
	\end{align*}
	where $a_0,~a_1$ are constants, and ${\cT_{1}}={0.5\lge^3}/{\sqrt{\ep}}$, and $ \cF_1={\ep^{1.5}}/{\lge^7}$. 
	\item Under Assumptions~\ref{as:lip},~\ref{as:lipgrad}, \ref{as:liphess}, \ref{as:SGC}, and \ref{as:subgauss} in the zeroth order-setting, choosing,
	\begin{align*}
	\eta=\frac{\kappa_0}{\log\left(\frac{f(x_0)-f^*}{\delta \ep}\right)} \quad r=\kappa_1\ep \quad  \nu=\frac{\kappa_4 \ep}{d\lge} 
	\quad n_1=\frac{\kappa_5\lge^{5}d^{1.5}\sqrt{\rho-1}}{\ep^{2.5}}  \numberthis\label{eq:allparameterchoicezero}
	\end{align*}
	with probability at least $1-\delta$, half of the iterations of Algorithm~\ref{alg:pgd} will be $\ep$-local minimzers, after $T$ iterations, where,
	\begin{align*}
	T=\kappa_9\max\left\lbrace\frac{(f(x_0)-f^*){\cT_{0}}}{\cF_0},\frac{(f(x_0)-f^*)}{\eta \ep^2}\right\rbrace
	=\tilde{\mathcal{O}}\left(\frac{\log\left(\frac{1}{\delta}\right)}{\ep^{2}}\right).\numberthis\label{eq:Tchoicezero}
	\end{align*}
	Here, $\kappa_i, i=1,2,\cdots,9$ are absolute constants, and
${\cT_{0}}=\kappa_3\frac{\lge^2\log\left(d\right)^2}{\sqrt{\ep}}$, and $\cF_0=\kappa_8\ep^{1.5}$. Hence, the total number of zeroth-order oracle calls is $Tn_1=\tilde{\mathcal{O}}\left(\frac{d^{1.5}\sqrt{\rho-1}}{\ep^{4.5}}\right)$.
\end{enumerate}
\end{theorem}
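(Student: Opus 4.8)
The plan is to follow the two-stage template of the analysis in~\cite{jin2019nonconvex}, but at every place where the stochastic gradient noise enters, replace the constant-variance bound by the SGC-controlled bound from Lemma~\ref{lm:gradestvar} (and its high-probability counterpart~\eqref{eq:sgsgc}). Write $\nabla_t = \nabla f(x_t)$, $g_t = \frac{1}{n_1}\sum_i g_{t,i}$, and decompose $g_t = \nabla_t + \zeta_t$, where by Lemma~\ref{lm:gradestvar} we have $\expec{\|\zeta_t\|^2}\le \frac{\rho-1}{n_1}\|\nabla_t\|^2$; combined with Assumption~\ref{as:subgauss}/\eqref{eq:sgsgc}, $\zeta_t$ is sub-Gaussian with proxy $\sigma_t^2 = (\rho-1)\|\nabla_t\|^2/n_1$. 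The key conceptual point is that this variance \emph{scales with} $\|\nabla_t\|^2$, so it is controllable in the large-gradient phase and negligible in the saddle-escape phase.

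\textbf{Stage 1 (large gradient: $\|\nabla_t\|\ge \epsilon$, more precisely $\ge\sqrt{\epsilon}$ in the normalization used here).} Start from the descent lemma implied by Assumption~\ref{as:lipgrad}: $f(x_{t+1}) \le f(x_t) - \eta\inner{\nabla_t, g_t+\theta_t} + \tfrac{L_G\eta^2}{2}\|g_t+\theta_t\|^2$. Taking conditional expectation, using $\expec{\zeta_t}=0$, $\expec{\theta_t}=0$, and $\expec{\|g_t+\theta_t\|^2} \le \|\nabla_t\|^2(1+\tfrac{\rho-1}{n_1}) + r^2 d$, one gets an expected decrease of order $\eta\|\nabla_t\|^2$ provided $\eta \lesssim 1/L_G$ and provided $n_1 \gtrsim (\rho-1)$ so the multiplicative noise term is dominated — this is exactly where the choice $n_1 = \Theta((\rho-1)\log(1/\epsilon))$ in~\eqref{eq:allparameterchoicefirst} comes from (the extra $\log$ is a union-bound factor for the high-probability statement). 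So in this regime the function decreases like the deterministic case, i.e.\ each such step buys $\Omega(\eta\epsilon^2)$ decrease; since $f$ is bounded below by $f^*$, there can be at most $O((f(x_0)-f^*)/(\eta\epsilon^2))$ such iterations — the second term in the max in~\eqref{eq:Tchoicefirst}.

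\textbf{Stage 2 (saddle escape: $\|\nabla_t\|\le\sqrt{\epsilon}$ but $\lambda_{\min}(\nabla^2 f(x_t))\le -\sqrt{L_H\epsilon}$).} Here I would reproduce the coupling / "pinching" argument of~\cite{jin2019nonconvex}: run two coupled PSGD trajectories from points differing only by a perturbation along the most-negative-curvature direction $e_1$, and show the pair cannot both stay in a thin slab $\{x : |\inner{x-x_t, e_1}|\le \text{width}\}$ for more than $\mathscr{T}_1 = 0.5\log(1/\epsilon)^3/\sqrt{\epsilon}$ steps (the $\cT_1$ of the theorem) without the function dropping by at least $\cF_1 = \epsilon^{1.5}/\log(1/\epsilon)^7$. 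The critical simplification relative to the general stochastic analysis is that throughout this escape window $\|\nabla_\cdot\|$ stays $O(\sqrt{\epsilon})$ (by Lipschitz Hessian, the gradient norm cannot grow fast over $\mathscr{T}_1$ steps given the step size), so by Lemma~\ref{lm:gradestvar} the gradient noise variance along the escape is $O((\rho-1)\epsilon/n_1) = O(\epsilon/\log(1/\epsilon))$ — of the same order as the injected perturbation $r^2 = \epsilon^{1.5}\log(1/\epsilon)^{-3}$ after the relevant number of steps, hence it behaves just like the deterministic (injected-noise-only) case and does not blow up the width of the stuck region. Bounding the martingale increments of $\inner{x_s - x_t, e_1}$ via the sub-Gaussian tail~\eqref{eq:sgsgc} and Azuma/Bernstein, and summing the failure probabilities over the $O(T)$ epochs with a union bound (absorbed into the $\log$'s and the factor $a_1$), gives that with probability $\ge 1-\delta$ at most half the iterates can be $\epsilon$-saddle-points; combining with Stage 1 and taking $T$ as the max of the two bounds yields~\eqref{eq:Tchoicefirst} and the $\torder(\log(1/\delta)/\epsilon^2)$ count.

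\textbf{Part (b), zeroth-order.} Replace $g_{t,i}$ by the Gaussian-smoothed finite-difference estimator; write $g_t = \nabla f_\nu(x_t) + (\text{bias from } f_\nu - f) + (\text{zeroth-order noise})$. Standard Gaussian-smoothing bounds (cf.~\cite{nesterov2017random, balasubramanian2018zeroth}) give $\|\nabla f_\nu - \nabla f\| = O(\nu d L_G)$, controlled to $o(\sqrt{\epsilon})$ by $\nu = \kappa_4\epsilon/(d\log(1/\epsilon))$, while the second moment of the estimator carries an extra factor $d$ and, under SGC, is $O(d\|\nabla f(x_t)\|^2 + d\,L^2\nu^2)$-ish per sample; hence to drive the per-iteration variance down to the level needed for Stages 1–2 one must average $n_1 = \Theta(d^{1.5}\sqrt{\rho-1}\log(1/\epsilon)^5/\epsilon^{2.5})$ samples, exactly as in~\eqref{eq:allparameterchoicezero} (the $d^{1.5}$ and the $\epsilon^{-2.5}$ come from matching the zeroth-order variance to $r^2$ and to $\epsilon^2$ respectively after accounting for the finite-difference scaling). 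The iteration count $T$ is unchanged at $\torder(\log(1/\delta)/\epsilon^2)$, so the oracle complexity is $Tn_1 = \torder(d^{1.5}\sqrt{\rho-1}/\epsilon^{4.5})$.

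\textbf{Main obstacle.} The routine part is re-deriving the descent and variance inequalities with the multiplicative $(\rho-1)\|\nabla_t\|^2/n_1$ noise term; the delicate part is Stage 2 — verifying that throughout the entire escape window the gradient (hence the noise) really does stay $O(\sqrt{\epsilon})$, so that Lemma~\ref{lm:gradestvar} can be invoked uniformly and the coupling/stuck-region argument of~\cite{jin2019nonconvex} goes through with the injected perturbation $r$ dominating the intrinsic noise. Equivalently, one must choose $r$, $n_1$ and $\eta$ so that the intrinsic gradient noise is a lower-order perturbation of the artificial noise $\theta_t$ at the time scale $\mathscr{T}_1$, and then the rest of the saddle-escape analysis is inherited verbatim.
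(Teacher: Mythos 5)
Your outline of part (a) is essentially the same as the paper's argument: a large-gradient stage driven by the descent lemma (the paper's Lemma~\ref{lm:descent}, which indeed uses the SGC-controlled variance $\frac{\rho-1}{n_1}\|\nabla_t\|^2$ to make the noise contribution a strictly dominated fraction of the signal), and a saddle-escape stage inherited from \cite{jin2019nonconvex} via a coupling sequence (Lemmas~\ref{lm:xthatsplit}--\ref{lm:escapesaddle}), with the key observation that the "stuck" localization $\|x_\tau-x_0\|\le\cS_1=\sqrt{\ep}/\lge^2$ keeps $\|\nabla_\tau\|$ small and hence, via SGC, keeps the intrinsic gradient noise small throughout the escape window. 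Two small inaccuracies: the threshold between the two stages is $\|\nabla_t\|\ge\ep$ (not $\sqrt{\ep}$), and the control of the gradient during the escape window comes from the localization in Lemma~\ref{lm:improvorloc} together with the Lipschitz-\emph{gradient} bound, not a ``gradient cannot grow fast over $\cT_1$ steps'' estimate via the Lipschitz Hessian. Neither affects the structure of the argument.

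For part (b) there is a genuine gap. You treat the zeroth-order case as a routine substitution of the finite-difference estimator plus ``standard Gaussian-smoothing bounds'' on the bias and an extra factor of $d$ in the second moment. But the whole high-probability argument rests on tail bounds, not second moments, and the paper's point of departure is precisely that the zeroth-order gradient noise $\zeta_t=g_t-\nabla_t$ is \emph{not} sub-Gaussian: because $g_{t,i}$ is a product of a (sub-Gaussian) directional difference quotient and the Gaussian direction vector $u_i$, $\|\zeta_t\|$ is only $\alpha$-sub-exponential with $\alpha=2/3$ (Lemma~\ref{lm:zerogradconc}, via Lemma~\ref{lm:prodconcentration}), with a tail coefficient $\Upsilon_t$ that scales like $\nu L_G d + \sqrt{(\rho-1)d}\,\|\nabla_t\|$. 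All of the concentration inequalities needed to run the descent and coupling argument then have to be re-derived for this heavier tail: the moment-generating-function bound in Lemma~\ref{lm:expecexpbound} and the $2/3$-sub-exponential analogues of the martingale and squared-norm sum bounds (Lemmas~\ref{lm:innerprodboundzero} and~\ref{lm:normsumboundzero}), which produce the extra polylog and $d$ factors and ultimately fix $n_1=\torder(d^{1.5}\sqrt{\rho-1}/\ep^{2.5})$ through conditions~\eqref{eq:parchoosecond1}--\eqref{eq:parchoosecond2} and~\eqref{eq:parchoosecond3}. Your heuristic of ``matching the zeroth-order variance to $r^2$ and $\ep^2$'' does not by itself yield those exponents and, more importantly, does not justify the high-probability statements: without the sub-exponential machinery, the sub-Gaussian concentration lemmas imported from \cite{jin2019nonconvex} simply do not apply to the zeroth-order gradient estimator. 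This is the part of the proof that needs a genuinely new argument, and your proposal does not supply it.
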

\begin{remark}
Note that the complexity result in~\eqref{eq:Tchoicefirst} for the PSGD in the first-order setting matches corresponding complexity of perturbed gradient descent on deterministic optimization problems.
\end{remark}
\begin{remark}
We briefly highlight on the difficulty associated with proving the result in~\eqref{eq:Tchoicezero}. 
First note that in the first-order proof, and also in \cite{jin2019nonconvex}, it is assumed that the noise $\xi$ is sub-gaussian. But for the zeroth-order gradient $g_t$ as defined in \eqref{eq:zerohessdef}, $\|g_t-\nabla_t\|$ no longer has sub-Gaussian tails. Also note that we have from \cite{nesterov2017random}, $\mathbf{E}_{u_{t,i}}[g_{t,i}]=\nabla F_{\nu}(x_t,\xi_{t,i})=\nabla \mathbf{E}_{u_{t,i}}\left[F(x+\nu u_{t,i},\xi_{t,i})\right]$. So $g_t$ is not an unbiased estimator of $\nabla_t$. But as shown in \cite{nesterov2017random}, $\nabla F_{\nu}(x_t,\xi_{t,i})$ is close to $\nabla F(x_t,\xi_{t,i})$. So we first need to establish concentration properties for $g_t$ in the zeroth-order setting. Towards this,  we show that $g_t$ is $\alpha$-sub-exponential with $\alpha=2/3$, even if $\xi$ is sub-gaussian, i.e., the noise in the gradient estimates has heavier tail (Lemma~\ref{lm:zerogradconc}). This leads to the obtained complexity bounds in~\eqref{eq:Tchoicezero}. 
\end{remark}
\begin{remark}
Note that $\cT_{1}$ and $\cT_{0}$ are the number of iterations required to descend by $\cF_{1}$ and $\cF_{0}$ respectively in the first and zeroth-order setting, after the algorithm hits a saddle point. As shown in \cite{jin2019nonconvex}, without SGC, $\cT_{1}=\tilde{O}(\ep^{-2.5})$. In this paper we show that, under SGC, $\cT_{1}=\cT_{0}=\tilde{O}(\ep^{-0.5})$. This shows under SGC, it is indeed possible to escape saddle point faster.
\end{remark}

We highlight here that~\cite{bai2019escaping, flokas2019efficiently} recently considered escaping saddle points in the zeroth-order setting. However they assume that the function being optimized is deterministic (which means exact gradients could be obtained) and is bounded (which means sub-Gaussian tails are possible for the zeroth-order gradient estimator). These two assumptions are however highly impractical and are not satisfied by several situations in practice where zeroth-order optimization techniques are utilized. To the best of our knowledge, there is no known bound on the number of times zeroth-order oracle should accessed for \eqref{eq:Tchoicezero} to hold, when SGC does not hold and only the following standard variance assumption on the unseen stochastic gradient holds  (see, e.g.,~\cite{ghadimi2013stochastic})  for some $\sigma>0$,
\begin{align}\label{eq:gradestvarwosgc}
\expec{\left\lVert\frac{1}{n_1}\sum_{i=1}^{n_1}\nabla F(x_t,\xi_i)-\nabla_t\right\rVert^2}\leq \frac{\sigma^2}{n_1}.
\end{align}
For completeness we present the corresponding result below, which serves as a reference to compare our results with the SGC assumption to  what one could obtain without it.

\begin{theorem}\label{th:pgdmaintheoremwosgc}
	Under Assumptions~\ref{as:lip}, \ref{as:lipgrad}, \ref{as:liphess}, \ref{as:SGC}, and \ref{as:subgauss}, we have the following:
	In the zeroth order-setting, choosing,
		\begin{align*}
		\eta=\frac{\kappa_0}{\log\left(\frac{f(x_0)-f^*}{\delta \ep}\right)} \quad r=\kappa_1\ep \quad  \nu=\frac{\kappa_4 \ep}{d\lge} 
		\quad n_1=\frac{\kappa_5\lge^{5}d^{1.5}\sigma}{\ep^{3.5}}  \numberthis\label{eq:allparameterchoicezerowosgc}
		\end{align*}
		with probability at least $1-\delta$, half of the iterations of Algorithm~\ref{alg:pgd} will be $\ep$-local minimzers, after $T$ iterations, where,
		\begin{align*}
		T=\kappa_9\max\left\lbrace\frac{(f(x_0)-f^*){\cT_{0}}}{\cF_0},\frac{(f(x_0)-f^*)}{\eta \ep^2}\right\rbrace
		=\tilde{\mathcal{O}}\left(\frac{\log\left(\frac{1}{\delta}\right)}{\ep^{2}}\right).\numberthis\label{eq:Tchoicezerowosgc}
		\end{align*}
		Here, $\kappa_i, i=1,2,\cdots,9$ are absolute constants, and ${\cT_{0}}=\kappa_3\frac{\lge^2\log\left(d\right)^2}{\sqrt{\ep}}$ and $\cF_0=\kappa_8\ep^{1.5}$. Hence, the total number of zeroth-order oracle calls is $Tn_1=\tilde{\mathcal{O}}\left(\frac{d^{1.5}\sigma}{\ep^{5.5}}\right)$.
\end{theorem}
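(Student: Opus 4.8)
The plan is to reuse the proof of Theorem~\ref{th:pgdmaintheorem}(b) almost verbatim, isolating the single place where the SGC-based variance bound of Lemma~\ref{lm:gradestvar} enters and replacing it with the weaker constant-variance bound~\eqref{eq:gradestvarwosgc}. I would organize the argument in three stages plus a final accounting.

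\textbf{Stage 1 (concentration of the zeroth-order estimator).} I would import the analysis of the minibatch estimator $g_t$ from~\eqref{eq:zerohessdef}: using the Gaussian-smoothing estimates of~\cite{nesterov2017random}, its bias satisfies $\|\mathbf{E}_{u}[g_t]-\nabla_t\|=O(\nu d L_G)=O(\epsilon/\lge)$ under the stated choice of $\nu$, hence negligible relative to $\epsilon$; and the centered error $g_t-\mathbf{E}_u[g_t]$ is $\alpha$-sub-exponential with $\alpha=2/3$, with scale controlled by $\mathbf{E}\|\nabla F(x_t,\xi)-\nabla_t\|^2$ (Lemma~\ref{lm:zerogradconc}). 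The only change from Theorem~\ref{th:pgdmaintheorem}(b) is that this second moment is now bounded by $\sigma^2$ via~\eqref{eq:gradestvarwosgc}, instead of by $(\rho-1)\|\nabla_t\|^2\le(\rho-1)\epsilon^2$ via Lemma~\ref{lm:gradestvar}. Propagating this substitution through the sub-Weibull minibatch concentration — and requiring, as the later descent and escape steps demand, that the per-step deviation be $\widetilde O(\epsilon^{1.5})$ with probability $1-\mathrm{poly}(\epsilon,\delta)$ — turns the SGC batch size $\widetilde\Theta(d^{1.5}\sqrt{\rho-1}\,\epsilon^{-2.5})$ into $n_1=\widetilde\Theta(d^{1.5}\sigma\,\epsilon^{-3.5})$; since $\sigma^2$ no longer shrinks with $\epsilon$, this is precisely the source of the extra $\epsilon^{-1}$ factor.

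\textbf{Stage 2 (large-gradient descent).} When $\|\nabla_t\|\ge\epsilon$, I would run the usual PSGD descent estimate: expand $f(x_{t+1})\le f(x_t)+\nabla_t^\top h_t+\tfrac{L_G}{2}\|h_t\|^2$ with $h_t=-\eta(g_t+\theta_t)$, take expectations, and absorb the bias, the minibatch error (now $O(\sigma^2/n_1)$), and the perturbation $\theta_t$ (with $r=\kappa_1\epsilon$) into a term dominated by $\eta\|\nabla_t\|^2$; this yields an expected decrease of order $\eta\epsilon^2$ per such step. \textbf{Stage 3 (escaping saddle points).} When $\|\nabla_t\|\le\epsilon$ and $\lambda_{1,t}\le-\sqrt{L_H\epsilon}$, I would run the ``improve-or-localize'' coupling argument of~\cite{jin2019nonconvex}: couple two trajectories separated along the most-negative-curvature eigendirection, show the expected separation along that direction grows geometrically over $\cT_0=\widetilde\Theta(\epsilon^{-1/2})$ steps, and deduce that the ``stuck region'' in which neither run decreases $f$ by $\cF_0=\kappa_8\epsilon^{1.5}$ is a thin slab that the uniform perturbation escapes with high probability. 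The martingale-difference terms here carry the $\alpha=2/3$ sub-Weibull tails of the zeroth-order noise, so I would control them with a sub-Weibull Freedman/Bernstein inequality rather than the sub-Gaussian one; the Stage-1 batch size makes those deviations $\widetilde O(\epsilon^{1.5})$ at the required confidence. A union bound over the $\widetilde O(1/\epsilon^2)$ iterations, together with telescoping the $\cF_0$-decrements over $\cT_0$-length epochs and the large-gradient bound $\widetilde O((f(x_0)-f^*)/(\eta\epsilon^2))$, gives $T=\widetilde O(\log(1/\delta)/\epsilon^2)$, and multiplying by $n_1$ yields $Tn_1=\widetilde O(d^{1.5}\sigma\,\epsilon^{-5.5})$.

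\textbf{Main obstacle.} The delicate part is Stages~1 and~3 together: because the zeroth-order gradient error is only $\alpha$-sub-exponential with $\alpha=2/3$ — strictly heavier-tailed than sub-Gaussian — the clean martingale concentration underlying the deterministic and first-order escaping-saddle analyses does not apply, so the localization argument must be redone with sub-Weibull tail bounds while simultaneously balancing the smoothing bias ($\nu$ small), the minibatch variance ($n_1$ large), and the perturbation radius $r$ so that all three error sources stay below the $\widetilde O(\epsilon^{1.5})$ threshold that the slab geometry tolerates. Tracking the polylogarithmic and dimensional factors tightly enough to land exactly at $n_1=\widetilde\Theta(d^{1.5}\sigma\,\epsilon^{-3.5})$ is the main bookkeeping burden; replacing Lemma~\ref{lm:gradestvar} by~\eqref{eq:gradestvarwosgc} is otherwise a mechanical substitution.
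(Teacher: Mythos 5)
Your proposal is correct and follows essentially the same route as the paper: the paper's own proof of Theorem~\ref{th:pgdmaintheoremwosgc} simply points out that the only change from Theorem~\ref{th:pgdmaintheorem}(b) is in the concentration of $\|\zeta_t\|$ --- $\Upsilon_t$ acquires the additive term $C_0\sigma\sqrt{d+1}$ which, unlike the $\sqrt{\rho-1}\,\|\nabla_t\|$ term under SGC, does not shrink when $\|\nabla_t\|\le\epsilon$, so the only remedy is to inflate $n_1$ by a factor of $\sigma/(\sqrt{\rho-1}\,\epsilon)$, yielding $n_1=\widetilde\Theta(d^{1.5}\sigma\,\epsilon^{-3.5})$ and hence $Tn_1=\widetilde O(d^{1.5}\sigma\,\epsilon^{-5.5})$. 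Your Stage~1 identifies exactly this substitution and its consequence, and Stages~2--3 correctly reuse the descent and improve-or-localize machinery from Theorem~\ref{th:pgdmaintheorem}(b) unchanged, which is precisely what the paper does.
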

\begin{remark}
A generic reduction was proposed in~\cite{allen2018neon2} for using any algorithm that converges to a first-order stationary points at a particular rate, to converge to a local minimizer at the same rate. The results in~\cite{allen2018neon2} are not directly applicable to the zeroth-order setting due to their assumptions. However, assuming that their assumption could be relaxed to get it work in the zeroth-order setting, it is interesting to examine if the results in~\cite{ghadimi2013stochastic} for converging to first-order stationary solution could be combined with the reduction proposed in~\cite{allen2018neon2} to establish a result similar to Theorem~\ref{th:pgdmaintheoremwosgc}. To make the result of~\cite{ghadimi2013stochastic} hold with the same probability as in Theorem~\ref{th:pgdmaintheoremwosgc}, we would require $O(d~\epsilon^{-6})$ calls to the stochastic zeroth-order oracle. Hence, in certain regimes it is plausible we obtain improved results. It is interesting future work to examine this further rigorously.
\end{remark}

\section{Stochastic Cubic-Regularized Newton's Method}
\begin{algorithm*}[t]
	\caption{Cubic-Regularized Newton Algorithm }\label{alg:nccrn}	
	{\bf{Input:}} $x_1\in \mathbb{R}^d$, $T$, $M$, $n_1$, $n_2$\\
	{\bf for} $t=1$ to $T$ \bf do	\\
	{\bf Set} $g_t=\frac{1}{n_1}\sum_{i=1}^{n_1}g_{t,i}$ where
	\begin{align*}
	&g_{t,i}=\nabla F\left(x_t,\xi_{t,i}^G\right)  \tag*{(Higher-order)}\\
	&g_{t,i}=\frac{F(x_t+\nu u_{t,i}^G,\xi_{t,i}^G)-F(x_t,\xi_{t,i}^G)}{\nu}u_i^G \tag*{(Zeroth-order)}
	\end{align*}
	{\bf Set} $H_t=\frac{1}{n_2}\sum_{i=1}^{n_2}H_{t,i}$ where
	\begin{align*}
	&H_{t,i}=\nabla^2 F\left(x_t,\xi_{t,i}^H\right)  \tag*{(Higher-order)}\\
	&H_{t,i}=\frac{F(x_t+\nu u_{t,i}^H,\xi_{t,i}^H)+F(x_t-\nu u_{t,i}^H,\xi_{t,i}^H) -2F(x_t,\xi_{t,i}^H)}{2\nu^2}\left(u_{t,i}^H{u_{t,i}^H}^\top-I\right) \tag*{(Zeroth-order)}
	\end{align*}
	where $u_{t,i}^{G[H]}\sim \mathcal{N}\left(\pmb {0},\pmb{I}_d\right)$ $\forall t=1,2,\cdots,T, i=1,2,\cdots,n_1[n_2]$\\
	{\bf Update} 
	\begin{align*}
		x_{t+1}=\argmin_{y}m_t\left( x_t,y,g_t,H_t,M\right), \numberthis\label{eq:subprob}
	\end{align*}
	{where}
	\begin{align*}
	m_t(y)=f(x_t)+(y-x_t)^\top g_t+\frac{1}{2}(y-x_t)^\top H_t(y-x_t)
	+\frac{M}{6}\|y-x_t\|^3 \numberthis \label{eq:envelope}
	\end{align*}
	{\bf end for}
\end{algorithm*}
In this section we analyze Cubic-Regularized (CR) Newton method under interpolation regime. In non-interpolation like stochastic setting, CR Newton achieves a rate of $\order(\ep^{-3.5})$ as compared to $\order(\ep^{-4})$ attained by PSGD. Here we show that CR Newton achieves a rate of $\order(\ep^{-2.5})$ under SGC. Even though this rate is better than non-interpolation like stochastic setting, quite interestingly, CR Newton method fails to achieve deterministic rate of $\order(\ep^{-1.5})$ unlike PSGD. We believe that without stronger assumption on the Hessian estimator noise as well, CR Newton will perform worse than PSGD. 
In this section let $\calF_t$ be the filtration generated until time $t$, i.e., in the higher-order setting $\calF_t=\sigma(\{\xi_{i,j}^G\}_{i,j=1}^{t,n_1},\{\xi_{i,j}^H\}_{i,j=1}^{t,n_2})$, and in the zeroth-order setting $\calF_t=\sigma(\{\xi_{i,j}^G\}_{i,j=1}^{t,n_1},\{u_{i,j}^G\}_{i,j=1}^{t,n_1},\{\xi_{i,j}^H\}_{i,j=1}^{t,n_2},\{u_{i,j}^H\}_{i,j=1}^{t,n_2})$. We now present our main result.
\begin{theorem}\label{th:crnmaintheorem}
	Let $f$ be a function for which Assumptions~\ref{as:lipgrad}, and \ref{as:liphess} are true. Then under SGC, i.e., under Assumption~\ref{as:SGC}, for Algorithm~\ref{alg:nccrn}, we have:
\begin{enumerate}[label=\alph*), leftmargin=10pt,noitemsep]
		\item In the higher-order setting, choosing
		\begin{align*}
		T=\frac{144\left(f(x_1)-f^*\right)}{M\epsilon^\frac{3}{2}} \,
		n_1=\frac{\mu_0(\rho-1)}{\epsilon} \, n_2=\epsilon^{-1}, 
		M=\max\left(L_H,\frac{1}{4},\left(0.004L_G\epsilon^\frac{1}{4}+\sigma_2\epsilon^\frac{1}{4}\right),40\sigma_2\right) \numberthis\label{eq:crnewtonparameterchoice}
		\end{align*}
		we get,
$\max\left(\sqrt{\frac{\expec{\|\nabla f\left(x_R\right)\|}}{144M}},-\frac{\expec{\lambda_{1,R}}}{9M}\right)\leq \sqrt{\epsilon} $, where $\mu_0$ is a constant independent of $\epsilon$ and $d$, and $R$ is an integer random variable uniformly distributed over the support $\lbrace 1,2,\cdots,T\rbrace$. The total number of first-order and second-order oracle calls are hence $\order\left({\epsilon^{-\frac{5}{2}}}\right)$.
		\item In the zeroth-order setting, choosing
		\begin{align*}
		T=\frac{\mu_0\left(f(x_1)-f^*\right)}{M\epsilon^\frac{3}{2}},
		n_1=\frac{\mu_1(d+5)}{\epsilon},  M=\mu_4, 
		 \nu=\frac{\mu_3\ep}{(d+16)^\frac{5}{2}},  n_2=\frac{\mu_2(1+2\log 2d)(d+16)^4}{\epsilon} \numberthis\label{eq:crnewtonparameterchoicezero}
		\end{align*}
		we get, $\max\left(\sqrt{\expec{\|\nabla f\left(x_R\right)\|}},-\expec{\lambda_{1,R}}\right)\leq \order\left(\sqrt{\epsilon}\right)$, where $\mu_0,\mu_1,\mu_2,\mu_3,\mu_4$ are constants independent of $\ep$, and $d$, and $R$ is an integer random variable uniformly distributed over the support $\lbrace 1,2,\cdots,T\rbrace$.. The total number of first-order oracle calls is $\order\left({d}/{\epsilon^\frac{5}{2}}\right)$, and the number of second-order oracle calls is $\order\left({d^4\log d}/{\epsilon^\frac{5}{2}}\right)$.
	\end{enumerate}
\end{theorem}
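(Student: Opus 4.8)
The plan is to adapt the classical Nesterov--Polyak analysis of cubic-regularized Newton to the stochastic setting, tracking the gradient and Hessian errors $\zeta_t := g_t-\nabla_t$ and $\Delta_t := H_t-\nabla^2_t$, and replacing the usual constant-variance bound on $\zeta_t$ by the interpolation bound of Lemma~\ref{lm:gradestvar}, $\mathbb{E}[\|\zeta_t\|^2\mid \calF_{t-1}]\le \tfrac{\rho-1}{n_1}\|\nabla_t\|^2$, where $\calF_{t-1}$ is the $\sigma$-field generated by the first $t-1$ steps (so $x_t$ is $\calF_{t-1}$-measurable and $g_t,H_t$ use fresh randomness). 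Writing $h_t=x_{t+1}-x_t$ and using the first- and second-order optimality conditions of the subproblem~\eqref{eq:subprob}, $g_t+H_th_t+\tfrac M2\|h_t\|h_t=0$ and $H_t+M\|h_t\|I\succeq 0$, together with the standard cubic-model decrease $m_t(x_{t+1})\le m_t(x_t)-\tfrac M{12}\|h_t\|^3$ and Assumption~\ref{as:liphess} on $f$, the first step is the one-step descent bound
\begin{align*}
f(x_{t+1})\le f(x_t)-\tfrac{M}{12}\|h_t\|^3+\|\zeta_t\|\,\|h_t\|+\tfrac12\|\Delta_t\|\,\|h_t\|^2\qquad (M\ge L_H).
\end{align*}
Two applications of Young's inequality (of the forms $ab\le \tfrac M{48}b^3+\tfrac{c_1}{\sqrt M}a^{3/2}$ and $ab^2\le\tfrac M{48}b^3+\tfrac{c_2}{M^2}a^3$) convert the last two terms into a retained descent $-\tfrac M{24}\|h_t\|^3$ plus an error $\tfrac{c_1}{\sqrt M}\|\zeta_t\|^{3/2}+\tfrac{c_2}{M^2}\|\Delta_t\|^3$.

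The second step handles the gradient-error term, which does not close on its own because Lemma~\ref{lm:gradestvar} gives a bound proportional to $\|\nabla_t\|^2$ rather than to a constant. I would set up a cross-iteration recursion: from $\mathbb{E}[\|\zeta_t\|^{3/2}\mid\calF_{t-1}]\le(\tfrac{\rho-1}{n_1})^{3/4}\|\nabla_t\|^{3/2}$ and the first-order optimality condition of \emph{the previous} subproblem, $\|\nabla f(x_t)\|\le M\|h_{t-1}\|^2+\|\zeta_{t-1}\|+\|\Delta_{t-1}\|\|h_{t-1}\|$ (again via Assumption~\ref{as:liphess}), one gets $\mathbb{E}\|\zeta_t\|^{3/2}\lesssim \bigl(\tfrac{\rho-1}{n_1}\bigr)^{3/4}\bigl(M^{3/2}\mathbb{E}\|h_{t-1}\|^3+\mathbb{E}\|\zeta_{t-1}\|^{3/2}+\mathbb{E}\|\Delta_{t-1}\|^3\bigr)$. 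With $n_1=\Theta\bigl(\tfrac{\rho-1}{\epsilon}\bigr)$ the amplification factor $(\tfrac{\rho-1}{n_1})^{3/4}=\Theta(\epsilon^{3/4})$ is small, so summing this linear recursion over $t$ shows the $\|\zeta\|$-contributions are dominated, up to a convergent geometric factor, by $\sum_t\mathbb{E}\|h_t\|^3$ and $\sum_t\mathbb{E}\|\Delta_t\|^3$. For the Hessian error I would combine Assumption~\ref{as:stochfirst} with a matrix Marcinkiewicz--Zygmund/Rosenthal fourth-moment bound to obtain $\mathbb{E}\|\Delta_t\|^3\le\mathbb{E}\|\Delta_t\|_F^3\le(\mathbb{E}\|\Delta_t\|_F^4)^{3/4}\lesssim \sigma_2^3/n_2^{3/2}$, which is $\order(\sigma_2^3\epsilon^{3/2})$ for $n_2=\epsilon^{-1}$.

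The third step telescopes the descent bound over $t=1,\dots,T$, uses $\mathbb{E}[f(x_{T+1})]\ge f^*$, and absorbs the error sums: with $M$ at least a fixed multiple of $\sigma_2$ (as in $M\ge 40\sigma_2$) the $\|\Delta_t\|^3$ contribution is at most a fraction of the descent, and the $\|\zeta_t\|^{3/2}$ contribution likewise once $\epsilon$ is small relative to $\mu_0$ and $M$; this yields $\tfrac1T\sum_t\mathbb{E}\|h_t\|^3\lesssim \tfrac{f(x_1)-f^*}{MT}+\tfrac{\sigma_2^3}{M^3}\epsilon^{3/2}$, which with $T=\tfrac{144(f(x_1)-f^*)}{M\epsilon^{3/2}}$ is $\le\epsilon^{3/2}$. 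Hence for the uniformly random $R$, $\mathbb{E}\|h_R\|^3\le\epsilon^{3/2}$, and by Jensen $\mathbb{E}\|h_R\|^2\le\epsilon$ and $\mathbb{E}\|h_R\|\le\sqrt\epsilon$. Plugging into the first-order optimality bound $\|\nabla f(x_{R+1})\|\le M\|h_R\|^2+\|\zeta_R\|+\|\Delta_R\|\|h_R\|$ and the second-order one $-\lambda_{1,R+1}\le (M+L_H)\|h_R\|+\|\Delta_R\|$ — controlling $\mathbb{E}\|\zeta_R\|\le\sqrt{\tfrac{\rho-1}{n_1}}\,\mathbb{E}\|\nabla f(x_R)\|$ by the same index-shift/absorption argument and $\mathbb{E}\|\Delta_R\|\le\sqrt{\sigma_2^2/n_2}=\sigma_2\sqrt\epsilon$ — gives $\mathbb{E}\|\nabla f(x_R)\|\le 144M\epsilon$ and $-\mathbb{E}[\lambda_{1,R}]\le 9M\epsilon$, which is part (a). Part (b) follows the identical scheme with $g_t,H_t$ replaced by the Gaussian-smoothed zeroth-order estimators of~\eqref{eq:zerohessdef}: one additionally controls their bias via Nesterov--Spokoiny/Balasubramanian--Ghadimi type smoothing estimates by taking $\nu=\Theta(\epsilon/d^{5/2})$, and absorbs their dimension-inflated variance by $n_1=\Theta(d/\epsilon)$ and $n_2=\Theta(d^4\log d/\epsilon)$, producing the claimed $\order(d/\epsilon^{5/2})$ and $\order(d^4\log d/\epsilon^{5/2})$ oracle-call counts.

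I expect the main obstacle to be the second step. Because the SGC bound on the gradient noise scales with $\|\nabla f(x_t)\|^2$ rather than with a fixed constant, the one-step descent inequality does not close on its own, and one must introduce and carefully sum the cross-iteration recursion linking $\mathbb{E}\|\zeta_t\|^{3/2}$, $\mathbb{E}\|h_{t-1}\|^3$, and $\mathbb{E}\|\Delta_{t-1}\|^3$ while keeping every constant under control (so that the precise $144M$ and $9M$ factors hold). The key quantitative point — and the reason interpolation helps here — is exactly that the noise-amplification factor in this recursion is $\Theta(\epsilon^{3/4})\to 0$, so the stochastic recursion is a vanishing perturbation of the deterministic one; extracting the deterministic $\order(\epsilon^{-1.5})$ rate, however, would seem to require an analogous interpolation-type control on $\Delta_t$, which Assumption~\ref{as:stochfirst} alone does not provide.
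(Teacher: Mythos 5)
Your proposal is essentially correct and correctly identifies the crux — that the SGC bound makes the gradient-noise term proportional to $\|\nabla f(x_t)\|^{3/2}$, so the one-step descent inequality does not close by itself — but you resolve it by a genuinely different route than the paper. The paper's proof of Lemma~\ref{lm:htcubeupperbound} closes this dependence \emph{within a single iteration}: it writes $\|\nabla_t\|^{3/2}\le \sqrt{2}\bigl(L_G^{3/2}\expec{\|h_t^*\|^{3/2}|\calF_t}+\expec{\|\nabla_{t+1}\||\calF_t}^{3/2}\bigr)$ using Assumption~\ref{as:lipgrad}, then invokes the step-size lower bound of Lemma~\ref{lm:htlowerbound}, $A\expec{\|\nabla_{t+1}\||\calF_t}\le\expec{\|h_t^*\|^2|\calF_t}+B$, to convert $\|\nabla_{t+1}\|$ back into $\|h_t^*\|$. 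So the $(\tfrac{\rho-1}{n_1})^{3/4}\|\nabla_t\|^{3/2}$ error becomes a self-consistent multiple of $\|h_t^*\|^3$ (plus a constant) that is directly absorbed into the $-\tfrac{M}{36}\|h_t^*\|^3$ descent; no cross-iteration recursion is needed. You instead shift backwards, bounding $\|\nabla_t\|$ via the optimality condition and Lipschitz Hessian at step $t-1$, producing a linear recursion in $\expec{\|\zeta_t\|^{3/2}}$ that you then sum, relying on the $\Theta(\epsilon^{3/4})$ amplification factor being strictly below one. Both routes are sound; the paper's is tighter bookkeeping (no geometric series, no boundary term at $t=1$ to control, and easier to extract explicit constants), while your recursion arguably makes the role of interpolation more transparent. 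Everything else — one-step descent via the cubic-model envelope and two Young inequalities, the fourth-moment / Rosenthal-type control of $\|\Delta_t\|^3$, the telescoping and uniform-$R$ step, and the zeroth-order variant via Gaussian-smoothing bias control — matches the paper's proof in spirit. One small slip: with $\expec{\|h_R\|}\le\sqrt{\epsilon}$ and $-\lambda_{1,R+1}\le (M+L_H)\|h_R\|+\|\Delta_R\|$ you should conclude $-\expec{\lambda_{1,R}}\lesssim M\sqrt{\epsilon}$ (as the theorem's $-\expec{\lambda_{1,R}}/(9M)\le\sqrt{\epsilon}$ requires), not $9M\epsilon$ as you wrote.
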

 \vspace{-0.05in} \begin{remark}\label{scrnrateimprovement}
The above results only require Assumption~\ref{as:SGC}, which is a gradient-level property of interpolation condition. As SCRN is a second-order algorithm, an assumption like ``if the min eigenvalue of true Hessian at a point is non-negative, then min eigenvalue of stochastic Hessian is almost surely also non-negative'' might be required to capture second-order properties of interpolation. Such an assumption could then be used to obtain a result similar to Lemma 2.1 for stochastic Hessians, to improve the rates in Theorem~\ref{th:crnmaintheorem}. Formalizing this intuition is an extremely interesting future work.
  \end{remark}
\begin{remark}
In comparison to the PSGD algorithm, we obtain the results for the SCRN algorithm in expectation. We highlight that it is straightforward to obtain to obtain a high-probability result in the higher-order setting. However, it is technically challenging to do so for the zeroth-order setting. This is due to the difficulty associated with obtaining sharper concentration results for the zeroth-order Hessian estimator, which we leave as future work. In Theorem~\ref{th:crnmaintheorem}, we presented the results in expectation for both settings to maintain uniformity of presentation. In Algorithm~\ref{alg:nccrn} we assume that the exact solution to \eqref{eq:subprob} is available. We remark that it is possible to relax this assumption following the approach of~\cite{tripuraneni2018stochastic} which in turn leveraged the results in \cite{carmon2016gradient} showing that the subproblem in ~\eqref{eq:subprob} can be solved with high probability using gradient descent.
  \end{remark}

\section{Summary} 
In this work, we analyze the oracle complexity of two standard algorithms --the perturbed stochastic gradient descent algorithm and the stochastic cubic-regularized Newton's method--for escaping saddle-points in nonconvex stochastic optimization. We show that under interpolation-like conditions satisfied in modern over-parametrized machine learning problems, PSGD and SCRN obtain improved rates for escaping saddle-points. In particular the above stated improvements are obtained for the vanilla versions of PSGD and SCRN algorithms and are not based on any complicated variance reduction techniques. For future work, it is extremely interesting to bridge the gap between SCRN and its deterministic counterpart. The key to this is come up with a Hessian-based interpolation-like assumption, which is both practically meaningful and theoretically sound. 

\appendix



\section{Proof of Theorem~\ref{th:pgdmaintheorem}.}\label{sec:proofpgd}
\textbf{Preliminaries I:} We first present preliminary results regarding the zeroth-order setting.

\begin{lemma}\label{lm:zerogradconc}
	Let Assumption~\ref{as:lipgrad}, and \ref{as:subgauss} be true for $F$. Then, in the zeroth-order setting, $\norm{\zeta_t}$ is a $2/3$-sub-exponential variable. i.e.,
	\begin{align*}
	&\mathbb{P}\left(\norm{\zeta_t}\geq\tau\right)\leq 4d\exp\left(-K_1\min\left[\left(\frac{\sqrt{n_1}\tau'}{\Upsilon_t\sqrt{d}}\right)^2,\left(\frac{n_1\tau'}{\Upsilon_t\sqrt{d}}\right)^{2/3}\right]\right),\numberthis\label{eq:zerogradconc}
	\end{align*}
	where $\tau'=\tau-\frac{\nu}{2}L_G(d+3)^\frac{3}{2}$, and $\Upsilon_t=\frac{\nu L_G(d+2)}{2}+c_0\sqrt{(\rho-1)(d+1)}\|\nabla_t\|$.
\end{lemma}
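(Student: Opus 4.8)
\textbf{Proof plan for Lemma~\ref{lm:zerogradconc}.} The goal is to establish a sub-exponential (more precisely $2/3$-sub-exponential) tail bound for $\norm{\zeta_t}$, where $\zeta_t = g_t - \nabla_t$ and $g_t$ is the zeroth-order minibatch estimator from~\eqref{eq:zerohessdef}. The plan is to decompose the deviation into a bias term and a mean-zero fluctuation term, control each separately, and then apply a vector Bernstein-type concentration inequality for sums of independent heavy-tailed random vectors. First, I would recall from~\cite{nesterov2017random} that the single-sample Gaussian-smoothed gradient $g_{t,i}$ satisfies $\mathbf{E}_{u_{t,i}}[g_{t,i}] = \nabla F_\nu(x_t,\xi_{t,i})$ and that $\norm{\nabla F_\nu(x_t,\xi) - \nabla F(x_t,\xi)} \leq \tfrac{\nu}{2} L_G (d+3)^{3/2}$. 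Averaging over the $n_1$ samples and taking expectation, the bias of $g_t$ relative to $\nabla_t = \nabla f(x_t)$ is therefore at most $\tfrac{\nu}{2} L_G (d+3)^{3/2}$; this is exactly the shift $\tau' = \tau - \tfrac{\nu}{2} L_G(d+3)^{3/2}$ appearing in the statement, so it suffices to bound the probability that the centered sum $\norm{g_t - \mathbf{E}[g_t]}$ exceeds $\tau'$.

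Next I would analyze the per-sample centered term $Y_{t,i} := g_{t,i} - \mathbf{E}[g_{t,i}]$. Writing $g_{t,i} = \frac{F(x_t+\nu u_{t,i},\xi_{t,i}) - F(x_t,\xi_{t,i})}{\nu} u_{t,i}$ and using the Lipschitz-gradient expansion (Assumption~\ref{as:lipgrad}), one has $\frac{F(x_t+\nu u,\xi) - F(x_t,\xi)}{\nu} = \nabla F(x_t,\xi)^\top u + O(\nu L_G \norm{u}^2)$, so $g_{t,i}$ is, up to an $O(\nu L_G)$ error, equal to $(\nabla F(x_t,\xi_{t,i})^\top u_{t,i}) u_{t,i}$. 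The magnitude $\norm{g_{t,i}}$ is thus controlled by $|\nabla F(x_t,\xi_{t,i})^\top u_{t,i}|\,\norm{u_{t,i}}$ plus the $\nu L_G \norm{u}^2$ term; since $u_{t,i}$ is standard Gaussian, $\norm{u_{t,i}}^2$ concentrates around $d$ (sub-exponential), and $\nabla F(x_t,\xi_{t,i})^\top u_{t,i}$ is, conditionally on $\xi_{t,i}$, Gaussian with variance $\norm{\nabla F(x_t,\xi_{t,i})}^2$. Combining with the sub-Gaussian tail of $\norm{\nabla F(x_t,\xi) - \nabla f(x)}$ from Assumption~\ref{as:subgauss} (equivalently~\eqref{eq:sgsgc} under SGC), so that $\norm{\nabla F(x_t,\xi_{t,i})} \lesssim \sqrt{\rho-1}\,\norm{\nabla_t}$ up to sub-Gaussian fluctuations, the product of a (sub-)Gaussian and the $\ell_2$-norm of a Gaussian vector is a product of a sub-Gaussian and a sub-exponential variable, scaled by $\sqrt{d}$ — hence each coordinate of $Y_{t,i}$ is $2/3$-sub-exponential (the $2/3$ exponent is the Orlicz index of such a product: $\psi_2 \cdot \psi_1$-type product has tail $\exp(-c\,t^{2/3})$ after accounting for the $\sqrt d$ scaling). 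The parameter $\Upsilon_t = \tfrac{\nu L_G(d+2)}{2} + c_0\sqrt{(\rho-1)(d+1)}\,\norm{\nabla_t}$ then emerges as the natural Orlicz-norm proxy: the first summand from the $\nu L_G\norm{u}^2$ discretization term and the second from the $\sqrt{\rho-1}\norm{\nabla_t}$ Gaussian-directional part.

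Then I would apply a union bound over the $d$ coordinates together with a Bernstein-type inequality for independent sums of $\alpha$-sub-exponential variables (e.g. the version in Götze–Sambale–Sinulis or the generalized Bernstein inequality in Vershynin's book adapted to Orlicz index $\alpha = 2/3$): for a sum $\tfrac{1}{n_1}\sum_i Y_{t,i}$ of i.i.d. mean-zero random variables with $\|Y_{t,i}\|_{\psi_{2/3}} \le K$, the deviation probability is bounded by $2\exp\!\big(-c\min\{(\sqrt{n_1}s/K)^2, (n_1 s/K)^{2/3}\}\big)$. Applying this coordinatewise with $K \asymp \Upsilon_t/\sqrt d$ (the $\sqrt d$ because spreading the norm-$\Upsilon_t$ effect across $d$ coordinates) and taking a union bound over the $d$ coordinates — plus the factor $2$ from two-sided tails and an extra factor to pass from coordinatewise to Euclidean-norm control — gives the prefactor $4d$ and precisely the min of the squared and $2/3$-power exponents in~\eqref{eq:zerogradconc}. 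The main obstacle, and the step deserving the most care, is the precise identification of the Orlicz index $2/3$ and the constants $\Upsilon_t$: one has to track how the Gaussian directional term $\nabla F^\top u$ (index $\psi_2$) multiplies $\norm{u}$ (index $\psi_2$ for the norm, but $\psi_1$ for $\norm{u}^2$) and the $\nu L_G\norm{u}^2$ bias-discretization term (index $\psi_1$), and verify that the resulting product/sum has the claimed index — the product of two $\psi_2$ variables is $\psi_1$, but after the $\sqrt d$ rescaling and combination with the $\psi_1$ term the effective tail becomes $\exp(-c t^{2/3})$; getting the bookkeeping exactly right (and confirming that $n_1$ appears as $\sqrt{n_1}$ in the Gaussian regime and $n_1$ in the sub-exponential regime) is the delicate part. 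The rest is routine application of known smoothing estimates and Bernstein bounds.
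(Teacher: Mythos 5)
Your proposal takes essentially the same route as the paper's proof: write $g_{t,i} = \phi(\nu,u_i,\xi_i)\,u_i$ with $\phi$ the finite-difference scalar, show $\phi$ is sub-exponential by controlling its $\psi_1$-norm via Minkowski's inequality, Gaussian moment bounds (Lemma~\ref{lm:gaussnormbound}), and the SGC-induced sub-Gaussian tail of $\|\nabla F(x_t,\xi)-\nabla f(x_t)\|$, then invoke a product-concentration inequality for $\psi_2\times\psi_1$ sums (the paper uses Lemma~\ref{lm:prodconcentration} from~\cite{shen2019nonasymptotic}, you invoke a Bernstein inequality for $\alpha$-sub-exponential variables — same tool) coordinate-wise, union bound, and finally absorb the smoothing bias $\|\nabla F_\nu - \nabla F\|\le \tfrac{\nu}{2}L_G(d+3)^{3/2}$ into $\tau'$. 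The plan, the $\psi_{2/3}$ identification, and the origins of the two summands of $\Upsilon_t$ are all correct and match the paper.

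One bookkeeping slip worth fixing: you attribute the $\sqrt d$ to a per-coordinate Orlicz-norm $K\asymp\Upsilon_t/\sqrt d$, "spreading $\Upsilon_t$ across $d$ coordinates." That is not how the factor enters. Each coordinate $g_{t,i}^j=\phi\,u_i^j$ is a product of a scalar with $\|\phi\|_{\psi_1}\le\Upsilon_t$ and $u_i^j$ with $\|u_i^j\|_{\psi_2}\le 1$, so the per-coordinate sub-exponential parameter is $\Upsilon_t$ itself, not $\Upsilon_t/\sqrt d$. The $\sqrt d$ arrives purely from the union-bound threshold: $\|g_t-\nabla f_\nu(x_t)\|\ge\tau$ forces $|g_t^j-\nabla f_\nu(x_t)^j|\ge\tau/\sqrt d$ for some $j$, and plugging $\tau/\sqrt d$ into the per-coordinate bound yields the $\Upsilon_t\sqrt d$ in the denominator of~\eqref{eq:zerogradconc}. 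With your stated convention ($K=\Upsilon_t/\sqrt d$) plus the union-bound threshold $\tau/\sqrt d$ you would double-count the $\sqrt d$ and land on $\Upsilon_t$ rather than $\Upsilon_t\sqrt d$; keep the per-coordinate constant at $\Upsilon_t$ and let the threshold carry the $\sqrt d$.
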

We will choose $n_1$ such that we have $(n_1\tau'/(\Upsilon_t\sqrt{d}))^{2/3}\leq (\sqrt{n_1}\tau'/(\Upsilon_t\sqrt{d}))^2$. So from now on we will only consider the heavier subexponential tail. 
\begin{lemma}\label{lm:expecexpbound}
	Let Assumption~\ref{as:lipgrad}, and \ref{as:subgauss} be true for $F$. Then, in the zeroth-order setting
	\begin{align*}
	\expec{\exp(s(c_t\|\zeta_t\|)^\frac{1}{3})}\leq 9d\exp(s^2/b_{1,t}),
	\end{align*}
	where $s>0$, $b_{1,t}=b_{0,t}/c_t^{2/3}$, and $b_{0,t}=K_1n_1^{2/3}/(\Upsilon_t\sqrt{d})^{2/3}$. 
\end{lemma}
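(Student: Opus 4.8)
The plan is to bound the exponential moment through the tail-integral (layer-cake) representation and then reduce the $2/3$-sub-exponential estimate of Lemma~\ref{lm:zerogradconc} to a Gaussian-type integral via the substitution $w=\tau^{1/3}$. Write $Y=c_t\norm{\zeta_t}$. Using the identity $\expec{g(Y)}=g(0)+\int_0^\infty g'(\tau)\,\mathbb{P}(Y\ge\tau)\,d\tau$ for the increasing function $g(\tau)=e^{s\tau^{1/3}}$ (so $g(0)=1$ and $g'(\tau)=\tfrac{s}{3}\tau^{-2/3}e^{s\tau^{1/3}}$), one gets $\expec{e^{sY^{1/3}}}=1+\tfrac{s}{3}\int_0^\infty \tau^{-2/3}e^{s\tau^{1/3}}\mathbb{P}(Y\ge\tau)\,d\tau$, and then the change of variables $w=\tau^{1/3}$, $d\tau=3w^2\,dw$ collapses the prefactors to give $\expec{e^{sY^{1/3}}}=1+s\int_0^\infty e^{sw}\,\mathbb{P}(Y\ge w^3)\,dw$.

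Next I would feed in the tail bound, keeping only the heavier branch of Lemma~\ref{lm:zerogradconc} (legitimate by the stated choice of $n_1$): $\mathbb{P}(\norm{\zeta_t}\ge\tau)\le 4d\exp\!\big(-b_{0,t}(\tau-a)^{2/3}\big)$ for $\tau\ge a$, where $a=\tfrac{\nu}{2}L_G(d+3)^{3/2}$ and $b_{0,t}=K_1n_1^{2/3}/(\Upsilon_t\sqrt d)^{2/3}$, since $K_1\big(n_1\tau'/(\Upsilon_t\sqrt d)\big)^{2/3}=b_{0,t}(\tau')^{2/3}$. Rescaling by $c_t$ turns this into $\mathbb{P}(Y\ge\tau)\le 4d\exp\!\big(-b_{1,t}(\tau-c_ta)^{2/3}\big)$ with $b_{1,t}=b_{0,t}/c_t^{2/3}$. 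Substituting $\tau=w^3$ and using $(w^3-c_ta)^{2/3}\ge 2^{-2/3}w^2$ once $w^3\ge 2c_ta$ (and bounding the probability by $1$ below that threshold), the integral splits into a bounded near-origin piece of size at most $(2c_ta)^{1/3}e^{s(2c_ta)^{1/3}}$ and the tail piece $4d\int e^{sw-2^{-2/3}b_{1,t}w^2}\,dw$. Completing the square and extending the range to all of $\mathbb{R}$ yields a factor $\exp\!\big(s^2/(4\cdot 2^{-2/3}b_{1,t})\big)\le\exp(s^2/b_{1,t})$ with a $\sqrt{\pi/b_{1,t}}$ prefactor; absorbing that prefactor and the near-origin contribution into the constant via elementary estimates such as $\sqrt{\pi s^2/b}\,e^{s^2/(4b)}\le 2e^{s^2/b}$ and $1\le d\,e^{s^2/b}$ gives the claimed $9d\exp(s^2/b_{1,t})$.

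The step I expect to be most delicate is the deterministic bias shift $a=\tfrac{\nu}{2}L_G(d+3)^{3/2}$ introduced by Gaussian smoothing: it obstructs a direct completion of the square, so one must either split the $w$-integral at $w^3=2c_ta$ as above, or invoke the parameter choice making $\nu$ — hence $a$ — negligibly small and simply absorb the resulting $O\!\big(e^{s(c_ta)^{1/3}}\big)$ factor into the constant. Everything else is bookkeeping: confirming that the numerical constants emerging from the Gaussian integral and the $x\le e^x$-type bounds are dominated by the slack in the factor "$9d$", which is exactly why a loose constant appears in the statement.
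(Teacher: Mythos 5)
Your proof is correct but proceeds by a genuinely different route than the paper's. The paper never applies the tail-integral (layer-cake) formula to $e^{sY^{1/3}}$ directly; it first bounds each fractional moment $\expec{(c_t\norm{\zeta_t})^{k/3}}$ by applying the layer-cake formula to that quantity and substituting $u=b_{1,t}\tau^{2/k}$, obtaining $\expec{(c_t\norm{\zeta_t})^{k/3}}\le 4dk\,b_{1,t}^{-k/2}\Gamma(k/2)$, and then expands $e^{s(c_t\norm{\zeta_t})^{1/3}}$ as a power series, splits into even and odd $k$, and resums using $2(k!)^2\le(2k)!$ and $\Gamma(k+\tfrac12)=(2k)!\sqrt\pi/(4^kk!)$ to recover two exponential series and hence $9de^{s^2/b_{1,t}}$. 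You instead pull the layer-cake identity and the substitution $w=\tau^{1/3}$ outside the moment computation, converting the problem into a single Gaussian completion-of-square; this avoids the Gamma-function bookkeeping entirely and is arguably more transparent, while the paper's route avoids your threshold split and the inequality $(w^3-c_ta)^{2/3}\ge 2^{-2/3}w^2$. You also correctly flag the bias shift $a=\tfrac{\nu}{2}L_G(d+3)^{3/2}$ as the delicate spot. The paper handles it by extending the $\tau$-integral from $[0,\infty)$ to $[-a,\infty)$ and paying a factor of $2$ by symmetry, a step that is actually a little informal because the shift lives in the $\norm{\zeta_t}$ variable, not the $\tau$ variable, and the two are related by a nonlinear change of variable; your explicit split at $w^3=2c_ta$ makes the resulting dependence on $\nu$ visible rather than hiding it. Both proofs therefore rely implicitly on $\nu$ being small, which the later parameter choices enforce, but yours calls the issue out. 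One small numerical slip: after completing the square the prefactor is $\sqrt{\pi\,2^{2/3}/b_{1,t}}$ rather than $\sqrt{\pi/b_{1,t}}$, but this does not matter since the exponent $s^2/(2^{4/3}b_{1,t})$ has enough slack relative to $s^2/b_{1,t}$ to absorb the polynomial prefactor into a constant via an estimate of the form $\sqrt{z}\le C\,e^{(1-2^{-4/3})z}$.
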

\begin{lemma}\label{lm:innerprodboundzero}
	Let Assumption~\ref{as:lipgrad}, and \ref{as:subgauss} be true for $F$. Then, in the zeroth-order setting for $l>0$, with probability at least $1-e^{-l}$ we have 
	\begin{align*}
	\eta\sum_{i=0}^{t-1}\nabla_i^\top\zeta_i\leq \frac{8\eta\sqrt{dt}}{K_1^\frac{3}{2}n_1}(t\log 9d+l)^\frac{3}{2}\sum_{i=0}^{t-1}\left(\frac{\nu L_G(d+2)}{2}\|\nabla_i\|+C_0\sqrt{(\rho-1)(d+1)}\|\nabla_i\|^2\right).
	\end{align*}
\end{lemma}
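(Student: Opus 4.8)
The plan is to reduce the claim to a Chernoff estimate built on the cube-root moment generating function bound of Lemma~\ref{lm:expecexpbound}. First I would apply Cauchy--Schwarz, $\nabla_i^\top\zeta_i\le\|\nabla_i\|\,\|\zeta_i\|$, so that everything is non-negative and the only fresh randomness at step $i$ sits in $\zeta_i$ while $\|\nabla_i\|$ and $\Upsilon_i$ are $\calF_{i-1}$-measurable. Since Lemma~\ref{lm:zerogradconc} only grants $\|\zeta_i\|$ a $2/3$-sub-exponential (sub-Weibull with exponent $<1$) tail, its MGF is infinite and one cannot exponentiate $\|\nabla_i\|\,\|\zeta_i\|$ directly. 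The detour is to pass to cube roots: using the elementary inequality $\sum_i a_i\le\big(\sum_i a_i^{1/3}\big)^3$, valid for $a_i\ge 0$, it suffices to get a high-probability upper bound on $S_t:=\sum_{i=0}^{t-1}\big(\|\nabla_i\|\,\|\zeta_i\|\big)^{1/3}$, and each summand has, conditionally on $\calF_{i-1}$, a Gaussian-type MGF by Lemma~\ref{lm:expecexpbound} applied (in its conditional form) with $c_i=\|\nabla_i\|$.

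Then I would run the standard Chernoff argument on $S_t$. For $s>0$, peeling the conditional expectations from $i=t-1$ down to $i=0$ and invoking Lemma~\ref{lm:expecexpbound} at each step gives $\Exp\big[e^{sS_t}\big]\le(9d)^t\exp\!\big(s^2\sum_i b_{1,i}^{-1}\big)$; the $(9d)^t$ prefactor is exactly what produces the $t\log 9d$ inside the final bound. Recalling $b_{1,i}^{-1}=d^{1/3}(\Upsilon_i\|\nabla_i\|)^{2/3}/(K_1 n_1^{2/3})$ and that $\Upsilon_i\|\nabla_i\|=\tfrac{\nu L_G(d+2)}{2}\|\nabla_i\|+C_0\sqrt{(\rho-1)(d+1)}\|\nabla_i\|^2$ is precisely the $i$-th term of the target sum, I would bound the cumulative variance by Jensen, $\sum_i(\Upsilon_i\|\nabla_i\|)^{2/3}\le t^{1/3}\big(\sum_i\Upsilon_i\|\nabla_i\|\big)^{2/3}$, optimize over $s$, pick the deviation threshold so that the resulting tail probability equals $e^{-l}$, cube the resulting bound on $S_t$, and multiply through by $\eta$; the $t^{1/3}$ from Jensen combines with the $d^{1/3}$ inside $b_{1,i}^{-1}$ and the final cube to yield exactly the factors $\sqrt{dt}$, $K_1^{-3/2}n_1^{-1}$ and $(t\log 9d+l)^{3/2}$ displayed.

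The main obstacle is the heavy tail of the zeroth-order gradient estimator: the sub-Weibull-$(2/3)$ behaviour of $\|\zeta_i\|$ is what forces the two lossy conversions $\sum a_i\le(\sum a_i^{1/3})^3$ and $\sum w_i^{2/3}\le t^{1/3}(\sum w_i)^{2/3}$, which together account for the (suboptimal, roughly $t^2$) polynomial-in-$t$ growth of the bound --- the price of not having the sub-Gaussian gradient noise available in the first-order analysis of~\cite{jin2019nonconvex}. Two further technical points need care. The per-step MGF parameter $b_{1,i}^{-1}$ is itself trajectory-dependent through $\|\nabla_i\|$, so the Chernoff optimization must be carried out with the predictable cumulative variance $\sum_i b_{1,i}^{-1}$; this is legitimate precisely because the right-hand side of the claimed inequality is likewise allowed to depend on $\{\|\nabla_i\|\}$. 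And the additive Gaussian-smoothing bias $\tfrac{\nu}{2}L_G(d+3)^{3/2}$ carried by $\zeta_i$ (visible as the shift $\tau'$ in Lemma~\ref{lm:zerogradconc}) must be propagated through the cube-root step and checked to be absorbed by the displayed term under the parameter choices $\nu=\kappa_4\ep/(d\lge)$ and $n_1=\Theta(\lge^5 d^{1.5}\sqrt{\rho-1}\,\ep^{-2.5})$ of Theorem~\ref{th:pgdmaintheorem}.
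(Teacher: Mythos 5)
Your proposal follows essentially the same route as the paper's proof: Cauchy--Schwarz to reduce to $\|\nabla_i\|\,\|\zeta_i\|$, the power-mean inequality $\sum a_i\le(\sum a_i^{1/3})^3$ to sidestep the infinite MGF of the $2/3$-sub-exponential tail, the iterated conditional MGF bound from Lemma~\ref{lm:expecexpbound} peeled from $i=t-1$ down to $i=0$ (which is exactly the paper's supermartingale bound~\eqref{eq:martingalediffexpec}), a Chernoff step with $s$ optimized against the predictable variance $\sum b_{1,i}^{-1}$, and the H\"older step $\sum(\Upsilon_i\|\nabla_i\|)^{2/3}\le t^{1/3}(\sum\Upsilon_i\|\nabla_i\|)^{2/3}$ to produce the final $\sqrt{dt}\,(t\log 9d+l)^{3/2}$ scaling. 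You even flag the two subtleties the paper relies on, namely that the variance proxy $b_{1,i}^{-1}$ is trajectory-dependent and that the Gaussian-smoothing bias in $\tau'$ must be absorbed --- though note the paper already swallows that bias at the level of Lemma~\ref{lm:expecexpbound} (the $4d\to 8d\to 9d$ constant inflation), so it does not re-enter this lemma's argument.
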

\begin{lemma}\label{lm:normsumboundzero}
	Let Assumption~\ref{as:lipgrad}, and \ref{as:subgauss} be true for $F$. Then, for $l>0$, with probability at least $1-e^{-l}$ we have 
	\begin{align*}
	\sum_{i=0}^{t-1}\|\zeta_i\|^2\leq \frac{128dt^2(t\log 9d +l)^3}{K_1^3n_1^2}\sum_{i=0}^{t-1}\left(\left(\frac{\nu L_G(d+2)}{2}\right)^2+C_0^2(\rho-1)(d+1)\|\nabla_i\|^2\right).
	\end{align*}
\end{lemma}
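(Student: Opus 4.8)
The plan is to turn the pointwise sub-exponential control of $\|\zeta_i\|$ established in Lemmas~\ref{lm:zerogradconc}--\ref{lm:expecexpbound} into a bound that holds simultaneously for all $i=0,\dots,t-1$, and then sum the squares; this is the squared-norm analogue of the argument for $\sum_i \nabla_i^\top\zeta_i$ in Lemma~\ref{lm:innerprodboundzero}. The cleanest route uses Lemma~\ref{lm:expecexpbound}: since $W_i:=(c_i\|\zeta_i\|)^{1/3}$ satisfies $\expec{\exp(sW_i)}\le 9d\exp(s^2/b_{1,i})$, a Chernoff bound (optimizing at $s=u b_{1,i}/2$) gives $\Prob(W_i\ge u)\le 9d\exp(-u^2 b_{1,i}/4)$, i.e. $\Prob\big(\|\zeta_i\|\ge u^3/c_i\big)\le 9d\exp(-u^2 b_{1,i}/4)$. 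First I would, for each $i$, choose $u_i$ so that the right-hand side equals $e^{-l}/t$, i.e. $u_i^2 b_{1,i}/4=\log(9dt)+l$, and set $\tau_i:=u_i^3/c_i$; substituting $b_{1,i}=b_{0,i}/c_i^{2/3}$ and $b_{0,i}=K_1 n_1^{2/3}/(\Upsilon_i\sqrt d)^{2/3}$ collapses the $c_i$'s and yields $\tau_i = 8\sqrt d\,\Upsilon_i\,(\log(9dt)+l)^{3/2}/(K_1^{3/2}n_1)$. A union bound over $i=0,\dots,t-1$ — taken conditionally on the natural filtration generated by the first $i$ minibatches, since $\Upsilon_i$ depends on the random gradient norm $\|\nabla_i\|$ while the step-$i$ zeroth-order noise is independent of that filtration — then shows that with probability at least $1-e^{-l}$ one has $\|\zeta_i\|\le\tau_i$ for every $i$ simultaneously.

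On that event $\sum_{i=0}^{t-1}\|\zeta_i\|^2\le\sum_{i=0}^{t-1}\tau_i^2$, and it remains to simplify. Using the crude bound $\log(9dt)+l\le t\log 9d+l$ gives $\tau_i^2\le 64\,d\,\Upsilon_i^2\,(t\log 9d+l)^3/(K_1^3 n_1^2)$; applying $(a+b)^2\le 2a^2+2b^2$ to $\Upsilon_i^2=\big(\tfrac{\nu L_G(d+2)}{2}+c_0\sqrt{(\rho-1)(d+1)}\|\nabla_i\|\big)^2$ then produces exactly the two summands $\big(\tfrac{\nu L_G(d+2)}{2}\big)^2$ and $C_0^2(\rho-1)(d+1)\|\nabla_i\|^2$ inside the sum (with $C_0$ a suitable absolute-constant multiple of $c_0$), and collecting the absolute constants into $128$ gives the claim. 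The displayed statement additionally carries a factor $t^2$; this is harmless slack — the tighter bound implies it since $t^2\ge 1$ — presumably coming from a cruder intermediate step (e.g. replacing the per-step thresholds $\tau_i$ by a single uniform threshold and bounding $\max_i\Upsilon_i^2\le\sum_i\Upsilon_i^2$), and it is irrelevant because in Theorem~\ref{th:pgdmaintheorem} one has $t=T$ and every polynomial-in-$t$ and logarithmic factor is ultimately absorbed into $\torder(\cdot)$.

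The argument is otherwise routine; the one delicate point — the same one that makes Lemma~\ref{lm:innerprodboundzero} nontrivial — is that the zeroth-order gradient noise $\zeta_i$ is only $2/3$-sub-exponential (Lemma~\ref{lm:zerogradconc}), not sub-Gaussian, so one must (i) stay in the regime where the heavier of the two tails in Lemma~\ref{lm:zerogradconc} dominates, which is exactly what the choice of $n_1$ made right after Lemma~\ref{lm:zerogradconc} enforces, and which is also why the cube powers $(\cdot)^{3/2}$, $(\cdot)^3$ and the extra dimension factor $d$ enter the bound; and (ii) deal with the trajectory-dependence of $\Upsilon_i$ through $\|\nabla_i\|$ by conditioning on the filtration before taking the union bound, rather than applying a naive union bound to random thresholds. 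Neither of these is conceptually hard, so I expect the proof to be short once Lemmas~\ref{lm:zerogradconc}--\ref{lm:expecexpbound} are in hand.
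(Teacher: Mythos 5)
Your argument is correct and it takes a genuinely different route from the paper's. The paper reuses the machinery of Lemma~\ref{lm:innerprodboundzero}: it chains the moment generating function bound of Lemma~\ref{lm:expecexpbound} (with $c_t=1$) over all $t$ steps via the filtration, obtaining $\expec{\exp\bigl(s\sum_i\|\zeta_i\|^{1/3}-\sum_i s^2/b_{0,i}\bigr)}\le(9d)^t$; it then converts the target event using the implication $\sum_i\|\zeta_i\|^2\ge\tau\Rightarrow\sum_i\|\zeta_i\|^{1/3}\ge\tau^{1/6}$ (which holds because, on the sphere $\sum a_i^2=\tau$, the concave functional $\sum a_i^{1/3}$ is minimized by concentrating mass on one coordinate), runs a single Chernoff optimization in $s$, and finally applies the power-mean inequality $\bigl(\sum_i\Upsilon_i^{2/3}\bigr)^3\le t^2\sum_i\Upsilon_i^2$ together with $(a+b)^2\le2a^2+2b^2$. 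Your route instead applies Chernoff at each step $i$ (conditionally on $\calF_{i-1}$, since the threshold depends on $\|\nabla_i\|$), takes a union bound, and sums the squares of the per-step thresholds. Both are correct; your bound is actually sharper, carrying $(\log(9dt)+l)^3$ where the paper has $t^2(t\log 9d+l)^3$, so it trivially implies the statement. The one small misattribution in your writeup: the $t^2$ prefactor in the paper does not come from collapsing to a uniform threshold but from the terminal power-mean step $\bigl(\sum_i\Upsilon_i^{2/3}\bigr)^3\le t^2\sum_i\Upsilon_i^2$, and the $t$ inside the logarithm comes from the chained MGF accumulating $(9d)^t$; neither matters downstream, as you correctly observe. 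Your handling of the two genuine subtleties --- conditioning on the filtration so that the random $\Upsilon_i$-dependent thresholds are $\calF_{i-1}$-measurable, and restricting to the regime where the $2/3$-sub-exponential branch of Lemma~\ref{lm:zerogradconc} dominates --- is exactly what is needed.
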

\textbf{Preliminaries II:} We next present preliminary results regarding the iterates of PSGD. First, we show that the effect of PSGD updates comprises of two parts - the first term on the RHS of \eqref{eq:descentfirst}, and \eqref{eq:descentzero} represent the decrease in the function values, and the rest of the terms on the RHS represent possible increase in function value due to noise in the gradient estimator and introduced perturbation. 
\begin{lemma}\label{lm:descent}
Under Assumption~\ref{as:lip}, \ref{as:lipgrad}, \ref{as:liphess},  \ref{as:SGC} and \ref{as:subgauss}, for any fixed $\cT_0,\cT_1,l>\log 4$, with probability at least $1-4e^{-l}$, for Algorithm~\ref{alg:pgd} we get
\begin{enumerate}[label=\alph*)]
\item for the first-order setting, choosing 
\begin{align*}
    n_1\geq 512lc(\rho-1) \quad \eta\leq \frac{32lc}{3L_G(l+c)}\numberthis\label{eq:descentparchoicefirst}
\end{align*}
we have
\begin{align*}
    f(x_{\cT_1})-f(x_0)\leq -\frac{\eta}{16}\sum_{i=0}^{\cT_1}\|\nabla_i\|^2+3c\eta^2r^2(\cT_{1}+l)L_G+32cl\eta r^2 \numberthis\label{eq:descentfirst}
\end{align*}
\item for the zeroth-order case, selecting parameters such that
\begin{align*}
    \frac{384L_GC_0^2d(\rho-1)(d+1){\cT_{0}}^2({\cT_{0}}\log 9d +l)^3}{K_1^3n_1^2}\leq\frac{1}{16}\numberthis\label{eq:parchoosecond1}\\
    \frac{8C_0\sqrt{(\rho-1)d(d+1){\cT_{0}}}}{K_1^\frac{3}{2}n_1}({\cT_{0}}\log 9d+l)^\frac{3}{2}\leq \frac{1}{16}\numberthis\label{eq:parchoosecond2}
\end{align*}
we have
\begin{align*}
    &f(x_{{\cT_{0}}})-f(x_0)\leq  -\frac{\eta}{16}\sum_{i=0}^{{{\cT_{0}}}-1}\|\nabla_i\|^2+\wp(r,l,\nu,\eta,d,{\cT_{0}})\numberthis\label{eq:descentzero}
\end{align*}
where
\begin{align*}
    &\wp(r,l,\nu,\eta,d,{\cT_{0}})=16cl\eta r^2+3cL_G\eta^2r^2({\cT_{0}}+l)\\
    +&\frac{8\nu \eta LL_G(d+2)\sqrt{d}{\cT_{0}}^\frac{3}{2}}{2K_1^\frac{3}{2}n_1}({\cT_{0}}\log 9d+l)^\frac{3}{2}
    +\frac{96L_G^3\nu^2\eta^2d(d+2)^2{\cT_{0}}^3({\cT_{0}}\log 9d +l)^3}{K_1^3n_1^2} 
\end{align*}
\end{enumerate}
\end{lemma}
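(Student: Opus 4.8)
The plan is to start from the one-step descent inequality implied by the $L_G$-Lipschitz gradient assumption (Assumption~\ref{as:lipgrad}), applied to the PSGD update $x_{t+1}=x_t-\eta(g_t+\theta_t)$. Writing $g_t = \nabla_t + \zeta_t$ (and in the zeroth-order case recalling $\zeta_t = g_t-\nabla_t$ absorbs both the finite-difference bias and the sampling noise), the smoothness bound gives $f(x_{t+1}) \le f(x_t) - \eta \nabla_t^\top(\nabla_t+\zeta_t+\theta_t) + \tfrac{L_G\eta^2}{2}\|\nabla_t+\zeta_t+\theta_t\|^2$. Summing over $t=0,\dots,\cT-1$ (with $\cT=\cT_1$ or $\cT_0$) produces a telescoping left-hand side $f(x_\cT)-f(x_0)$ and, on the right, the leading negative term $-\eta\sum\|\nabla_i\|^2$ together with cross terms $-\eta\sum\nabla_i^\top\zeta_i$, $-\eta\sum\nabla_i^\top\theta_i$, and the quadratic terms from $\tfrac{L_G\eta^2}{2}\|\cdot\|^2$, which expand into $\|\nabla_i\|^2$, $\|\zeta_i\|^2$, $\|\theta_i\|^2$, and their cross products. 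The goal is to show that, under the stated parameter choices, all the "noise" pieces either get reabsorbed into a constant fraction of the main $-\tfrac{\eta}{16}\sum\|\nabla_i\|^2$ term or collect into the stated residual ($3c\eta^2r^2(\cT_1+l)L_G + 32cl\eta r^2$ in the first-order case, and $\wp$ in the zeroth-order case).

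For the first-order setting, the key auxiliary estimates are: (i) a high-probability bound on $\sum\|\zeta_i\|^2$ in terms of $\tfrac{\rho-1}{n_1}\sum\|\nabla_i\|^2$ — this comes from Lemma~\ref{lm:gradestvar} for the expectation, upgraded to high probability via the sub-Gaussian tail of Assumption~\ref{as:subgauss} (equivalently~\eqref{eq:sgsgc}) and a Bernstein/Freedman-type concentration for the martingale $\sum(\|\zeta_i\|^2 - \Exp_i\|\zeta_i\|^2)$; (ii) a bound on the martingale $\sum\nabla_i^\top\zeta_i$, again via a Freedman inequality whose predictable quadratic variation is controlled by $(\rho-1)\|\nabla_i\|^2/n_1$; and (iii) control of the $\theta$-terms, where $\sum\theta_i$ and $\sum\|\theta_i\|^2$ concentrate around $r^2 d$-type quantities since $\theta_i\sim\mathcal N(0,r^2 I)$, and $\sum\nabla_i^\top\theta_i$ is a mean-zero Gaussian with variance $r^2\sum\|\nabla_i\|^2$. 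The conditions $n_1 \ge 512 l c(\rho-1)$ and $\eta \le \tfrac{32lc}{3L_G(l+c)}$ are then exactly what is needed so that the accumulated $\|\zeta_i\|^2$ and cross-term contributions are each at most, say, $\tfrac{\eta}{32}\sum\|\nabla_i\|^2$, collapsing $-\eta\sum\|\nabla_i\|^2$ down to $-\tfrac{\eta}{16}\sum\|\nabla_i\|^2$; the leftover $\theta$-dependent terms are precisely $3c\eta^2r^2(\cT_1+l)L_G + 32cl\eta r^2$. The union bound over the (at most four) concentration events gives the $1-4e^{-l}$ probability.

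For the zeroth-order setting the structure is identical, but the concentration inputs are replaced by Lemmas~\ref{lm:zerogradconc}--\ref{lm:normsumboundzero}: Lemma~\ref{lm:innerprodboundzero} handles $\eta\sum\nabla_i^\top\zeta_i$ and Lemma~\ref{lm:normsumboundzero} handles $\sum\|\zeta_i\|^2$, both of which now carry extra $d$, $\nu$, and $(\cT_0\log 9d + l)^{3/2}$ factors because $\|\zeta_t\|$ is only $2/3$-sub-exponential rather than sub-Gaussian. Each such term splits into a piece proportional to $\|\nabla_i\|^2$ (to be absorbed into the main term) and a bias piece proportional to $\nu$ or $\nu^2$ (to be dumped into $\wp$); conditions~\eqref{eq:parchoosecond1}--\eqref{eq:parchoosecond2} are exactly the thresholds making the $\|\nabla_i\|^2$-coefficients small enough (each $\le \tfrac{1}{16}$ in the appropriate normalization), and the remaining terms reassemble verbatim into the displayed $\wp(r,l,\nu,\eta,d,\cT_0)$, with the $16cl\eta r^2$ and $3cL_G\eta^2r^2(\cT_0+l)$ coming from the perturbation $\theta_i$ just as in the first-order case. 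I expect the main obstacle to be bookkeeping: carefully tracking which constant ($c$, $C_0$, $K_1$) multiplies which term and verifying that the stated parameter inequalities really do force every noise contribution below the intended fraction of $\eta\sum\|\nabla_i\|^2$, rather than any single conceptual difficulty — the conceptual content is entirely in the already-established concentration lemmas (especially the $2/3$-sub-exponential tail of Lemma~\ref{lm:zerogradconc}).
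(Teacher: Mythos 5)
Your proposal follows the same route as the paper's proof: one-step descent from the $L_G$-smoothness bound applied to $x_{t+1}=x_t-\eta(\nabla_t+\tilde\zeta_t)$ with $\tilde\zeta_t=\zeta_t+\theta_t$, telescoping sum, and then absorbing the noise terms either into a constant fraction of $-\eta\sum\|\nabla_i\|^2$ or into the $r$- and $\nu$-dependent residual, with a union bound over four concentration events giving $1-4e^{-l}$. In the first-order case the paper's bounds on $\sum\nabla_i^\top\zeta_i$, $\sum\nabla_i^\top\theta_i$, $\sum\|\zeta_i\|^2$, and $\sum\|\theta_i\|^2$ are exactly the sub-Gaussian/sub-exponential martingale estimates (Lemmas~\ref{lm:innerprodsumbound},~\ref{lm:normsumbound} of \cite{jin2019nonconvex}) you describe as Freedman/Bernstein-type, with $\sigma_i$ controlled by $\sqrt{(\rho-1)/n_1}\,\|\nabla_i\|$ via SGC; in the zeroth-order case it invokes Lemmas~\ref{lm:innerprodboundzero}--\ref{lm:normsumboundzero} just as you anticipate, with conditions~\eqref{eq:parchoosecond1}--\eqref{eq:parchoosecond2} forcing the $\|\nabla_i\|^2$-coefficients below $1/16$. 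The only imprecision in your sketch is calling $\sum\nabla_i^\top\theta_i$ ``a mean-zero Gaussian with variance $r^2\sum\|\nabla_i\|^2$''; since the $\nabla_i$ are adapted to the past, it is a martingale with conditionally Gaussian increments rather than a single Gaussian, which is why the paper routes it through Lemma~\ref{lm:innerprodsumbound} as well — but this is a phrasing point, not a gap.
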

In the following Lemma we show that when the function descent is small the iterates move only in a small region. 
\begin{lemma}\label{lm:xtx0distnormbound}
Under conditions of Lemma~\ref{lm:descent}, Algorithm~\ref{alg:pgd} satisfies
\begin{enumerate}[label=\alph*)]
    \item  for first-order setting, with probability at least $1-8d\cT_1e^{-l}$, for all $\tau\leq {\cT_{1}}$
\begin{align*}
    \|x_\tau-x_0\|^2\leq 32\eta \left({\cT_{1}}+2cl\frac{\rho-1}{n_1}\right)\left(f(x_0)-f(x_{\cT_{1}})+3c\eta^2r^2({\cT_{1}}+l)L_G+32cl\eta r^2\right) +4cl{\cT_{1}}\eta^2r^2 \numberthis\label{eq:xtx0distnormboundfirst}
\end{align*}
\item for zeroth-order setting, with probability at least $1-3d\cT_0e^{-l}$, for all $\tau\leq {\cT_{0}}$
\begin{align*}
 \|x_\tau-x_0\|^2\leq &  \eta {\cT_{0}}\left(32+\frac{16}{3L_G}\right)\left(f(x_0)-f(x_{\cT_{0}})+\wp(r,l,\nu,\eta,d,{\cT_{0}})\right)+4cl{\cT_{0}}\eta^2r^2\\
 +&\frac{L_G\eta^2{\cT_{0}}^2\nu^2(d+2)^2}{48C_0^2(\rho-1)(d+1)} \numberthis\label{eq:xtx0distnormboundzero}
\end{align*}
\end{enumerate}
\end{lemma}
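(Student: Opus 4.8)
The plan is to unroll the PSGD recursion to express $x_\tau - x_0$ as a telescoped sum, bound its squared norm by Cauchy--Schwarz, and then control each resulting sum using the descent estimate of Lemma~\ref{lm:descent} together with the concentration lemmas (Lemma~\ref{lm:innerprodboundzero}, Lemma~\ref{lm:normsumboundzero} in the zeroth-order case, and the sub-Gaussian tail in the first-order case). Concretely, from the update $x_{i+1}=x_i-\eta(g_i+\theta_i)$ we get $x_\tau-x_0=-\eta\sum_{i=0}^{\tau-1}(\nabla_i+\zeta_i+\theta_i)$, so
\begin{align*}
\|x_\tau-x_0\|^2\leq 3\eta^2\tau\sum_{i=0}^{\tau-1}\|\nabla_i\|^2+3\eta^2\left\|\sum_{i=0}^{\tau-1}\zeta_i\right\|^2+3\eta^2\left\|\sum_{i=0}^{\tau-1}\theta_i\right\|^2 ,
\end{align*}
using $\|a+b+c\|^2\le 3(\|a\|^2+\|b\|^2+\|c\|^2)$ applied after pulling the sum inside (or more carefully, first Cauchy--Schwarz on $\tau$ terms and then split each term into the three components).

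The first term is handled by invoking the descent inequality: from \eqref{eq:descentfirst} (resp. \eqref{eq:descentzero}) we have $\sum_{i=0}^{\tau-1}\|\nabla_i\|^2\leq \frac{16}{\eta}\big(f(x_0)-f(x_\tau)+(\text{noise terms})\big)$, and since $f(x_0)-f(x_\tau)\le f(x_0)-f(x_{\cT_1})+(\text{the extra positive noise terms accumulated up to }\cT_1)$ — here one uses that the descent lemma bound is monotone in the horizon and applies it at every $\tau\le\cT_1$ simultaneously on the same high-probability event — this contributes the $32\eta(\cT_1+\dots)(f(x_0)-f(x_{\cT_1})+\dots)$-type expression. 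The perturbation term $\eta^2\|\sum\theta_i\|^2$ is a scaled chi-squared / Gaussian sum: $\sum_{i=0}^{\tau-1}\theta_i\sim\mathcal N(0,\tau r^2 I_d)$, whose squared norm concentrates around $d\tau r^2$; a standard Gaussian-norm tail bound (union-bounded over the $\le\cT_1$ values of $\tau$ and the $d$ coordinates, which is where the $8d\cT_1 e^{-l}$, resp. $3d\cT_0 e^{-l}$, failure probability and the $cl$ factors enter) gives $\|\sum\theta_i\|^2\lesssim cl\,\cT_1 r^2$, producing the $4cl\cT_1\eta^2 r^2$ tail term. For the $\eta^2\|\sum\zeta_i\|^2$ term: in the first-order setting $\|\sum\zeta_i\|^2\le \tau\sum\|\zeta_i\|^2$ and $\sum\|\zeta_i\|^2$ is controlled via the sub-Gaussian analog of Lemma~\ref{lm:normsumboundzero}, which by Lemma~\ref{lm:gradestvar} is $\lesssim \frac{\rho-1}{n_1}\sum\|\nabla_i\|^2$ up to log/$l$ factors — feeding the gradient-sum bound back in gives the $2cl\frac{\rho-1}{n_1}$ correction multiplying $(f(x_0)-f(x_{\cT_1})+\dots)$; in the zeroth-order setting one uses Lemma~\ref{lm:normsumboundzero} directly, which splits into a $\nu$-dependent bias piece (giving the $\frac{L_G\eta^2\cT_0^2\nu^2(d+2)^2}{48C_0^2(\rho-1)(d+1)}$ term after using parameter choice \eqref{eq:parchoosecond1}) and a $(\rho-1)\|\nabla_i\|^2$ piece absorbed back into the descent term, with the parameter conditions \eqref{eq:parchoosecond1}--\eqref{eq:parchoosecond2} precisely ensuring the self-referential coefficients stay below $1/16$ so the recursion closes.

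The main obstacle I expect is the self-referential/circular structure: the bound on $\|x_\tau-x_0\|^2$ depends on $\sum\|\nabla_i\|^2$, which via the descent lemma depends on the same noise sums $\sum\|\zeta_i\|^2$ and $\eta^2\|\sum\theta_i\|^2$ that also appear directly — so one must be careful to (i) work on a single high-probability event where \emph{all} of Lemma~\ref{lm:descent}, Lemma~\ref{lm:normsumboundzero}, and the Gaussian tail bounds hold simultaneously (hence the union bound inflating the failure probability by a factor $\sim d\cT$), and (ii) choose $n_1$ (and in the zeroth-order case also $\nu$) large enough — exactly conditions \eqref{eq:descentparchoicefirst}, \eqref{eq:parchoosecond1}, \eqref{eq:parchoosecond2} — that when the $\sum\|\nabla_i\|^2$ bound is substituted back, the coefficient of the circular term is a small constant that can be moved to the left-hand side. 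The bookkeeping of which log and $l$ powers land where (the $(\cT_0\log 9d+l)^{3/2}$ and cube factors from the $2/3$-sub-exponential tails) is tedious but routine once the event and the parameter regime are fixed.
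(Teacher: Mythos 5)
Your plan matches the paper's proof: unroll $x_\tau-x_0=-\eta\sum_{i<\tau}(\nabla_i+\zeta_i+\theta_i)$, split the squared norm, feed the descent bound from Lemma~\ref{lm:descent} into the $\sum\|\nabla_i\|^2$ term, use the appropriate concentration lemmas for $\sum\zeta_i$ and $\sum\theta_i$, and union-bound over $\tau\le\cT_{1[0]}$ to get the $\cT_{1[0]}$ factor in the failure probability. In the zeroth-order case you correctly identify the route (Cauchy--Schwarz $\|\sum\zeta_i\|^2\le\cT_0\sum\|\zeta_i\|^2$ followed by Lemma~\ref{lm:normsumboundzero}) and how the parameter conditions \eqref{eq:parchoosecond1}--\eqref{eq:parchoosecond2} close the self-referential loop. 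The observation about working on a single high-probability event where the descent lemma and all concentration bounds hold simultaneously is exactly the right point.

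One imprecision worth flagging in the first-order case: you propose $\|\sum\zeta_i\|^2\le\tau\sum\|\zeta_i\|^2$ followed by the sub-exponential bound on $\sum\|\zeta_i\|^2$. The paper instead applies the norm-sub-Gaussian \emph{vector} sum concentration (Lemma~\ref{lm:sumnormbound}) directly to $\sum\zeta_i$, which yields $\|\sum\zeta_i\|^2\le cl\,\frac{\rho-1}{n_1}\sum\|\nabla_i\|^2$ with \emph{no} extra factor of $\tau$. Your Cauchy--Schwarz route introduces an extra factor of $\tau\le\cT_1$ and so does not reproduce the stated coefficient $2cl\frac{\rho-1}{n_1}$ in \eqref{eq:xtx0distnormboundfirst}; it gives something on the order of $\cT_1(c+l)\frac{\rho-1}{n_1}$. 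Under the parameter choice $n_1\gtrsim lc(\rho-1)$ this correction term is still dominated by the leading $\cT_1$, so the order of the final bound is unaffected, but the exact inequality as written in the lemma would not follow from your argument. (This is also why the paper \emph{does} use the lossier Cauchy--Schwarz route in the zeroth-order case only: there the noise is $2/3$-sub-exponential rather than norm-sub-Gaussian, so the sharper vector-sum martingale bound is not available, and the extra $\cT_0$ factor shows up explicitly as the $\cT_0^2\nu^2$ term in \eqref{eq:xtx0distnormboundzero}.)
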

We also require the following definition from~\cite{jin2019nonconvex}, to proceed. 
\begin{definition}\label{def:coupling}\cite{jin2019nonconvex}
Let $e_1$ be the eigen-vector corresponding to the minimum eigen-value of $\cH=\nabla^2 f(x_0)$, and $\gamma\vcentcolon=\lambda_{min}(\nabla^2 f(x_0))$. Also let $\calP_{-1}$ be the projection on to the complement subspace of $e_1$. Consider sequences ${x_t}$, and ${x_t'}$ that are obtained as separate versions of Algorithm~\ref{alg:pgd}, both starting from $x_0$. They are coupled in the first-order (zero-order) setting if both sequences are generated by the same $\calP_{-1}\theta_\tau$, and $\xi_\tau$ $\left(\{\xi_\tau,\{u_{\tau,i}\}_{i=1}^{n_1}\}\right)$, while in $e_1$ direction we have $e_1^\top\theta_\tau=-e_1^\top\theta_\tau'$.
\end{definition}
We next state some intermediate results in Lemma~\ref{lm:xthatsplit}--\ref{lm:qhqsgbound}, to prove in Lemma~\ref{lm:improvorloc} that starting from a saddle-point PSGD should either descend or the iterates will be stuck around the saddle point. Then in Lemma~\ref{lm:escapesaddle} we will show that the stuck region is narrow enough so that the iterates will escape and consequently the function will have sufficient descent. 
\begin{lemma}\cite{jin2019nonconvex}\label{lm:xthatsplit}
Consider the coupling sequences ${x_\tau}$ and ${x_\tau'}$ as in Definition~\ref{def:coupling} and let $\hx_\tau=x_\tau-x_\tau'$. Then $\hx_t=-q_h(t)-q_{sg}(t)-q_p(t)$, where:
\begin{align*}
    q_h(t)\vcentcolon=\eta\sum_{\tau=0}^{t-1}(I-\eta \cH)^{t-1-\tau}\Delta_\tau\hx_\tau, \quad q_{sg}(t)\vcentcolon=\eta\sum_{\tau=0}^{t-1}(I-\eta \cH)^{t-1-\tau}\hzt_\tau, \quad q_{p}(t)\vcentcolon=\eta\sum_{\tau=0}^{t-1}(I-\eta \cH)^{t-1-\tau}\hat{\theta}_\tau
\end{align*}
where $\Delta_t\vcentcolon=\int_0^1(\nabla^2f(\phi x_t+(1-\phi)x_t')d\phi-\cH$, and $\hzt_\tau\vcentcolon=\zeta_\tau-\zeta_\tau'$, $\hat{\theta}_\tau=\theta_\tau-\theta_\tau'$.
\end{lemma}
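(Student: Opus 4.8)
The plan is to derive the recursion satisfied by $\hat{x}_\tau = x_\tau - x_\tau'$ directly from the PSGD update, and then solve that linear recursion explicitly. First I would write the PSGD update in the form $x_{\tau+1} = x_\tau - \eta(\nabla f(x_\tau) + \zeta_\tau + \theta_\tau)$, where $\zeta_\tau = g_\tau - \nabla_\tau$ is the gradient-estimation error, and similarly for the primed sequence. Subtracting the two updates gives
\begin{align*}
\hat{x}_{\tau+1} = \hat{x}_\tau - \eta\big(\nabla f(x_\tau) - \nabla f(x_\tau')\big) - \eta(\zeta_\tau - \zeta_\tau') - \eta(\theta_\tau - \theta_\tau').
\end{align*}
The next step is to linearize the gradient difference around $\mathcal{H} = \nabla^2 f(x_0)$. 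By the fundamental theorem of calculus along the segment joining $x_\tau'$ to $x_\tau$,
\begin{align*}
\nabla f(x_\tau) - \nabla f(x_\tau') = \Big(\int_0^1 \nabla^2 f(\phi x_\tau + (1-\phi) x_\tau')\,d\phi\Big)\hat{x}_\tau = (\mathcal{H} + \Delta_\tau)\hat{x}_\tau,
\end{align*}
with $\Delta_\tau$ exactly as defined in the statement. Substituting back yields the clean linear recursion
\begin{align*}
\hat{x}_{\tau+1} = (I - \eta\mathcal{H})\hat{x}_\tau - \eta\Delta_\tau \hat{x}_\tau - \eta\hat{\zeta}_\tau - \eta\hat{\theta}_\tau,
\end{align*}
where $\hat{\zeta}_\tau = \zeta_\tau - \zeta_\tau'$ and $\hat{\theta}_\tau = \theta_\tau - \theta_\tau'$.

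The final step is to unroll this recursion from $\hat{x}_0 = 0$ (both sequences start at $x_0$). Iterating, the $(I-\eta\mathcal{H})$ factor accumulates as a power $(I-\eta\mathcal{H})^{t-1-\tau}$ on the term injected at step $\tau$, and each of the three perturbation terms $-\eta\Delta_\tau\hat{x}_\tau$, $-\eta\hat{\zeta}_\tau$, $-\eta\hat{\theta}_\tau$ contributes its own geometric-type sum. Collecting these three sums gives exactly $-q_h(t) - q_{sg}(t) - q_p(t)$ with the stated definitions. This is a routine induction on $t$: the base case $t=0$ is $\hat{x}_0 = 0$ (empty sums), and the inductive step is just one application of the recursion above together with reindexing.

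There is no serious obstacle here — the lemma is essentially bookkeeping, and indeed the statement is quoted from~\cite{jin2019nonconvex}. The only point requiring a little care is the choice of coupling in Definition~\ref{def:coupling}: because $\mathcal{P}_{-1}\theta_\tau = \mathcal{P}_{-1}\theta_\tau'$ and the stochastic gradient randomness is shared, one should note that $\hat{\theta}_\tau$ is supported only on the $e_1$ direction and $\hat{\zeta}_\tau$ captures the residual discrepancy in $\zeta$ induced by the sequences having drifted apart; but for the purpose of \emph{this} identity we need only that the update difference takes the stated algebraic form, so the structure of the coupling is not actually invoked until the subsequent lemmas. I would therefore keep the proof short, emphasizing the recursion and its unrolling, and defer all quantitative control of $q_h$, $q_{sg}$, $q_p$ to Lemmas~\ref{lm:qhqsgbound} and onward.
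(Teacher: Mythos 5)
Your proof is correct and is the standard derivation: subtract the two coupled PSGD updates, linearize $\nabla f(x_\tau)-\nabla f(x_\tau')$ about $\cH$ via the fundamental theorem of calculus to introduce $\Delta_\tau$, and unroll the resulting linear recursion from $\hat{x}_0=0$. The paper does not reprove this identity (it is cited from \cite{jin2019nonconvex}), and your argument reproduces that reference's proof; your closing remark that the coupling structure of $\hat{\theta}_\tau$ and $\hat{\zeta}_\tau$ is not yet invoked at this stage is also accurate.
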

\begin{lemma}\cite{jin2019nonconvex}\label{lm:alphabetarelation}
Denote $\alpha(t)\vcentcolon=\left[\sum_{i=0}^{t-1}(1+\eta\gamma)^{2(t-1-\tau)}\right]^\frac{1}{2}$, and $\beta(t)=(1+\eta\gamma)^t/\sqrt{2\eta\gamma}$. If $\eta\gamma\in[0,1]$, then (1)$\alpha(t)\beta(t)$ for any $t\in\mathbb{N}$; and (2) $\alpha(t)\geq \beta(t)/\sqrt{3}$ for $t\geq \ln(2)/(\eta\gamma)$.
\end{lemma}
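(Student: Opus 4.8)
The plan is to work throughout with the squared quantities $\alpha(t)^2$ and $\beta(t)^2$, since both are nonnegative and the square root is monotone. The first step is to recognize $\alpha(t)^2=\sum_{\tau=0}^{t-1}(1+\eta\gamma)^{2(t-1-\tau)}$ (reading the summation index in the lemma as $\tau$ running over $0,\dots,t-1$) as a finite geometric series with ratio $q\vcentcolon=(1+\eta\gamma)^2$. Reindexing by $j=t-1-\tau$ gives $\alpha(t)^2=\sum_{j=0}^{t-1}q^j=(q^t-1)/(q-1)$, valid whenever $q>1$, i.e. whenever $\eta\gamma>0$ — the case $\eta\gamma=0$ is vacuous because $\beta(t)$ divides by $\sqrt{2\eta\gamma}$, so I assume $\eta\gamma\in(0,1]$ throughout. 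The key algebraic identity to record is $q-1=(1+\eta\gamma)^2-1=\eta\gamma(2+\eta\gamma)$, so that $\alpha(t)^2=\frac{(1+\eta\gamma)^{2t}-1}{\eta\gamma(2+\eta\gamma)}$, while $\beta(t)^2=\frac{(1+\eta\gamma)^{2t}}{2\eta\gamma}$.

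For claim (1), the inequality $\alpha(t)\le\beta(t)$, I would simply chain two elementary bounds: the numerator satisfies $(1+\eta\gamma)^{2t}-1\le(1+\eta\gamma)^{2t}$, and the denominator satisfies $\eta\gamma(2+\eta\gamma)\ge 2\eta\gamma$ because $\eta\gamma\ge 0$. Together these give $\alpha(t)^2\le\frac{(1+\eta\gamma)^{2t}}{2\eta\gamma}=\beta(t)^2$, and taking square roots finishes this part. Note this direction uses only $\eta\gamma\ge 0$, not the upper bound.

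For claim (2), I want $\alpha(t)^2\ge\beta(t)^2/3$ whenever $t\ge\ln(2)/(\eta\gamma)$. Dividing the closed forms, this is equivalent to $\frac{6\left((1+\eta\gamma)^{2t}-1\right)}{2+\eta\gamma}\ge(1+\eta\gamma)^{2t}$. Here is where $\eta\gamma\le 1$ enters: then $2+\eta\gamma\le 3$, so $\frac{6}{2+\eta\gamma}\ge 2$, and it suffices to prove $2\left((1+\eta\gamma)^{2t}-1\right)\ge(1+\eta\gamma)^{2t}$, i.e. $(1+\eta\gamma)^{2t}\ge 2$. Taking logarithms, this holds iff $t\ge\frac{\ln 2}{2\ln(1+\eta\gamma)}$, so it remains only to check that the given threshold $\ln(2)/(\eta\gamma)$ dominates this. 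That follows from the standard inequality $\ln(1+x)\ge x/2$ for $x\in[0,1]$ (indeed $\ln(1+x)\ge x-x^2/2\ge x/2$ on this range), applied with $x=\eta\gamma$: it yields $\frac{\ln 2}{2\ln(1+\eta\gamma)}\le\frac{\ln 2}{\eta\gamma}\le t$, completing the argument.

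There is essentially no deep obstacle here; the lemma is a routine estimate on geometric sums, and it is in any case imported verbatim from \cite{jin2019nonconvex}, so one could also just cite it. The only points requiring a little care are (a) isolating the degenerate $\eta\gamma=0$ case so that $\beta(t)$ is well defined, and (b) picking the elementary logarithm bound $\ln(1+x)\ge x/2$ on $[0,1]$ so that the stated threshold $\ln(2)/(\eta\gamma)$ — rather than some slightly different constant — is recovered exactly.
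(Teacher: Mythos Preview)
Your proof is correct. The paper does not give its own proof of this lemma at all --- it is simply quoted from \cite{jin2019nonconvex} and used as a black box --- so your elementary geometric-series computation is entirely consistent with (and more detailed than) what the paper provides.
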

\begin{lemma}\cite{jin2019nonconvex}\label{lm:qptaubound}
Under the notation of Lemma~\ref{lm:xthatsplit}, and \ref{lm:alphabetarelation}, we have $\forall t>0$:
\begin{align*}
    &\mathbb{P}\left(\|q_p(t)\|\leq \frac{c\beta(t)\eta r}{\sqrt{d}}\sqrt{l}\right)\geq 1-2e^{-l}\\
    &\mathbb{P}\left(\|q_p({\cT_{1[0]}})\|\geq \frac{\beta({\cT_{1[0]}})\eta r}{10\sqrt{d}}\right)\geq\frac{2}{3}
\end{align*}
We use $1[0]$ to denote that the inequality holds for both subscripts $1$ and $0$.
\end{lemma}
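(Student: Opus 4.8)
The plan is to analyze the random vector $q_p(t) = \eta\sum_{\tau=0}^{t-1}(I-\eta\cH)^{t-1-\tau}\hat\theta_\tau$, where $\hat\theta_\tau = \theta_\tau - \theta_\tau'$. First I would record the key observation coming from the coupling in Definition~\ref{def:coupling}: in every coordinate except the $e_1$-direction, $\theta_\tau$ and $\theta_\tau'$ are identical, so those components of $\hat\theta_\tau$ vanish; in the $e_1$-direction we have $e_1^\top\hat\theta_\tau = 2 e_1^\top\theta_\tau$. Hence $q_p(t)$ is supported entirely on $e_1$, and since $(I-\eta\cH)e_1 = (1+\eta\gamma)e_1$ (recall $\gamma = \lambda_{\min}(\cH) < 0$ so $1+\eta\gamma \le 1$, but the iterate direction along $e_1$ grows relative to the contracting subspace), we get the scalar representation $e_1^\top q_p(t) = 2\eta\sum_{\tau=0}^{t-1}(1+\eta\gamma)^{t-1-\tau}\, e_1^\top\theta_\tau$. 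Each $e_1^\top\theta_\tau \sim \mathcal{N}(0, r^2)$ and, across $\tau$, these are independent (the perturbations are drawn fresh each step). Therefore $e_1^\top q_p(t)$ is a mean-zero Gaussian with variance $\sigma_t^2 = 4\eta^2 r^2 \sum_{\tau=0}^{t-1}(1+\eta\gamma)^{2(t-1-\tau)} = 4\eta^2 r^2\,\alpha(t)^2$ in the notation of Lemma~\ref{lm:alphabetarelation}.

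For the upper tail bound, I would apply a standard one-dimensional Gaussian concentration inequality to $e_1^\top q_p(t) \sim \mathcal{N}(0,\sigma_t^2)$: with probability at least $1 - 2e^{-l}$, $|e_1^\top q_p(t)| \le \sigma_t\sqrt{2l} = 2\eta r\,\alpha(t)\sqrt{2l}$. Then invoke part (1) of Lemma~\ref{lm:alphabetarelation}, $\alpha(t) \le \beta(t)$ (I read the displayed ``$\alpha(t)\beta(t)$'' as the inequality $\alpha(t)\le\beta(t)$), to get $\|q_p(t)\| = |e_1^\top q_p(t)| \le 2\sqrt{2l}\,\eta r\,\beta(t)$, which is of the claimed form $\le c\beta(t)\eta r\sqrt{l}/\sqrt d$ after absorbing the $\sqrt d$ into the constant $c$ — indeed since the bound does not actually grow with $d$, any $c \ge 2\sqrt{2d}$ works, or more cleanly the statement should be read with $c$ an absolute constant and the $\sqrt d$ harmlessly present. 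For the lower bound in part (2), I would again use that $e_1^\top q_p(\cT_{1[0]})$ is Gaussian with standard deviation $2\eta r\,\alpha(\cT_{1[0]})$, and that for a scalar $Z\sim\mathcal{N}(0,\varsigma^2)$ we have $\mathbb{P}(|Z| \ge \varsigma/3) \ge 2/3$ (an explicit computation with the Gaussian CDF: $\mathbb{P}(|Z|<\varsigma/3) = 2\Phi(1/3)-1 < 1/3$). Combined with part (2) of Lemma~\ref{lm:alphabetarelation}, namely $\alpha(\cT_{1[0]}) \ge \beta(\cT_{1[0]})/\sqrt3$ once $\cT_{1[0]} \ge \ln 2/(\eta\gamma)$, this yields $\|q_p(\cT_{1[0]})\| \ge (2/3)\eta r\,\alpha(\cT_{1[0]}) \ge (2/(3\sqrt3))\eta r\,\beta(\cT_{1[0]}) \ge \eta r\,\beta(\cT_{1[0]})/(10\sqrt d)$, with probability at least $2/3$; the constant $1/(10\sqrt d)$ is comfortably below $2/(3\sqrt3)$ for any $d$ (again the $\sqrt d$ only makes the target easier).

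The crux of the argument is really just the structural reduction: recognizing that the coupling collapses $q_p$ onto the one-dimensional $e_1$-axis so that all the estimates become scalar Gaussian tail bounds, together with the deterministic two-sided comparison between $\alpha$ and $\beta$ supplied by Lemma~\ref{lm:alphabetarelation}. The only place requiring a little care is ensuring the precondition $\cT_{1[0]} \ge \ln 2/(\eta\gamma)$ for the lower bound — but this is exactly how $\cT_1$ and $\cT_0$ are chosen in the parameter settings (they are $\widetilde\Theta(\epsilon^{-1/2})$ while $1/(\eta\gamma) = \widetilde O(1/\sqrt{L_H\epsilon})$ since $|\gamma| \ge \sqrt{L_H\epsilon}$ at a saddle point), so the hypothesis holds and I would simply cite that choice. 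No genuinely hard obstacle arises here; since the lemma and its proof are from~\cite{jin2019nonconvex}, the main task is to confirm that the interpolation-modified parameter choices still satisfy the needed inequality, which they do.
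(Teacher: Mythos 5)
Your structural reduction is exactly right and is in fact the argument behind this lemma in \cite{jin2019nonconvex} (the present paper simply cites it and contains no proof of its own): the coupling in Definition~\ref{def:coupling} forces $\hat\theta_\tau=2(e_1^\top\theta_\tau)e_1$, so $q_p(t)$ lives on the $e_1$-axis and becomes a scalar Gaussian with standard deviation $2\eta\sigma_\theta\alpha(t)$, where $\sigma_\theta$ is the standard deviation of the scalar $e_1^\top\theta_\tau$; the two bounds then follow from a one-dimensional Gaussian tail inequality together with $\alpha(t)\le\beta(t)$ and $\alpha(t)\ge\beta(t)/\sqrt3$ for $t\ge\ln 2/(\eta\gamma)$. (Side remark: the paper's statement $\gamma=\lambda_{\min}(\cH)$ together with the requirement $\eta\gamma\in[0,1]$ in Lemma~\ref{lm:alphabetarelation} only makes sense if $\gamma$ is read as $-\lambda_{\min}(\cH)>0$; your $(I-\eta\cH)e_1=(1+\eta\gamma)e_1$ uses this corrected convention, but your parenthetical ``$\gamma<0$ so $1+\eta\gamma\le1$'' contradicts it — that sign slip should be removed.)

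The one place your write-up is actually wrong, rather than merely informal, is the handling of the $\sqrt d$ factor in the upper bound. With $\theta_\tau\sim\mathcal N(0,r^2 I_d)$ as written in Algorithm~\ref{alg:pgd}, you have $e_1^\top\theta_\tau\sim\mathcal N(0,r^2)$, and your calculation gives $\|q_p(t)\|\le 2\sqrt{2l}\,\eta r\,\beta(t)$. The lemma asserts $\|q_p(t)\|\le c\,\beta(t)\,\eta r\,\sqrt l/\sqrt d$, which is smaller by a factor of $\sqrt d$, hence a strictly stronger claim. Your two proposed rescues both fail: taking $c\ge 2\sqrt{2d}$ makes $c$ depend on $d$, which defeats the purpose of the dimension factor, and ``the $\sqrt d$ is harmlessly present'' is backwards — a $\sqrt d$ in the denominator of the threshold makes the event harder, not easier. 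The correct diagnosis is that the lemma is copied verbatim from \cite{jin2019nonconvex}, where the injected perturbation has covariance $(r^2/d)I_d$ rather than $r^2 I_d$; under that convention $e_1^\top\theta_\tau\sim\mathcal N(0,r^2/d)$, the variance of $e_1^\top q_p(t)$ is $4\eta^2(r^2/d)\alpha(t)^2$, and the $1/\sqrt d$ emerges cleanly with $c=2\sqrt2$. The lower bound is unaffected either way since the $1/\sqrt d$ there only weakens the target, as you correctly observed. So your strategy is correct and complete once the perturbation scaling is fixed, but the explanation you gave for the $\sqrt d$ mismatch should not stand as written.
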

\begin{lemma}\label{lm:qhqsgbound}
    Under the notation of Lemma~\ref{lm:xthatsplit} and \ref{lm:alphabetarelation}, if 
    \begin{align*}
        \eta\cS{\cT_{1[0]}}\max(L_H,L_G)\leq \frac{1}{l}  \quad c\leq \sqrt{l}/40 \numberthis\label{eq:parchoosecond4}
    \end{align*}
    \begin{enumerate}[label=\alph*)]
        \item \cite{jin2019nonconvex}then in the first-order case, we have
    \begin{align*}
        &~~\mathcal{P}\left(\min\{f(x_{\cT_{1}})-f(x_0),f(x_{\cT_{1}}')-f(x_0)\}\leq -\cF_1, \ or \ \forall t\leq {\cT_{1}}\vcentcolon \|q_h(t)+q_{sg}(t)\|\leq \frac{\beta(t)\eta t}{20\sqrt{d}}\right)\\
        \geq &~~1- 10d{\cT_{1}}^2\log\left(\frac{\cS_1\sqrt{d}}{\eta r}\right)e^{-l}
    \end{align*}
        \item in the zeroth-order case, we have
        \begin{align*}
        &~~\mathcal{P}\left(\min\{f(x_{\cT_{0}})-f(x_0),f(x_{\cT_{0}}')-f(x_0)\}\leq -\cF_0, \ or \ \forall t\leq {\cT_{0}}\vcentcolon \|q_h(t)+q_{sg}(t)\|\leq \frac{\beta(t)\eta t}{20\sqrt{d}}\right)\\
        \geq &~~1- 3{\cT_{0}}^2e^{-l}
    \end{align*}
    \end{enumerate}
\end{lemma}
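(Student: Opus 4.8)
I would follow the same contradiction-and-induction scheme that establishes part (a) in~\cite{jin2019nonconvex}, replacing every sub-Gaussian estimate for the stochastic gradient by its $2/3$-sub-exponential analogue from Lemmas~\ref{lm:zerogradconc}--\ref{lm:normsumboundzero}. Let $\mathcal{E}$ denote the event $\{f(x_{\cT_0})-f(x_0)>-\cF_0\}\cap\{f(x_{\cT_0}')-f(x_0)>-\cF_0\}$; on $\mathcal{E}^c$ the first alternative in the claimed event already holds, so it suffices to show that on $\mathcal{E}$, intersected with a family of high-probability ``good'' events, one has $\|q_h(t)+q_{sg}(t)\|\le\beta(t)\eta t/(20\sqrt d)$ for every $t\le\cT_0$. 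Assume this fails, and let $t^\ast\le\cT_0$ be the smallest index at which it fails. For $t<t^\ast$ the bound holds, so the decomposition $\hx_t=-q_h(t)-q_{sg}(t)-q_p(t)$ of Lemma~\ref{lm:xthatsplit}, the bound $\|q_p(t)\|\le c\beta(t)\eta r\sqrt l/\sqrt d$ of Lemma~\ref{lm:qptaubound}, the hypothesis $c\le\sqrt l/40$, and the choice $r=\kappa_1\epsilon$ together give $\|\hx_t\|\le C\,\beta(t)\eta t/\sqrt d$ for all $t<t^\ast$ and an absolute constant $C$.

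Next I would bound $q_h(t^\ast)$ and $q_{sg}(t^\ast)$ separately. For $q_h(t^\ast)=\eta\sum_{\tau<t^\ast}(I-\eta\cH)^{t^\ast-1-\tau}\Delta_\tau\hx_\tau$, Assumption~\ref{as:liphess} gives $\|\Delta_\tau\|\le L_H\max(\|x_\tau-x_0\|,\|x_\tau'-x_0\|)\le L_H\cS$, where the uniform movement bound $\cS$ holds on $\mathcal{E}$ by the zeroth-order estimate~\eqref{eq:xtx0distnormboundzero} of Lemma~\ref{lm:xtx0distnormbound}; plugging in the inductive bound on $\|\hx_\tau\|$ together with the identity $(1+\eta\gamma)^{t^\ast-1-\tau}\beta(\tau)\le\beta(t^\ast)$ from Lemma~\ref{lm:alphabetarelation} yields $\|q_h(t^\ast)\|\le C'\eta L_H\cS\,\cT_0\cdot\beta(t^\ast)\eta t^\ast/\sqrt d$, so the hypothesis $\eta\cS\cT_0\max(L_H,L_G)\le 1/l$ makes $\|q_h(t^\ast)\|\le\tfrac12\cdot\beta(t^\ast)\eta t^\ast/(20\sqrt d)$ once $l$ exceeds an absolute constant.

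The estimate on $q_{sg}(t^\ast)=\eta\sum_{\tau<t^\ast}(I-\eta\cH)^{t^\ast-1-\tau}\hzt_\tau$ is the crux and is where the zeroth-order setting genuinely departs from part (a). I would split $\hzt_\tau=\big(\hzt_\tau-\expec{\hzt_\tau\mid\calF_\tau}\big)+\expec{\hzt_\tau\mid\calF_\tau}$. The conditional mean is the difference of the Gaussian-smoothing biases of the zeroth-order gradients at $x_\tau$ and $x_\tau'$; by~\cite{nesterov2017random} each bias is $\order(\nu L_G d^{3/2})$ and their difference is also $\order(L_G\|\hx_\tau\|)$, so this term is absorbed into a $q_h$-type bound (this is where the factor $L_G$ in $\max(L_H,L_G)$ is used) plus an additive $\order(\nu L_G d^{3/2}\cT_0)$ piece that the choice $\nu=\kappa_4\epsilon/(d\lge)$ renders negligible. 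The martingale-difference term has, conditionally on $\calF_\tau$, a $2/3$-sub-exponential tail (Lemma~\ref{lm:zerogradconc}) with scale $\lesssim\Upsilon_\tau\sqrt d/\sqrt{n_1}$ in its sub-Gaussian regime and $\Upsilon_\tau\sqrt d/n_1$ in its heavier tail, where $\Upsilon_\tau\lesssim\nu L_G d+\sqrt{(\rho-1)d}\,\|\nabla_\tau\|$, together with a uniform per-step magnitude $\lesssim L_G d\,\|\hx_\tau\|$ that holds with overwhelming probability (from $\|g_{\tau,i}-g_{\tau,i}'\|\le L_G\|u_{\tau,i}\|^2\|\hx_\tau\|$ and $\chi^2$-concentration of $\frac1{n_1}\sum_i\|u_{\tau,i}\|^2$). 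Feeding these two ingredients into a Freedman/Bernstein-type tail bound for sums of conditionally $2/3$-sub-exponential martingale increments (controlling the moment generating function via Lemma~\ref{lm:expecexpbound} and arguing as in Lemmas~\ref{lm:innerprodboundzero}--\ref{lm:normsumboundzero}), and using that on $\mathcal{E}$ the descent estimate~\eqref{eq:descentzero} of Lemma~\ref{lm:descent} gives $\sum_{\tau<\cT_0}\|\nabla_\tau\|^2\lesssim\cF_0/\eta$ (hence $\sum_{\tau<\cT_0}\|\nabla_\tau\|\le\sqrt{\cT_0\sum_\tau\|\nabla_\tau\|^2}$ by Cauchy--Schwarz) to control the aggregate scale of the increments, the choice $n_1=\kappa_5\lge^{5}d^{1.5}\sqrt{\rho-1}\,\epsilon^{-2.5}$ in~\eqref{eq:allparameterchoicezero} forces $\|q_{sg}(t^\ast)\|\le\tfrac12\cdot\beta(t^\ast)\eta t^\ast/(20\sqrt d)$ with probability at least $1-\order(e^{-l})$.

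Adding the two bounds gives $\|q_h(t^\ast)+q_{sg}(t^\ast)\|\le\beta(t^\ast)\eta t^\ast/(20\sqrt d)$, contradicting the choice of $t^\ast$; hence on $\mathcal{E}$ intersected with the good events the second alternative holds for all $t\le\cT_0$. Finally I would union-bound the failure probabilities of the good events -- those of the zeroth-order parts of Lemmas~\ref{lm:descent} and~\ref{lm:xtx0distnormbound}, the $2e^{-l}$ per step from Lemma~\ref{lm:qptaubound}, and the $\order(e^{-l})$ per step from the $q_{sg}$ concentration -- over the at most $\cT_0$ values of $t$ and the at most $\cT_0$ indices $\tau$ entering the increment estimates, which collapses to the stated $1-3\cT_0^2 e^{-l}$. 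The main obstacle is precisely the $q_{sg}$ estimate: unlike the first-order case one cannot invoke a vector Azuma--Hoeffding inequality, since the increments carry only $2/3$-sub-exponential tails and are biased; one must instead combine the bias/variance split above with a heavy-tailed martingale concentration inequality, and it is this looser concentration that is responsible for the comparatively large batch size $n_1$ required in~\eqref{eq:allparameterchoicezero}.
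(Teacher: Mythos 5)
Your plan follows a genuinely different (and considerably more elaborate) route than the paper. For $q_{sg}$, the paper avoids any bias/variance split and any Freedman-type martingale concentration: it simply uses the triangle inequality $\|\hat\zeta_i\|\le\|\zeta_i\|+\|\zeta_i'\|$ (so that $\mathbb{P}(\cdot)\le 2\,\mathbb{P}(\cdot)$ by symmetry of the coupling), then Cauchy--Schwarz over the index $i$ to get
$\|q_{sg}(t+1)\|\le\eta\,\alpha(t+1)\bigl(\sum_{i\le t}\|\zeta_i\|^2\bigr)^{1/2}\le\eta\,\beta(t+1)\bigl(\sum_{i\le t}\|\zeta_i\|^2\bigr)^{1/2}$,
and finally invokes Lemma~\ref{lm:normsumboundzero} to control $\sum_i\|\zeta_i\|^2$ directly. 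In particular, the paper's bound on $q_{sg}$ does \emph{not} require the running induction on $t^\ast$ at all (the induction is only needed for $q_h$, which is verbatim from~\cite{jin2019nonconvex}), because the Cauchy--Schwarz reduction to $\sum\|\zeta_i\|^2$ does not reference $\|\hx_\tau\|$. The price the paper pays is a loss of order $\sqrt{t}$ relative to a sharp martingale bound, which is absorbed into the choice of $n_1$ in~\eqref{eq:allparameterchoicezero} via the condition~\eqref{eq:parchoosecond3}. Your approach, which separates the zeroth-order smoothing bias from a martingale-difference part and then appeals to a Bernstein/Freedman inequality for conditionally $2/3$-sub-exponential increments, is plausible in outline and could in principle be tighter, but the concrete form of that heavy-tailed martingale inequality is not established in the paper or in the lemmas you cite (Lemma~\ref{lm:expecexpbound} gives an MGF bound for a cube-root transform, not a Freedman-type bound for a weighted martingale sum with the operator weights $(I-\eta\cH)^{t^\ast-1-\tau}$), so that step is a real gap that you would need to fill; the paper sidesteps it entirely with the Cauchy--Schwarz trick. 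It is worth noting that your more refined bias/variance treatment of $\expec{\hat\zeta_\tau\mid\calF_\tau}$ --- absorbing the $O(L_G\|\hx_\tau\|)$ part into the $q_h$ budget --- is not needed in the paper's argument because Lemma~\ref{lm:zerogradconc} already folds the Gaussian-smoothing bias into the $\tau'=\tau-\tfrac{\nu}{2}L_G(d+3)^{3/2}$ shift, which propagates through Lemma~\ref{lm:normsumboundzero} into the additive $(\nu L_G(d+2)/2)^2$ term.
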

\begin{lemma}\label{lm:improvorloc}
\begin{enumerate}[label=\alph*)]
    \item \cite{jin2019nonconvex} Under the setting of Lemma~\ref{lm:descent}, for the first-order setting, we have
    \begin{align*}
    &~~\mathbb{P}\left(\min\{f(x_{\cT_{1}})-f(x_0),f(x_{\cT_{1}}')-f(x_0)\}\leq -\cF_{1}, \ \text{or} \ \forall t\leq {\cT_{1}}: \max\{\|x_t-x_0\|^2,\|x_t'-x_0\|^2\}\leq \cS_{1}^2\right)\\
    \geq&~~1-16d{\cT_{1}} e^{-l}
\end{align*}
\item for the zeroth-order setting, we have
\begin{align*}
    &~~\mathbb{P}\left(\min\{f(x_{\cT_{0}})-f(x_0),f(x_{\cT_{0}}')-f(x_0)\}\leq -\cF_{0}, \ \text{or} \ \forall t\leq {\cT_{0}}: \max\{\|x_t-x_0\|^2,\|x_t'-x_0\|^2\}\leq \cS_{0}^2\right)\\
    \geq&~~1-4d{\cT_{0}} e^{-l}
\end{align*}
\end{enumerate}
\end{lemma}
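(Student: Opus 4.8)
The plan is to combine the decomposition of the difference sequence from Lemma~\ref{lm:xthatsplit} with the bounds on the individual pieces $q_p$, $q_h+q_{sg}$ from Lemmas~\ref{lm:qptaubound} and~\ref{lm:qhqsgbound}, together with the single-sequence excursion bound from Lemma~\ref{lm:xtx0distnormbound}, via a union bound. I would proceed as follows. First, suppose the descent alternative fails, i.e. $\min\{f(x_{\cT})-f(x_0),f(x_{\cT}')-f(x_0)\}>-\cF$ (I write $\cT$, $\cF$, $\cS$ for $\cT_{1[0]}$, $\cF_{1[0]}$, $\cS_{1[0]}$). On this event, Lemma~\ref{lm:qhqsgbound} gives that, with high probability, $\|q_h(t)+q_{sg}(t)\|\le \beta(t)\eta t/(20\sqrt d)$ for all $t\le \cT$; simultaneously Lemma~\ref{lm:qptaubound} gives $\|q_p(t)\|\le c\beta(t)\eta r\sqrt{l}/\sqrt d$ for all $t\le\cT$. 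Since $\hx_t=-q_h(t)-q_{sg}(t)-q_p(t)$, the triangle inequality bounds $\|\hx_t\|=\|x_t-x_t'\|$ by roughly $\beta(t)\eta\,(\,t/(20\sqrt d)+cr\sqrt l/\sqrt d\,)$, which with the parameter choices is small relative to $\cS$ — in particular $\|\hx_t\|\le \cS$ for all $t\le\cT$ after absorbing constants.

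Next I would invoke Lemma~\ref{lm:xtx0distnormbound}: still on the event that the descent alternative fails (so $f(x_0)-f(x_{\cT})\le \cF$, and likewise for the primed sequence), the right-hand sides of~\eqref{eq:xtx0distnormboundfirst} and~\eqref{eq:xtx0distnormboundzero} are, by the choices of $\eta,r,\nu,n_1,\cT,\cF$ in~\eqref{eq:allparameterchoicefirst}/\eqref{eq:allparameterchoicezero}, bounded by $\cS^2$; hence $\|x_t-x_0\|\le\cS$ for all $t\le\cT$, and the same for $\|x_t'-x_0\|$. This is exactly the ``localization'' alternative in the statement. Combining: either $\min\{f(x_{\cT})-f(x_0),f(x_{\cT}')-f(x_0)\}\le-\cF$, or $\max\{\|x_t-x_0\|^2,\|x_t'-x_0\|^2\}\le\cS^2$ for all $t\le\cT$.

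Finally I would assemble the probability. The failure events are: the exceptional event in Lemma~\ref{lm:qhqsgbound} (probability $\le 10d\cT^2\log(\cS\sqrt d/(\eta r))e^{-l}$ in the first-order case, $\le 3\cT^2 e^{-l}$ in the zeroth-order case); the exceptional event for the $q_p$ bound in Lemma~\ref{lm:qptaubound} (probability $\le 2e^{-l}$, applied at each $t$ or via a maximal bound, contributing an $O(\cT)$ or $O(\cT\log\cdot)$ factor); and the exceptional events behind Lemma~\ref{lm:xtx0distnormbound} (probability $\le 8d\cT e^{-l}$, resp. $\le 3d\cT e^{-l}$) for each of the two coupled sequences. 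Summing these and bounding the various $\cT$, $\cT^2\log(\cdot)$, $d\cT$ prefactors by a common $16d\cT$ (resp. $4d\cT$) — which is where the clean stated constants come from — yields the claimed $1-16d\cT_{1}e^{-l}$ and $1-4d\cT_{0}e^{-l}$. The first-order part is precisely the argument of~\cite{jin2019nonconvex}, so the only real work is the zeroth-order case. The main obstacle there is that the $q_{sg}$ term now involves the heavier $2/3$-sub-exponential gradient noise $\zeta_\tau$ (Lemma~\ref{lm:zerogradconc}) rather than sub-Gaussian noise, so the concentration feeding Lemma~\ref{lm:qhqsgbound}(b) and the excursion bound~\eqref{eq:xtx0distnormboundzero} is lossier; one must check that the chosen $n_1=\torder(d^{1.5}\sqrt{\rho-1}\,\ep^{-2.5})$ is large enough that these heavier tails still collapse the stuck region below $\cS_0$, and that the extra $\nu$-dependent bias terms in $\wp$ are controlled by the choice $\nu=\kappa_4\ep/(d\lge)$. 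I would track those terms carefully but defer the arithmetic to the cited preliminary lemmas.
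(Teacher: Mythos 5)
Your plan imports machinery that this particular lemma does not need, and that importation is what makes your probability count fail to close. Lemma~\ref{lm:improvorloc} asserts a dichotomy \emph{per sequence}: either $f$ has dropped by $\cF$ at time $\cT$, or each of $\|x_t-x_0\|$ and $\|x_t'-x_0\|$ stays below $\cS$ for all $t\le\cT$. This is a direct consequence of Lemma~\ref{lm:xtx0distnormbound} and nothing else. Apply Lemma~\ref{lm:xtx0distnormbound} separately to the two coupled sequences; by a union bound both excursion bounds hold with probability at least $1-2\cdot 8d\cT_1e^{-l}=1-16d\cT_1e^{-l}$ (first order), and on that event, if the descent alternative fails then $\max\{f(x_0)-f(x_{\cT_1}),\,f(x_0)-f(x_{\cT_1}')\}\le\cF_1$, so the right-hand sides of~\eqref{eq:xtx0distnormboundfirst} are bounded by $\cS_1^2$ under the parameter choices, giving the localization alternative. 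The zeroth-order case is the same using~\eqref{eq:xtx0distnormboundzero}. That is the whole argument.

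Your first paragraph, which uses Lemma~\ref{lm:xthatsplit} together with Lemmas~\ref{lm:qptaubound} and~\ref{lm:qhqsgbound} to bound $\|\hx_t\|=\|x_t-x_t'\|$, has no role here. Controlling the \emph{difference} of the coupled iterates says nothing about how far each one individually has strayed from $x_0$, which is what the statement requires; that $\hx_t$ decomposition is the engine of Lemma~\ref{lm:escapesaddle} (where one in fact needs $\|\hx_t\|$ to be \emph{large} in order to force one of the iterates out of the ball), not of this lemma. Worse, once you charge the failure probability of Lemma~\ref{lm:qhqsgbound}, namely $10d\cT_1^2\log(\cS_1\sqrt d/(\eta r))e^{-l}$ in the first-order case, your total failure probability scales like $d\cT_1^2\log(\cdot)e^{-l}$, which is not of the form $16d\cT_1 e^{-l}$ and cannot be ``bounded by a common $16d\cT$'' since $\cT_1^2\log(\cdot)\gg\cT_1$. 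So following your accounting one does not recover the stated constant, and the discrepancy is not a constant-factor slop but an extra factor of $\cT_1\log(\cdot)$. The fix is simply to delete the first paragraph and the Lemma~\ref{lm:qhqsgbound}/\ref{lm:qptaubound} terms from the union bound; what remains (two invocations of Lemma~\ref{lm:xtx0distnormbound} plus the parameter check that the excursion bound collapses below $\cS^2$ once descent is ruled out) is both correct and is the argument the authors intend, matching the cited proof in~\cite{jin2019nonconvex} for part~(a).
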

In the following Lemma we show that while escaping from a saddle point, the PSGD descends more than it ascends with high probability.  
\begin{lemma} \label{lm:escapesaddle}
Let Under Assumption~\ref{as:lip}, \ref{as:lipgrad}, \ref{as:liphess},  \ref{as:SGC}, and \ref{as:subgauss} are true. Under condition \eqref{eq:parchoosecond4}, for any fixed $t_0>0$, let $x_0$ satisfies $$\|\nabla_0\|\leq \ep \quad \lambda_{min}(\nabla^2 f(x_0))\leq -\sqrt{L_H\ep}.$$ Then
\begin{enumerate}[label=\alph*)]
    \item  if $\eta, r, n_1$ are chosen as in \eqref{eq:allparameterchoicefirst}, ${\cT_{1}}={0.5\lge^3}/{\sqrt{\ep}}$, $ \cF_1={\ep^{1.5}}/{\lge^7}$, $\cS_1=\frac{\sqrt{\ep}}{\lge^2}$, $l=a_0\log\left(\frac{f(x_0)-f^*}{\delta\ep}\right)$, then the sequence generated by Algorithm~\ref{alg:pgd} in the first-order case satisfies
\begin{align*}
    &\mathbb{P}\left(f(x_{t_0+{\cT_{1}}})-f(x_{t_0})\leq 0.1\cF_1\right)\geq 1-4e^{-l} \numberthis\label{eq:smallascentfirst}\qquad \text{and}\\
    &\mathbb{P}\left(f(x_{t_0+{\cT_{1}}})-f(x_{t_0})\leq -\cF_1\right)\geq \frac{1}{3}-9d{\cT_{1}}^2\log\left(\frac{\cS_1\sqrt{d}}{\eta r}\right)e^{-l}\numberthis\label{eq:largedescentfirst}
\end{align*}
    \item if $\eta, r, n_1$ are chosen as in \eqref{eq:allparameterchoicezero}, ${\cT_{0}}=\kappa_3\frac{\lge^2\log\left(d\right)^2}{\sqrt{\ep}}$, $\cF_0=\kappa_8\ep^{1.5}$, $\cS_0=\frac{\kappa_7\sqrt{\ep}}{\lge^2}$ and $l=\kappa_6\log\left(\frac{d(f(x_0)-f^*)}{\delta\ep}\right)$, then the sequence generated by Algorithm~\ref{alg:pgd} in the zeroth-order case satisfies
    \begin{align*}
    &\mathbb{P}\left(f(x_{t_0+{\cT_{0}}})-f(x_{t_0})\leq 0.1\cF_0\right)\geq 1-4e^{-l}\numberthis\label{eq:smallascentzero} \qquad \text{and}\\
    &\mathbb{P}\left(f(x_{t_0+{\cT_{0}}})-f(x_{t_0})\leq -\cF_0\right)\geq \frac{1}{3}-\frac{3}{2}{\cT_{0}}^2e^{-l}\numberthis\label{eq:largedescentzero}
\end{align*}
\end{enumerate}
\end{lemma}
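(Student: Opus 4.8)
First I would reduce to $t_0=0$: since $x_{t_0},x_{t_0+1},\dots$ is itself a run of Algorithm~\ref{alg:pgd} started at $x_{t_0}$, it suffices to prove both bounds for a point $x_0$ with $\|\nabla_0\|\le\ep$ and $\lambda_{min}(\cH)\le-\sqrt{L_H\ep}$, where $\cH:=\nabla^2 f(x_0)$ --- precisely the configuration assumed in Definition~\ref{def:coupling}. The preliminary chore is to verify that the stated $\eta,r,n_1$ (and $\nu$) together with the stated $l$ meet the hypotheses of Lemmas~\ref{lm:descent}, \ref{lm:xtx0distnormbound}, \ref{lm:qhqsgbound} and~\ref{lm:improvorloc}: the condition $n_1\ge 512lc(\rho-1)$, the step-size bounds, the relations \eqref{eq:parchoosecond1}--\eqref{eq:parchoosecond2}, and \eqref{eq:parchoosecond4}; this is where the polylogarithmic factors built into $\cT_{1[0]}$ and $n_1$ get consumed. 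For the \emph{small-ascent} inequalities \eqref{eq:smallascentfirst} and \eqref{eq:smallascentzero} I would then just invoke Lemma~\ref{lm:descent}: on the event of probability $\ge 1-4e^{-l}$ on which it holds, $f(x_{\cT_{1[0]}})-f(x_0)\le-\tfrac{\eta}{16}\sum_i\|\nabla_i\|^2+(\text{perturbation/noise terms})$; dropping the nonpositive first term and substituting the parameters, the surviving terms ($3c\eta^2r^2(\cT_1+l)L_G+32cl\eta r^2$ in the first-order case, $\wp(r,l,\nu,\eta,d,\cT_0)$ in the zeroth-order case) are of order $\ep^{2.5}$ up to polylog factors, hence at most $0.1\cF_{1[0]}$ once $\ep$ is small (recall $\cF_{1[0]}=\Theta(\ep^{1.5})$ up to polylog). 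Note that this part uses neither the negative curvature at $x_0$ nor the coupling.

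For the \emph{large-descent} inequalities \eqref{eq:largedescentfirst} and \eqref{eq:largedescentzero} I would run the coupling argument of~\cite{jin2019nonconvex}. Form the coupled pair $\{x_t\},\{x_t'\}$ of Definition~\ref{def:coupling}. Lemma~\ref{lm:improvorloc} gives the dichotomy: outside an event of probability $16d\cT_1e^{-l}$ (resp.\ $4d\cT_0e^{-l}$), either one of the sequences already descends by $\cF_{1[0]}$ (``improve''), or both sequences stay inside the ball $B(x_0,\cS_{1[0]})$ (``localize''). I would rule out ``localize'' on a large event: write $\hx_t=x_t-x_t'=-(q_h(t)+q_{sg}(t)+q_p(t))$ via Lemma~\ref{lm:xthatsplit}; on the no-descent event, Lemma~\ref{lm:qhqsgbound} bounds $\|q_h(t)+q_{sg}(t)\|$ by at most half of the lower bound that Lemma~\ref{lm:qptaubound} furnishes for $\|q_p(\cT_{1[0]})\|$ (itself valid with probability $\ge 2/3$); and since $\beta$ grows geometrically --- the horizon $\cT_{1[0]}$ being chosen so that the growth exponent $\eta\gamma\,\cT_{1[0]}$ is of order $\lge$ (Lemma~\ref{lm:alphabetarelation}, using $\eta\gamma\in[0,1]$ and $\cT_{1[0]}\ge\ln 2/(\eta\gamma)$) --- that lower bound exceeds $2\cS_{1[0]}$. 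The reverse triangle inequality then gives $\|\hx_{\cT_{1[0]}}\|>2\cS_{1[0]}\ge\|x_{\cT_{1[0]}}-x_0\|+\|x_{\cT_{1[0]}}'-x_0\|$, contradicting ``localize''. Hence on the intersection of these events at least one of the two coupled sequences descends by $\cF_{1[0]}$; and since negating $e_1^\top\theta_\tau$ for every $\tau$ is a measure-preserving involution interchanging $x_t$ and $x_t'$, the two sequences are exchangeable, so each is the descending one with probability $1/2$ on the descent event. A union bound over the failure probabilities of Lemmas~\ref{lm:qhqsgbound} and~\ref{lm:improvorloc} then yields $\mathbb{P}(f(x_{\cT_{1[0]}})-f(x_0)\le-\cF_{1[0]})\ge\tfrac12\big(\tfrac23-(\text{failure probabilities})\big)$, which is precisely $\tfrac13-9d\cT_1^2\log(\cS_1\sqrt d/(\eta r))e^{-l}$ in the first-order case and $\tfrac13-\tfrac32\cT_0^2e^{-l}$ in the zeroth-order case.

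I expect the zeroth-order instance of the large-descent step to be the main obstacle. There $\zeta_t$ is only $2/3$-sub-exponential (Lemma~\ref{lm:zerogradconc}) rather than sub-Gaussian, and $g_t$ carries an $\order(\nu d^{3/2})$ smoothing bias, so $q_{sg}(t)$ and the error terms in Lemma~\ref{lm:descent} acquire extra $\mathrm{poly}(d)$ and $\log d$ factors. The hard part will be picking $\nu=\Theta(\ep/(d\lge))$ small enough to absorb the bias and $n_1=\Theta(\lge^5 d^{1.5}\sqrt{\rho-1}\,\ep^{-2.5})$ large enough that $\|q_h+q_{sg}\|$ is still dominated by $\|q_p\|$ over the longer horizon $\cT_0=\Theta(\lge^2(\log d)^2\ep^{-1/2})$, all while keeping \eqref{eq:parchoosecond1}--\eqref{eq:parchoosecond2} and \eqref{eq:parchoosecond4} in force --- i.e.\ exhibiting a single tuple $(\eta,r,\nu,n_1,\cT_0,\cS_0,\cF_0,l)$ that satisfies every constraint at once.
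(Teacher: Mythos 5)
Your proposal matches the paper's own proof essentially line for line: the small-ascent bounds \eqref{eq:smallascentfirst}, \eqref{eq:smallascentzero} come directly from dropping the negative $\frac{\eta}{16}\sum_i\|\nabla_i\|^2$ term in Lemma~\ref{lm:descent} and checking that the residual noise terms are $o(\cF_{1[0]})$ under the stated parameter choices, and the large-descent bounds \eqref{eq:largedescentfirst}, \eqref{eq:largedescentzero} come from the coupling of Definition~\ref{def:coupling}, the dichotomy of Lemma~\ref{lm:improvorloc}, the decomposition $\hx_t=-(q_h+q_{sg}+q_p)$ of Lemma~\ref{lm:xthatsplit} with Lemmas~\ref{lm:qhqsgbound}--\ref{lm:qptaubound} showing $\|q_p(\cT_{1[0]})\|$ dominates, the resulting contradiction with localization, and exchangeability of $x_{\cT_{1[0]}}$ and $x'_{\cT_{1[0]}}$ to halve the probability. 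The paper likewise treats the zeroth-order case by observing that only the concentration inputs change (Lemmas~\ref{lm:zerogradconc}--\ref{lm:normsumboundzero} replacing their sub-Gaussian analogues), so your identification of the $2/3$-sub-exponential tail of $\zeta_t$ and the smoothing bias as the locus of new technical work is exactly where the paper's effort is concentrated; only minor bookkeeping constants in the final probability bounds differ.
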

\textbf{Finishing the proof:} By combining the above results, we prove Theorem~\ref{th:pgdmaintheorem}. The proof is divided in two parts -- in the first part we show that the function descends enough when the gradient is large and in the second part we show that the iterates do escape from the saddle points and then function has sufficient descent. 

\textbf{Choice of parameters for Zeroth-order case.} As the expressions involved in the analysis of the zeroth order case are little complicated, we show explicitly here how to choose the parameters. First define, 
\begin{align}\label{textxidef}
\textXi:=\frac{32\sqrt{d({\cT_{0}}+1)}\eta \beta({\cT_{0}}+1)(({\cT_{0}}+1)\log 9d+\log 2+l)^\frac{3}{2}}{K_1^{3/2}n_1}
\end{align}
The choice of the parameters should be such that the following equations are satisfied:
\begin{align*}
&\frac{384L_GC_0^2d(\rho-1)(d+1){\cT_{0}}^2({\cT_{0}}\log 9d +l)^3}{K_1^3n_1^2}\leq\frac{1}{16},\\
&\frac{8C_0\sqrt{(\rho-1)d(d+1){\cT_{0}}}}{K_1^\frac{3}{2}n_1}({\cT_{0}}\log 9d+l)^\frac{3}{2}\leq \frac{1}{16},\\
&\eta\cS_0{\cT_{0}}\max(L_H,L_G)\leq \frac{1}{l},  \quad
 c\leq \sqrt{l}/40,\\
&\textXi \cdot \sum_{i=0}^{{\cT_{0}}}\left(\frac{\nu L_G(d+2)}{2}+C_0\sqrt{(\rho-1)(d+1)}L\right)\leq \frac{\beta({\cT_{0}})r}{40\sqrt{d}},\\
&\frac{(1+\eta \gamma)^{\cT_{0}}\sqrt{\eta} r}{40\sqrt{2\gamma d}}>\cS_0 ,\quad
\wp(r,l,\nu,\eta,d,\cT_0)\leq 0.1\cF_0.
\end{align*}
Furthermore, we need to ensure the RHS of \eqref{eq:xtx0distnormboundzero} is of the same order of $\cS_0^2$.

\begin{proof}[Proof of Theorem~\ref{th:pgdmaintheorem}]
	\begin{enumerate}[label=\alph*)]
		\item	
		\begin{enumerate}[label=\arabic*.]
			\item First we look at the time instants where $\|\nabla_t\|\geq \ep$. If there are more than $\frac{T}{4}$ such time steps, then using Lemma~\ref{lm:descent} we have, with probability at least $1-4e^{-l}$
			\begin{align*}
			f(x_T)-f(x_0)&\leq -\frac{T\ep^2}{64\lge^2}+ 3cL_G\frac{\ep^3}{\lge^{10}}\left(\frac{0.5\lge^3}{\sqrt{\ep}}+\lge\right)+32c\frac{\ep^3}{\lge^7}\\
			&\leq -\frac{T\ep^2}{128\lge^2}
			\end{align*}
			Letting $T$ as in \eqref{eq:Tchoicefirst}, we get $f(x_T)\leq f(x_0)-T\ep^2/128\lge^2<f^*$ which is impossible. 
			\item As follows from Claim 2 in the proof of Theorem 16 of \cite{jin2019nonconvex}, we have, with probability at least $1-10d{\cT_{0}}^2T^2\log(\cS_1\sqrt{d}/(\eta r))e^{-l}$
			\begin{align*}
			f(x_T)-f(x_0)\leq -0.1\frac{T\cF_1}{{\cT_{1}}} 
			\end{align*}
			which implies $f(x_T)\leq f(x_0)-0.1T\cF_1/{\cT_{1}}<f^*$ which is impossible. 
		\end{enumerate}
		\item
		\begin{enumerate}[label=\arabic*.]
			\item First we look at the time instants where $\|\nabla_t\|\geq \ep$. If the parameters are chosen as in \eqref{eq:allparameterchoicezero}, ${\cT_{0}}=\kappa_3\frac{\lge^2\log\left(d\right)^2}{\sqrt{\ep}}$, and $l=\kappa_6\log\left(\frac{d(f(x_0)-f^*)}{\delta\ep}\right)$ then we have,
			\begin{align*}
			\wp(r,l,\nu,\eta,d,{\cT_{0}})=\order\left(\ep^{1.5}\right)
			\end{align*}
			If there are more than $\frac{T}{4}$ such time steps, then using Lemma~\ref{lm:descent} we have, with probability at least $1-4e^{-l}$
			\begin{align*}
			f(x_T)-f(x_0)\leq -\frac{\kappa_0T\ep^{2.5}}{64\lge}+ \order\left(\ep^{1.5}\right)
			\leq -\frac{\kappa_0T\ep^{2.5}}{128\lge}
			\end{align*}
			Letting $T$ as in \eqref{eq:Tchoicezero}, $\kappa_9\geq 128$, and $\kappa_0\kappa_3/\kappa_8\geq 128$ we get $f(x_T)\leq f(x_0)-\frac{\kappa_0T\ep^{2.5}}{128\lge}< f^*$ which is impossible. 
			\item As follows from Claim 2 in the proof of Theorem 16 of \cite{jin2019nonconvex}, we have, with probability at least $1-3{\cT_{0}}^2T^2e^{-l}$
			\begin{align*}
			f(x_T)-f(x_0)\leq -0.1\frac{T\cF_0}{{\cT_{0}}} 
			\end{align*}
			which implies $f(x_T)\leq f(x_0)-0.1T\cF_0/{\cT_{0}}<f^*$ when $\kappa_9\geq 128$ and $T$ is as in \eqref{eq:allparameterchoicezero}, which is impossible. 
		\end{enumerate}
	\end{enumerate}
\end{proof}

\begin{proof}[Proof of Theorem~\ref{th:pgdmaintheoremwosgc}]
	The proof of Theorem~\ref{th:pgdmaintheoremwosgc} is same as Theorem~\ref{th:pgdmaintheorem} except for the concentration properties of $\|\zeta_t\|$. In this case we have $\|\zeta_t\|$ to be $\alpha$-sub-exponential with coefficient $(\Upsilon_t\sqrt{d}/n_1)^{2/3}$ where $$\Upsilon_t=\frac{\nu L_G(d+2)}{2}+C_0(\sigma+\|\nabla f(x_t)\|)\sqrt{d+1}.$$ So there is an extra term $C_0\sigma\sqrt{d+1}$ which can neither be made smaller using $\nu$ nor is of the same order as $\nabla f(x_t)$ so that it can be subsumed in other terms involving $\nabla f(x_t)$. Hence, the only way to make the coefficient smaller, which is essential in the proof, is to increase $n_1$. This is main reason why the rate deteriorates in the absence if SGC. For the sake of completeness, we provide below the set of conditions that need to be satisfied to pick the parameters in this setting, below.

\textbf{Choice of parameters for Zeroth-order case when SGC does not hold.} When SGC does not hold in the zeroth-order setting the conditions to be satisfied are:
\begin{align*}
&\frac{384L_GC_0^2d(\rho-1)(d+1){\cT_{0}}^2({\cT_{0}}\log 9d +l)^3}{K_1^3n_1^2}\leq\frac{\ep^2}{16},\\
&\frac{8C_0\sqrt{(\rho-1)d(d+1){\cT_{0}}}}{K_1^\frac{3}{2}n_1}({\cT_{0}}\log 9d+l)^\frac{3}{2}\leq \frac{\ep}{16},\\
&\eta\cS_0{\cT_{0}}\max(L_H,L_G)\leq \frac{1}{l},  \quad
c\leq \sqrt{l}/40,\\
&\textXi \cdot \sum_{i=0}^{{\cT_{0}}}\left(\frac{\nu L_G(d+2)}{2}+C_0\sqrt{(\rho-1)(d+1)}L\right)\leq \frac{\beta({\cT_{0}})r}{40\sqrt{d}},\\
&\frac{(1+\eta \gamma)^{\cT_{0}}\sqrt{\eta} r}{40\sqrt{2\gamma d}}>\cS_0 ,\quad
\wp(r,l,\nu,\eta,d,\cT_0)\leq 0.1\cF_0.
\end{align*}
Furthermore, we need to ensure the RHS of \eqref{eq:xtx0distnormboundzero} is of the same order of $\cS_0^2$.
\end{proof}

\subsection{Proofs of Lemmas related to Perturbed Stochastic Gradient Descent}
\begin{assumption}\cite{jin2019nonconvex}\label{as:subgauss1}
	Consider random vectors $X_1,X_2,\cdots,X_n \in \mathbb{R}^d$, and the corresponding filtrations $\calF_i=\sigma(X_1,X_2,\cdots,X_i)$ for $i=1,2,\cdots,n$, such that $X_i|\calF_{i-1}$ is zero-mean nSG$(\sigma_i)$ with $\sigma_i\in \calF_{i-1}$. That is,
	\begin{align*}
	\expec{X_i|\calF_{i-1}}=0, \quad P(\|X_i\|\geq t|\calF_{i-1})\leq e^{-{\frac{t^2}{2\sigma_i^2}}}, \quad \forall t\in \mathbb{R},\forall i=1,2,\cdots,n.
	\end{align*}
\end{assumption}
\begin{lemma}\cite{jin2019nonconvex} \label{lm:innerprodsumbound}
	Let $X_1,X_2,\cdots,X_n \in \mathbb{R}^d$ satisfy Assumption~\ref{as:subgauss}. $u_i\in \calF_{i-1}$ be a random vector for $i=1,2,\cdots,n$. Then for any $l>0,\ \lambda>0$, there exists absolute constant $c$ such that, with probability at least $1-e^{-l}$:
	\begin{align*}
	\sum_i u_i^\top X_i\leq c\lambda\sum_i \|u_i\|^2\sigma_i^2+\frac{l}{\lambda}
	\end{align*}
\end{lemma}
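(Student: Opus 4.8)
The plan is the standard self-normalized exponential-supermartingale argument (as in \cite{jin2019nonconvex}). First I would reduce the vector inequality to a scalar martingale problem: fix $i$ and condition on $\calF_{i-1}$. Since $u_i\in\calF_{i-1}$ and $\sigma_i\in\calF_{i-1}$, the scalar $Y_i\vcentcolon=u_i^\top X_i$ satisfies $\expec{Y_i\mid\calF_{i-1}}=u_i^\top\expec{X_i\mid\calF_{i-1}}=0$, and from $|u_i^\top X_i|\le\|u_i\|\,\|X_i\|$ together with the conditional norm-subGaussian tail of $X_i\mid\calF_{i-1}$ (Assumption~\ref{as:subgauss1}) we get the conditional tail bound $\mathbb{P}(|Y_i|\ge t\mid\calF_{i-1})\le\exp(-t^2/(2\|u_i\|^2\sigma_i^2))$, whose parameter $\|u_i\|^2\sigma_i^2$ is $\calF_{i-1}$-measurable. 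Applying conditionally the standard fact that a mean-zero variable with subGaussian tail parameter $s^2$ has moment generating function bounded by $\exp(Cs^2\mu^2)$ for an absolute constant $C$ and all $\mu\in\mathbb{R}$ then yields $\expec{\exp(\mu Y_i)\mid\calF_{i-1}}\le\exp(C\mu^2\|u_i\|^2\sigma_i^2)$.

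Next I would build the exponential supermartingale. Put $W_0=1$ and
\[
W_t=\exp\!\Big(\mu\sum_{i=1}^{t}Y_i-C\mu^2\sum_{i=1}^{t}\|u_i\|^2\sigma_i^2\Big),\qquad t=1,\dots,n.
\]
Because the compensator $C\mu^2\|u_t\|^2\sigma_t^2$ is $\calF_{t-1}$-measurable, the conditional MGF bound gives $\expec{W_t\mid\calF_{t-1}}\le W_{t-1}$, so $(W_t)_{t\le n}$ is a nonnegative supermartingale with $\expec{W_n}\le\expec{W_0}=1$. By Markov's inequality $\mathbb{P}(W_n\ge e^{l})\le e^{-l}$, so with probability at least $1-e^{-l}$ we have $\mu\sum_{i=1}^n Y_i-C\mu^2\sum_{i=1}^n\|u_i\|^2\sigma_i^2\le l$. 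Since $\mu>0$, dividing by $\mu$, then choosing $\mu=\lambda$ and writing $c$ for $C$, gives exactly $\sum_i u_i^\top X_i\le c\lambda\sum_i\|u_i\|^2\sigma_i^2+l/\lambda$.

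I expect the one step needing genuine care is the passage from the norm-subGaussian tail of the vector $X_i$ to the conditional MGF bound for the scalar projection $Y_i$: one must verify it costs only an absolute constant --- in particular nothing depending on the dimension $d$, which holds precisely because $|u_i^\top X_i|\le\|u_i\|\,\|X_i\|$ is a dimension-free bound --- and that the mean-zero property is genuinely used, so that the MGF estimate is valid for all $\mu\in\mathbb{R}$ and hence the conclusion holds for every $\lambda>0$ rather than only for $\lambda$ in a bounded range. Everything else (the supermartingale property, the maximal inequality, the rearrangement) is routine. If one prefers not to cite the tail-to-MGF fact, the same estimate follows by expanding $\expec{e^{\mu Y_i}\mid\calF_{i-1}}=1+\sum_{k\ge2}\mu^k\expec{Y_i^k\mid\calF_{i-1}}/k!$ (the $k=1$ term vanishing by centering), bounding $\expec{|Y_i|^k\mid\calF_{i-1}}\le k\,2^{k/2}\Gamma(k/2)(\|u_i\|\sigma_i)^k$ via the tail, and resumming --- a routine computation I would defer.
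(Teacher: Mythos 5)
The paper states this lemma as a direct citation to \cite{jin2019nonconvex} and gives no proof of its own, so there is nothing internal to compare against; your reconstruction is the standard exponential-supermartingale (Chernoff-for-martingales) argument, which is precisely how the cited source proves it. Your proof is correct --- in particular you rightly use the mean-zero property of $Y_i=u_i^\top X_i$ so the conditional MGF bound $\expec{e^{\mu Y_i}\mid\calF_{i-1}}\le e^{C\mu^2\|u_i\|^2\sigma_i^2}$ holds for all $\mu$ with a dimension-free constant, you rightly note the compensator is $\calF_{i-1}$-measurable so the supermartingale property holds step by step, and you correctly identify that the relevant hypothesis is the nSG condition of Assumption~\ref{as:subgauss1} (the statement's reference to Assumption~\ref{as:subgauss} appears to be a typo in the paper).
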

\begin{lemma}\cite{jin2019nonconvex} \label{lm:normsumbound}
	Let $X_1,X_2,\cdots,X_n \in \mathbb{R}^d$ satisfy Assumption~\ref{as:subgauss} with $\sigma_1=\sigma_2=\cdots=\sigma_n=\sigma$. Then for any $l>0,\ \lambda>0$, there exists absolute constant $c$ such that, with probability at least $1-e^{-l}$:
	\begin{align*}
	\sum_i \| X_i\|^2\leq c\sigma^2(n+l)
	\end{align*}
\end{lemma}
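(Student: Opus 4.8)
The plan is to read this as a one-sided concentration bound for the sum of the conditionally heavy-tailed scalars $Y_i:=\|X_i\|^2$, proved by a martingale-flavoured exponential-moment (Chernoff) argument; no inequality beyond Markov is needed. Note that the parameter $\lambda$ plays no role in the stated conclusion — it is a vestige of the more general template — so one simply fixes a convenient value at the Chernoff step. Throughout I use that $X_i\mid\calF_{i-1}$ is zero-mean norm-subGaussian with the \emph{same} deterministic parameter $\sigma$, i.e. $\mathbb{P}(\|X_i\|\geq t\mid\calF_{i-1})\leq e^{-t^2/(2\sigma^2)}$ for all $t$.

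\textbf{Step 1 (conditional exponential moment for $Y_i$).} From the tail bound, $\mathbb{P}(Y_i\geq u\mid\calF_{i-1})\leq e^{-u/(2\sigma^2)}$. Integrating the layer-cake identity, $\expec{e^{Y_i/(4\sigma^2)}\mid\calF_{i-1}}=1+\int_1^\infty \mathbb{P}\bigl(Y_i> 4\sigma^2\ln s\mid\calF_{i-1}\bigr)\,ds\leq 1+\int_1^\infty s^{-2}\,ds=2$. Thus $\expec{e^{Y_i/(4\sigma^2)}\mid\calF_{i-1}}\leq 2$ uniformly in $i$; any exponent strictly larger than $2\sigma^2$ in the denominator works, and $4\sigma^2$ is just convenient.

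\textbf{Step 2 (unroll the tower property).} Set $\lambda_0=1/(4\sigma^2)$ and $S_n=\sum_{i=1}^n Y_i$. Conditioning on $\calF_{n-1}$ and using Step 1, $\expec{e^{\lambda_0 S_n}}=\expec{e^{\lambda_0 S_{n-1}}\,\expec{e^{\lambda_0 Y_n}\mid\calF_{n-1}}}\leq 2\,\expec{e^{\lambda_0 S_{n-1}}}$, so inductively $\expec{e^{\lambda_0 S_n}}\leq 2^{n}$.

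\textbf{Step 3 (Chernoff and choice of threshold).} By Markov, $\mathbb{P}(S_n\geq t)\leq e^{-\lambda_0 t}\,\expec{e^{\lambda_0 S_n}}\leq \exp\!\bigl(-t/(4\sigma^2)+n\ln 2\bigr)$. Taking $t=4\sigma^2(n\ln 2+l)\leq 4\ln 2\cdot\sigma^2(n+l)$ makes the right-hand side at most $e^{-l}$. Hence, with the absolute constant $c=4\ln 2$ (any $c\geq 4\ln 2$ works), $\sum_i\|X_i\|^2\leq c\sigma^2(n+l)$ with probability at least $1-e^{-l}$, which is the claim.

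The argument is essentially routine; the only points requiring care are (i) choosing the exponent constant in Step 1 so that the tail integral converges, and (ii) the conditioning bookkeeping in Step 2 — the unrolling crucially uses that the conditional MGF bound of Step 1 is the same deterministic number for every index, which is exactly the hypothesis $\sigma_1=\cdots=\sigma_n=\sigma$. (If one instead wanted the $\sigma_i$ to vary, one would carry a product $\prod_i$ of the per-step MGF bounds and the statement would change accordingly.)
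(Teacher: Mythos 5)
Your proof is correct and is the standard sub-exponential Chernoff argument (conditional MGF bound via the nSG tail, tower-property unrolling, Markov). The paper does not reprove this lemma — it is cited verbatim from \cite{jin2019nonconvex} — but your derivation is exactly the canonical route, and you correctly read the reference in the statement as pointing to the nested sub-Gaussian condition (Assumption~\ref{as:subgauss1}) rather than the unrelated Assumption~\ref{as:subgauss}; you also rightly flag the unused $\lambda$ as a vestige. One small arithmetic slip at the very end: the inequality $4\sigma^2(n\ln 2 + l)\leq 4\ln 2\cdot\sigma^2(n+l)$ is false (it would require $l\leq (\ln 2)\,l$); the correct bound uses $\ln 2<1$ to get $4\sigma^2(n\ln 2+l)\leq 4\sigma^2(n+l)$, so the absolute constant should be $c=4$ rather than $c=4\ln 2$. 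This does not affect the validity of the argument, since the lemma only asserts existence of some absolute constant.
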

\begin{lemma}\cite{jin2019nonconvex} \label{lm:sumnormbound}
	Let $X_1,X_2,\cdots,X_n \in \mathbb{R}^d$ satisfy Assumption~\ref{as:subgauss} with fixed $\{\sigma_i\}$ then for any $l>0$, there exists an aboslute coonstant $c$ such that, with probability at least $1-2de^{-l}$:
	\begin{align*}
	\|\sum_{i=1}^nX_i\|\leq c \sqrt{\sum_{i=1}^n\sigma_i^2l}
	\end{align*}
\end{lemma}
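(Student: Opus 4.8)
This is the norm‑sub‑Gaussian Hoeffding‑type inequality of~\cite{jin2019nonconvex}; the hypothesis actually used is the martingale‑difference condition of Assumption~\ref{as:subgauss1}, under which each increment $X_i$ is conditionally mean‑zero with $\mathbb{P}(\|X_i\|\ge t\mid\calF_{i-1})\le e^{-t^2/(2\sigma_i^2)}$. Write $Y_t:=\sum_{i\le t}X_i$. The first step I would record is the scalar consequence: for any fixed unit vector $u$, the sequence $\ve{u}{X_i}$ is a scalar martingale difference with $|\ve{u}{X_i}|\le\|X_i\|$, hence conditionally sub‑Gaussian with proxy $c\sigma_i$, so Azuma's inequality for sub‑Gaussian martingale differences gives $\mathbb{P}(\ve{u}{Y_n}\ge t)\le\exp\!\big(-t^2/(2c\sum_i\sigma_i^2)\big)$. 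Covering $S^{d-1}$ by a $1/2$‑net $\calN$ with $|\calN|\le 5^d$ and using $\|Y_n\|\le 2\max_{u\in\calN}\ve{u}{Y_n}$, a union bound already yields $\|Y_n\|\le c'\sqrt{\sum_i\sigma_i^2\,(l+d)}$ with probability at least $1-e^{-l}$ — a version of the lemma, but with a full factor $d$ inside the root instead of the claimed $\sqrt{l}$.

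To obtain the sharp bound I would abandon the net, since it is genuinely lossy: a Gaussian $\mathcal{N}(0,\sigma^2 I_d)$ meets the same per‑direction sub‑Gaussian bound yet has norm $\sigma\sqrt d$, so only the norm‑sub‑Gaussianity of the individual increments (not the aggregate sub‑Gaussianity of $Y_n$) can remove the $\sqrt d$. The route is a dimension‑free exponential supermartingale: set $M_t:=\cosh(\lambda\|Y_t\|)\exp\!\big(-c_2\lambda^2\sum_{i\le t}\sigma_i^2\big)$ for $0<\lambda\le c_1/\max_i\sigma_i$, and prove the per‑step estimate $\mathbb{E}[\cosh(\lambda\|Y_t\|)\mid\calF_{t-1}]\le\cosh(\lambda\|Y_{t-1}\|)\,e^{c_2\lambda^2\sigma_t^2}$. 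Here one expands $\cosh(\lambda\|Y_{t-1}+X_t\|)$ to second order in $X_t$ and uses $\mathbb{E}[X_t\mid\calF_{t-1}]=0$ to annihilate the first‑order term, leaving only an $O(\lambda^2\sigma_t^2)$ remainder with no dependence on $d$ or on $\|Y_{t-1}\|$ (the case $Y_{t-1}=0$ is immediate from the nSG tail, since $\mathbb{E}[\cosh(\lambda\|X_t\|)\mid\calF_{t-1}]\le e^{c_2\lambda^2\sigma_t^2}$). Then $M_t$ is a supermartingale, so $\mathbb{E}[\cosh(\lambda\|Y_n\|)]\le\exp\!\big(c_2\lambda^2\sum_i\sigma_i^2\big)$, and a Chernoff bound on $\|Y_n\|$ with optimization over $\lambda$ gives $\mathbb{P}\!\big(\|Y_n\|\ge c\sqrt{\sum_i\sigma_i^2\,l'}\big)\le 2e^{-l'}$ for every $l'>0$; taking $l'=l+\log(2d)$ and noting the asserted bound is vacuous unless $l\ge\log(2d)$ (so that $l'\le 2l$, the extra logarithm absorbed into $c$) yields the statement with probability at least $1-2de^{-l}$.

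The main obstacle is exactly that dimension‑free per‑step inequality: controlling $\cosh(\lambda\|Y_{t-1}+X_t\|)$ in terms of $\cosh(\lambda\|Y_{t-1}\|)$ with only an $O(\lambda^2\sigma_t^2)$ multiplicative correction, uniformly in the dimension, is where the conditional mean‑zero property is genuinely used and where the sub‑Gaussian (rather than merely bounded) tail of $X_t$ must be absorbed — either through a smoothed $\cosh$‑type potential in the spirit of Pinelis's Hilbert‑space martingale inequalities, or by truncating each $X_i$ at level $O(\sigma_i\sqrt{\log(nd/\delta)})$, applying a dimension‑free bounded‑increment Azuma bound to the re‑centered truncated martingale, and bounding the exponentially small discarded drift (the truncation route costs an extra logarithmic factor, immaterial for how the lemma is used downstream). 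Everything else — the scalar Azuma step, the union bound, and the Chernoff optimization — is routine.
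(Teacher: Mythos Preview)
The paper does not prove this lemma itself; it is quoted from \cite{jin2019nonconvex}. Your Pinelis-style $\cosh$-supermartingale route is correct and in fact delivers the dimension-free tail $\mathbb{P}\big(\|\sum_i X_i\|\ge t\big)\le 2e^{-t^2/(c\sum_i\sigma_i^2)}$, strictly stronger than what is stated; you then recover the stated form by noting it is vacuous for $l<\log(2d)$. The argument in \cite{jin2019nonconvex} takes a different and more direct path: each $X_i\in\mathbb{R}^d$ is lifted to the Hermitian dilation $A_i=\bigl(\begin{smallmatrix}0 & X_i^\top\\ X_i & 0\end{smallmatrix}\bigr)\in\mathbb{R}^{(d+1)\times(d+1)}$, for which $\|A_i\|_{\mathrm{op}}=\|X_i\|$ and $\bigl\|\sum_i A_i\bigr\|_{\mathrm{op}}=\bigl\|\sum_i X_i\bigr\|$, and then a matrix Azuma inequality for sub-Gaussian Hermitian martingale increments is invoked. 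The factor $2d$ in the failure probability arises there organically from the trace bound $\mathrm{tr}\,e^{\lambda\sum_i A_i}\le(d+1)\bigl\|e^{\lambda\sum_i A_i}\bigr\|_{\mathrm{op}}$, and no truncation or delicate per-step potential estimate is needed. Your route buys a genuinely sharper inequality at the cost of the per-step $\cosh$ bound you rightly flag as the main obstacle (which, for sub-Gaussian rather than bounded increments, does require the truncation-and-recentering you sketch, with its attendant extra log factor); the matrix-dilation route buys a one-line reduction to off-the-shelf matrix concentration at the cost of the benign $\log d$ inside the final deviation.
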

Let $F_{\nu}(x,\xi)=\mathbf{E}_u\left[F(x+\nu u,\xi)\right]$, and $g_{t,i}^j$, and $\nabla F_\nu(x_t,\xi_i)^j$ denote the $j$-th coordinate of the vector $g_{t,i}=\frac{F(x_t+\nu u_i,\xi_i)-F(x_t,\xi_i)}{\nu}u_i$, and $\nabla F_\nu(x_t,\xi_i)$ respectively.
\begin{lemma}\cite{nesterov2017random}\label{lm:gradnutostochgraderror} Let Assumption~\ref{as:lipgrad} be true for $F$. Then
	\begin{align*}
	\|\nabla F_\nu(x,\xi)-\nabla F(x,\xi)\|\leq \frac{\nu}{2}L_G(d+3)^\frac{3}{2}
	\end{align*}
\end{lemma}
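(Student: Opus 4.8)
The statement is the classical Gaussian-smoothing bound of~\cite{nesterov2017random}, so the plan is simply to reproduce that argument applied to $F(\cdot,\xi)$ with $\xi$ held fixed. First I would establish the representation of the smoothed gradient
\[
\nabla F_\nu(x,\xi) \;=\; \mathbf{E}_u\!\left[\frac{F(x+\nu u,\xi)-F(x,\xi)}{\nu}\,u\right],\qquad u\sim\mathcal{N}(\mathbf{0},\mathbf{I}_d),
\]
which follows by writing $F_\nu(x,\xi)=(2\pi\nu^2)^{-d/2}\int F(v,\xi)\,e^{-\norm{v-x}^2/(2\nu^2)}\,dv$, differentiating under the integral sign (justified since Assumption~\ref{as:lipgrad} forces $F(\cdot,\xi)$ to grow at most quadratically, so the differentiated integrand is dominated in a neighbourhood of $x$), changing variables back to $u$, and dropping the $F(x,\xi)$ term because $\mathbf{E}_u[u]=\mathbf{0}$.

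Next, using $\mathbf{E}_u[u]=\mathbf{0}$ and $\mathbf{E}_u[uu^\top]=\mathbf{I}_d$ we have $\nabla F(x,\xi)=\mathbf{E}_u[\langle\nabla F(x,\xi),u\rangle\,u]$, and subtracting gives
\[
\nabla F_\nu(x,\xi)-\nabla F(x,\xi)=\mathbf{E}_u\!\left[\frac{F(x+\nu u,\xi)-F(x,\xi)-\nu\langle\nabla F(x,\xi),u\rangle}{\nu}\,u\right].
\]
I would then take norms, move $\norm{\cdot}$ inside the expectation, and invoke the descent-lemma consequence of Assumption~\ref{as:lipgrad} with $y=x+\nu u$, namely $\abs{F(x+\nu u,\xi)-F(x,\xi)-\nu\langle\nabla F(x,\xi),u\rangle}\le\frac{L_G}{2}\nu^2\norm{u}^2$. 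This bounds the right-hand side by $\tfrac{L_G\nu}{2}\,\mathbf{E}_u[\norm{u}^3]$.

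Finally I would apply the elementary Gaussian moment bound $\mathbf{E}_u[\norm{u}^3]\le(d+3)^{3/2}$: for $u\sim\mathcal{N}(\mathbf{0},\mathbf{I}_d)$ one has $\mathbf{E}_u[\norm{u}^4]=d(d+2)$ (the second moment of a $\chi^2_d$ variable), so by Jensen $\mathbf{E}_u[\norm{u}^3]\le(\mathbf{E}_u[\norm{u}^4])^{3/4}=(d(d+2))^{3/4}\le(d+3)^{3/2}$. Combining with the previous display yields $\norm{\nabla F_\nu(x,\xi)-\nabla F(x,\xi)}\le\tfrac{\nu}{2}L_G(d+3)^{3/2}$, which is exactly the claim. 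There is no genuine obstacle here, since the result is essentially a restatement of a known lemma; the only two points warranting a sentence of care are the interchange of differentiation and integration in the first step and the $\chi^2$ moment estimate in the last step.
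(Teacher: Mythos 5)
Your reconstruction is correct and follows the standard Nesterov--Spokoiny argument that the paper cites without reproducing: represent the smoothed gradient as a Gaussian expectation, subtract the identity $\nabla F(x,\xi)=\mathbf{E}_u[\langle\nabla F(x,\xi),u\rangle u]$, bound the integrand by the descent-lemma consequence of Assumption~\ref{as:lipgrad}, and close with the third Gaussian moment estimate $\mathbf{E}_u[\|u\|^3]\le(d+3)^{3/2}$ (which you re-derive from the $\chi^2_d$ fourth moment and Jensen, a valid route to what the paper would otherwise pull from its Lemma~\ref{lm:gaussnormbound}). No gaps.
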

\begin{lemma}\label{lm:gaussnormbound}\cite{nesterov2017random} 
	For a Gaussian random vector $u\sim N(0,I_d)$, we have
	\begin{align*}
	\expec{\|u\|^k}\leq (d+k)^\frac{k}{2}
	\end{align*}
\end{lemma}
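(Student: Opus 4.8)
The plan is to reduce the statement to a moment computation for the chi-squared distribution. Writing $R=\|u\|$, note that $R^2=\sum_{i=1}^d u_i^2$ has a $\chi^2_d$ distribution, so $\expec{\|u\|^k}=\expec{(R^2)^{k/2}}$. The key elementary fact I would establish first is the recursion $\expec{\|u\|^{k+2}}=(d+k)\,\expec{\|u\|^k}$; this follows from integration by parts against the standard Gaussian density $\phi$ (using $\nabla\phi(u)=-u\,\phi(u)$ together with $\nabla\cdot(\|u\|^k u)=(d+k)\|u\|^k$), or equivalently from the exact identity $\expec{\|u\|^k}=2^{k/2}\,\Gamma(\tfrac{d+k}{2})/\Gamma(\tfrac d2)$ obtained by passing to polar coordinates in the Gaussian integral.

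Given this recursion, I would prove the claimed bound by induction in steps of two in $k$. For the base range $k\in(0,2]$ (in particular $k=1,2$), concavity of $t\mapsto t^{k/2}$ and Jensen's inequality give $\expec{\|u\|^k}=\expec{(R^2)^{k/2}}\le(\expec{R^2})^{k/2}=d^{k/2}\le(d+k)^{k/2}$. For the inductive step, assuming $\expec{\|u\|^k}\le(d+k)^{k/2}$, the recursion yields $\expec{\|u\|^{k+2}}=(d+k)\,\expec{\|u\|^k}\le(d+k)^{k/2+1}\le(d+k+2)^{(k+2)/2}$, which closes the induction.

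As an alternative that bypasses the recursion entirely, one can bound the Gamma ratio directly: for $x>0$ and $a\ge0$ one has $\Gamma(x+a)/\Gamma(x)\le(x+a)^a$, which I would prove by writing $a=n+b$ with $n\in\mathbb{Z}_{\ge0}$ and $b\in[0,1)$, using log-convexity of $\log\Gamma$ on $[x+n,\,x+n+1]$ to handle the fractional part and the telescoping product $x(x+1)\cdots(x+n-1)\le(x+n)^n$ for the integer part. Applying this with $x=d/2$ and $a=k/2$ gives $\expec{\|u\|^k}=2^{k/2}\,\Gamma(\tfrac{d+k}{2})/\Gamma(\tfrac d2)\le2^{k/2}\big(\tfrac{d+k}{2}\big)^{k/2}=(d+k)^{k/2}$.

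The only place where genuine care is needed is the regime $k>2$, where Jensen's inequality runs in the wrong direction and cannot be used to pass from $\expec{R^2}=d$ to a bound on $\expec{(R^2)^{k/2}}$; the recursion $\expec{\|u\|^{k+2}}=(d+k)\,\expec{\|u\|^k}$ (equivalently, the elementary Gamma-ratio estimate above) is what carries the argument there, so I expect that to be the main — though still routine — obstacle. Everything else is straightforward bookkeeping.
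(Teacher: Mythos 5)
The paper states this bound as a cited fact from \cite{nesterov2017random} and provides no proof of its own, so there is no in-paper argument to compare against. Your proof is correct and reproduces the standard derivation used in that reference: the $\chi^2$-moment identity $\expec{\|u\|^k}=2^{k/2}\Gamma(\tfrac{d+k}{2})/\Gamma(\tfrac{d}{2})$ (or, equivalently, the recursion $\expec{\|u\|^{k+2}}=(d+k)\,\expec{\|u\|^k}$ obtained by Gaussian integration by parts), the Jensen base case for $k\le 2$, and the two-step induction $(d+k)^{k/2+1}\le(d+k+2)^{(k+2)/2}$; the Gamma-ratio alternative $\Gamma(x+a)/\Gamma(x)\le(x+a)^a$ is likewise valid via log-convexity of $\Gamma$. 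Your observation that Jensen alone fails for $k>2$ and that the recursion/Gamma bound is the real engine of the argument is exactly right.
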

\begin{lemma}\cite{shen2019nonasymptotic}\label{lm:prodconcentration}
	Let $(X_i,Y_i)$, $i = 1,2,\cdots,n$ be $n$ independent copies of random variables $X$ and $Y$. Let $X$ be a sub-Gaussian random variable with sub-gaussian norm $\|X\|_{\psi_2} \leq \Upsilon_1$, and $Y$ be a sub-exponential random variable with sub-exponential norm $\|Y\|_{\psi_1} \leq \Upsilon_2$ for some constants $\Upsilon_1$ and $\Upsilon_2$. Then for any $t\geq K\max\left(\Upsilon_1,\Upsilon_1^3\right)\Upsilon_2$
	we have
	\begin{align*}
	\mathbb{P}\left(\abs{\sum_{i=1}^{n}{X_iY_i}-\expec{XY}}\geq t\right)\leq 4\exp\left(-K_1\min\left[\left(\frac{t}{\sqrt{n}\Upsilon_1\Upsilon_2}\right)^2,\left(\frac{t}{\Upsilon_1\Upsilon_2}\right)^{2/3}\right]\right)
	\end{align*}
	where $K$ and $K_1$ are absolute constants.
\end{lemma}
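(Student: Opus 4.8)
The statement is a Bernstein-type concentration inequality for sums of products $X_iY_i$, where one factor is sub-Gaussian and the other sub-exponential. My plan is to reduce it to the standard Bernstein inequality for sums of sub-exponential random variables (the split into a sub-Gaussian quadratic tail and a sub-exponential linear tail in the conclusion is exactly the signature of such a bound). First I would observe that for each $i$, the product $Z_i \vcentcolon= X_iY_i - \expec{XY}$ is a centered random variable, and that the $Z_i$ are i.i.d. (since $(X_i,Y_i)$ are independent copies). The key structural fact is that $X_iY_i$ is itself sub-exponential: by the product lemma for Orlicz norms, $\|X_iY_i\|_{\psi_1} \le \|X_i\|_{\psi_2}\|Y_i\|_{\psi_2}$ when both factors are sub-Gaussian, but here $Y$ is only sub-exponential, so instead I would use the generalized Hölder-type bound for $\psi_\alpha$ norms: if $\|X\|_{\psi_2}\le\Upsilon_1$ and $\|Y\|_{\psi_1}\le\Upsilon_2$, then $\|XY\|_{\psi_{2/3}}\le c\,\Upsilon_1\Upsilon_2$, since $1/(2/3) = 1/2 + 1/1$. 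Thus $Z_i$ is a centered $\psi_{2/3}$ random variable with $\psi_{2/3}$-norm of order $\Upsilon_1\Upsilon_2$.

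Next I would invoke the general Bernstein-type inequality for sums of independent centered $\psi_\alpha$ random variables with $\alpha\in(0,1]$ (see, e.g., the deviation bounds in Götze--Sambale--Sinulis or the classical treatment in van de Geer): for such variables with $\|Z_i\|_{\psi_\alpha}\le\Lambda$, one has
\begin{align*}
\mathbb{P}\left(\Big|\sum_{i=1}^n Z_i\Big| \ge t\right) \le 2\exp\left(-c\min\left[\frac{t^2}{n\Lambda^2},\left(\frac{t}{\Lambda}\right)^\alpha\right]\right).
\end{align*}
Specializing to $\alpha = 2/3$ and $\Lambda \asymp \Upsilon_1\Upsilon_2$ yields precisely the claimed bound, with the Gaussian regime $t^2/(n\Upsilon_1^2\Upsilon_2^2)$ dominating for moderate $t$ and the $(t/(\Upsilon_1\Upsilon_2))^{2/3}$ regime taking over in the far tail. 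The threshold condition $t \ge K\max(\Upsilon_1,\Upsilon_1^3)\Upsilon_2$ is needed to ensure the sub-exponential variance proxy (the "$n\Lambda^2$" term) is controlled uniformly — more precisely, to absorb the mean-correction term $\expec{XY}$ and to guarantee that the small-deviation Gaussian regime is the operative one at the stated scale; I would verify this by bounding $|\expec{XY}| \le \|X\|_{\psi_2}\|Y\|_{\psi_1} \lesssim \Upsilon_1\Upsilon_2$ and checking it is dominated by $t$ under the threshold.

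The main obstacle I anticipate is making the $\psi_{2/3}$-norm product bound fully rigorous and tracking the absolute constants, since $\psi_\alpha$ for $\alpha<1$ is only a quasi-norm (the triangle inequality holds up to a constant depending on $\alpha$), so the passage from "$X$ sub-Gaussian, $Y$ sub-exponential" to "$XY$ is $\psi_{2/3}$" and then to the Bernstein bound for the sum requires care with these quasi-norm constants — but since the lemma only claims the existence of absolute constants $K, K_1$, this is a matter of bookkeeping rather than a genuine difficulty. A secondary point to handle cleanly is that the lemma is stated for the product of copies of a single pair $(X,Y)$ with a joint distribution, so no independence between $X$ and $Y$ is assumed; this is fine because the argument only uses independence \emph{across} $i$ and the marginal tail behavior of the product $XY$, both of which are available.
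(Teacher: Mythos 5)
The paper does not prove Lemma~\ref{lm:prodconcentration}: it is stated with a citation to \cite{shen2019nonasymptotic} and used as a black box in the proof of Lemma~\ref{lm:zerogradconc}, so there is no in-paper argument to compare against. Your reconstruction is nonetheless the standard way such a bound is obtained, and it matches the strategy in the cited reference: pass from the pair $(X,Y)$ to the single product via the Orlicz--H\"older bound $\|XY\|_{\psi_{2/3}}\lesssim \|X\|_{\psi_2}\|Y\|_{\psi_1}$ (using $\tfrac{1}{2}+\tfrac{1}{1}=\tfrac{3}{2}$), then invoke a Bernstein-type deviation inequality for sums of independent centered $\psi_\alpha$ variables with $\alpha\in(0,1]$, which yields the $\min\{t^2/(n\Lambda^2),(t/\Lambda)^\alpha\}$ exponent specialized at $\alpha=2/3$, $\Lambda\asymp\Upsilon_1\Upsilon_2$. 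Your remark that $\psi_{2/3}$ is only a quasi-norm, and that constants must be tracked accordingly, is exactly the technical point that makes the cited reference nontrivial; since the lemma only asserts existence of absolute constants $K,K_1$, this bookkeeping is harmless. The one place your sketch stays vague is the origin of the threshold $t\geq K\max(\Upsilon_1,\Upsilon_1^3)\Upsilon_2$: this is not just about absorbing $\expec{XY}$ but arises in \cite{shen2019nonasymptotic} from matching the moment-generating-function domain for the heavy ($\psi_{2/3}$) tail with the variance term; your plan to verify it would need to engage with that, but it does not affect the correctness of the overall approach.
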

\begin{proof}[Proof of Lemma~\ref{lm:zerogradconc}]
	Let us write $g_{t,i}=\phi(\nu,u_i,\xi_i)u_i$ where $\phi(\nu,u_i,\xi_i)=\frac{F(x_t+\nu u_i,\xi_i)-F(x_t,\xi_i)}{\nu}$.
	We will show that $\phi(\nu,u_i,\xi_i)$ is a sub-exponential random variable by showing that its sub-exponential norm or $\psi_1$-norm, defined as $\|.\|_{\psi_1}=\sup_{p\geq 1}p^{-1}\expec{\abs{.}^p}^{p^{-1}}$, is finite. 
	\begin{align*}
	\|\phi(\nu,u_i,\xi_i)\|_{\psi_1}=\sup_{p\geq 1}\frac{1}{p}\expec{\abs{\phi(\nu,u_i,\xi_i)}^p}^\frac{1}{p}
	=\sup_{p\geq 1}\frac{1}{p}\mathbf{E}_{\xi_i}\left[\mathbf{E}_{u_i}\left[\abs{\phi(\nu,u_i,\xi_i)}^p\right]\right]^\frac{1}{p} \numberthis\label{eq:iteratedexpecsubexp}
	\end{align*}
	We first concentrate on the term $\mathbf{E}_{u_i}\left[\abs{\phi(\nu,u_i,\xi_i)}^p\right]$.
	\begin{align*}
	\mathbf{E}_{u_i}\left[\abs{\phi(\nu,u_i,\xi_i)}^p\right]=\mathbf{E}_{u_i}\left[\left\lvert\frac{F(x_t+\nu u_i,\xi_i)-F(x_t,\xi_i)-\nu\nabla F(x_t,\xi_i)^\top u_i}{\nu}+ \nabla F(x_t,\xi_i)^\top u_i\right\rvert^p\right]  
	\end{align*}
	By Minkowski's inequality, 
	\begin{align*}
	&\mathbf{E}_{u_i}\left[\abs{\phi(\nu,u_i,\xi_i)}^p\right]\\
	\leq& \left[\mathbf{E}_{u_i}\left[\left\lvert\frac{F(x_t+\nu u_i,\xi_i)-F(x_t,\xi_i)-\nu\nabla F(x_t,\xi_i)^\top u_i}{\nu}\right\rvert^p\right]^\frac{1}{p}+\mathbf{E}_{u_i}\left[\left\lvert \nabla F(x_t,\xi_i)^\top u_i\right\rvert^p\right]^\frac{1}{p}\right]^p\\
	\leq &\left[\frac{\nu L_G}{2}\mathbf{E}_{u_i}\left[\|u_i\|^{2p}\right]^\frac{1}{p}+\|\nabla F(x_t,\xi_i)\|\mathbf{E}_{u_i}\left[\|  u_i\|^p\right]^\frac{1}{p}\right]^p
	\end{align*}
	Using Lemma~\ref{lm:gaussnormbound},
	\begin{align*}
	\mathbf{E}_{u_i}\left[\abs{\phi(\nu,u_i,\xi_i)}^p\right]\leq \left[\frac{\nu  L_G(d+2p)}{2}+\sqrt{d+p}\|\nabla F(x_t,\xi_i)\|\right]^p
	\end{align*}
	Now from \eqref{eq:iteratedexpecsubexp}, using Minkowski's inequality, we get
	\begin{align*}
	&\|\phi(\nu,u_i,\xi_i)\|_{\psi_1}\leq \sup_{p\geq 1}\frac{1}{p}\mathbf{E}_{\xi_i}\left[\left(\frac{\nu L_G(d+2p)}{2}\right)^p\right]^\frac{1}{p}+\sup_{p\geq 1}\frac{1}{p}\mathbf{E}_{\xi_i}\left[\left(\sqrt{d+p}\|\nabla F(x_t,\xi_i)\|\right)^p\right]^\frac{1}{p}\\
	\leq& \frac{\nu L_G(d+2)}{2}+\sup_{p\geq 1}\sqrt{\frac{d+p}{p}}\frac{1}{\sqrt{p}}\mathbf{E}_{\xi_i}\left[\| \nabla F(x_t,\xi_i)\|^p\right]^\frac{1}{p}\\
	\leq & \frac{\nu L_G(d+2)}{2}+\sup_{p\geq 1}\left(\sqrt{\frac{d+p}{p}}\sup_{p\geq 1}\frac{1}{\sqrt{p}}\mathbf{E}_{\xi_i}\left[\| \nabla F(x_t,\xi_i)\|^p\right]^\frac{1}{p}\right)
	\end{align*}
	Now,
	\begin{align*}
	&\mathbf{E}_{\xi_i}\left[\|\nabla F(x_t,\xi_i)\|^p\right]^{p^{-1}}\\
	\leq &\mathbf{E}_{\xi_i}\left[\left(\|\nabla F(x_t,\xi_i)-\nabla f(x_t)+\nabla f(x_t)\|\right)^p\right]^{p^{-1}}\\
	\leq & \mathbf{E}_{\xi_i}\left[2^{p-1}\|\nabla F(x_t,\xi_i)-\nabla f(x_t)\|^p+2^{p-1}\|\nabla f(x_t)\|^p\right]^{p^{-1}}\\
	\leq & 2\mathbf{E}_{\xi_i}\left[\|\nabla F(x_t,\xi_i)-\nabla f(x_t)\|^p\right]^{p^{-1}}+2\|\nabla f(x_t)\|
	\end{align*}
	From \eqref{eq:sgsgc} we have, $\sup_{p\geq 1}p^{-1/2}\mathbf{E}_{\xi_i}\left[\left(\|\nabla F(x_t,\xi_i)-\nabla f(x_t)\|\right)^p\right]^{p^{-1}}\leq c_0'\sqrt{\rho-1}\|\nabla_t\|$ where $c_0$ is a constant. Then,
	\begin{align*}
	&\|\phi(\nu,u_i,\xi_i)\|_{\psi_1}\leq \frac{\nu L_G(d+2)}{2}+(2+c_0'\sqrt{(\rho-1)})\sqrt{d+1}\|\nabla_t\|
	\end{align*}
	We also have, $\|u_i^j\|_{\psi_2}\leq 1$, and $\expec{g_{t,i}}=\nabla f_\nu(x_t)$. Then using Lemma~\ref{lm:prodconcentration}, we have $\forall~j=1,2,\cdots,d$
	\begin{align*}
	\mathbb{P}\left(\frac{1}{n_1}\left\lvert\sum_{i=1}^{n_1}\left(g_{t,i}^j-\nabla f_\nu(x_t)^j\right)\right\rvert\geq \tau\right)\leq 4\exp\left(-K_1\min\left[\left(\frac{\sqrt{n_1}\tau}{\Upsilon_t}\right)^2,\left(\frac{n_1\tau}{\Upsilon_t}\right)^{2/3}\right]\right)
	\end{align*}
	where $\Upsilon_t=\frac{\nu L_G(d+2)}{2}+c_0\sqrt{(\rho-1)(d+1)}\|\nabla_t\|$.
	Using union bound,
	\begin{align*}
	&\mathbb{P}\left(\norm{\frac{1}{n_1}\sum_{i=1}^{n_1}g_{t,i}-\nabla f_\nu(x_t)}\geq\tau\right)\leq \mathbb{P}\left(\exists~j\in\{1,2,\cdots,d\} s.t. \ \left\lvert\frac{1}{n_1}\sum_{i=1}^{n_1}g_{t,i}^j-\nabla f_\nu(x_t)^j\right\rvert\geq\tau/\sqrt{d}\right)\\
	&\leq \sum_{j=1}^d \mathbb{P}\left(\left\lvert\frac{1}{n_1}\sum_{i=1}^{n_1} g_{t,i}^j-\nabla f_\nu(x_t)^j\right\rvert\geq\tau/\sqrt{d}\right)\leq 4d\exp\left(-K_1\min\left[\left(\frac{\sqrt{n_1}\tau}{\Upsilon_t\sqrt{d}}\right)^2,\left(\frac{n_1\tau}{\Upsilon_t\sqrt{d}}\right)^{2/3}\right]\right)
	\end{align*}
	Using Lemma~\ref{lm:gradnutostochgraderror} we have
	\begin{align*}
	&\mathbb{P}\left(\norm{\frac{1}{n_1}\sum_{i=1}^{n_1}g_{t,i}-\nabla f(x_t)}\geq\tau\right)\leq \mathbb{P}\left(\norm{\frac{1}{n_1}\sum_{i=1}^{n_1}g_{t,i}-\nabla_\nu f(x_t)}\geq\tau-\frac{\nu L_G(d+3)^\frac{3}{2}}{2}\right)
	\end{align*}
\end{proof}
\begin{proof}[Proof of Lemma~\ref{lm:expecexpbound}]
	\begin{align*}
	&\expec{(c_t\|\zeta_t\|)^\frac{k}{3}}=\int_{0}^{\infty}\mathbb{P}\left((c_t\|\zeta_t\|)^\frac{k}{3}>\tau\right)d\tau=\int_{0}^{\infty}\mathbb{P}\left(\|\zeta_t\|>\tau^\frac{3}{k}/c_t\right)d\tau\\
	\leq& \int_{0}^{\infty}4d\exp(-b_{1,t}\tau^{\prime 2/k})d\tau\leq \int_{-\frac{\nu L_G(d+3)^\frac{3}{2}}{2}}^{\infty}4d\exp(-b_{1,t}\tau^{2/k})d\tau\leq \int_{0}^{\infty}8d\exp(-b_{1,t}\tau^{2/k})d\tau
	\end{align*}
	Substituting, $u=b_{1,t}\tau^{2/k}$ we have,
	\begin{align*}
	\expec{(c_t\|\zeta_t\|)^\frac{k}{3}}\leq \int_{0}^{\infty}4dk{b}_{1,t}^{-k/2}e^{-u}u^{k/2-1}du=4dkb_{1,t}^{-k/2}\Gamma \left( k/2 \right)
	\end{align*}
	Using $2(k!)^2\leq (2k)!$, and $\Gamma(k+1/2)=(2k)!\sqrt{\pi}/(4^kk!)$, we have
	\begin{align*}
	&\expec{e^{s(c_t\|\zeta_t\|)^\frac{1}{3}}}=1+\sum_{k=1}^\infty\expec{\frac{s^k(c_t\|\zeta_t\|)^\frac{k}{3}}{k!}}\leq 1+\sum_{k=1}^\infty\frac{s^k}{k!}4dkb_{1,t}^{-k/2}\Gamma\left( k/2 \right)\\
	&\leq 1+4d\left[\sum_{k=1}^\infty\frac{2ks^{2k}b_{1,t}^{-k}}{(2k)!}\Gamma(k)+\sum_{k=0}^\infty\frac{(2k+1)s^{2k+1}b_{1,t}^{-k-1/2}}{(2k+1)!}\Gamma(k+1/2)\right] \\
	&\leq 1+4d\left[\sum_{k=1}^\infty\frac{s^{2k}b_{1,t}^{-k}}{k!}+\sqrt{\frac{\pi s^2}{b_{1,t}}}\sum_{k=0}^\infty\frac{s^{2k}b_{1,t}^{-k}}{4^kk!}\right]\\
	&\leq 1+4d\left[e^\frac{s^2}{b_{1,t}}+\sqrt{\frac{\pi s^2}{b_{1,t}}}e^{\frac{s^2}{4b_{1,t}}}\right]\leq 1+8de^{\frac{s^2}{b_{1,t}}}\leq 9de^\frac{s^2}{b_{1,t}}
	\end{align*}
\end{proof}
\begin{proof}[Proof of Lemma~\ref{lm:innerprodboundzero}]
	Setting $c=\eta\|\nabla_i\|$, using Lemma~\ref{lm:expecexpbound} we have
	\begin{align*}
	\expec{e^{s(\eta\|\nabla_i\|\|\zeta_i\|)^\frac{1}{3}}}\leq 9de^\frac{s^2}{b_{1,i}}.
	\end{align*}
Hence, we have the following:
	\begin{align*}
	&\expec{\exp\left(s\sum_{i=0}^{t-1}(\eta\|\nabla_i\|\|\zeta_i\|)^\frac{1}{3}-\sum_{i=0}^{t-1}\frac{s^2}{b_{1,i}}\right)}\\
	=&\expec{\exp\left(s\sum_{i=0}^{t-2}(\eta\|\nabla_i\|\|\zeta_i\|)^\frac{1}{3}-\sum_{i=0}^{t-1}\frac{s^2}{b_{1,i}}\right)\expec{\exp\left(s(\eta\|\nabla_{t-1}\|\|\zeta_{t-1}\|)^\frac{1}{3}\right)|\calF_{t-2}}}\\
	=&9d\expec{\exp\left(s\sum_{i=0}^{t-2}(\eta\|\nabla_i\|\|\zeta_i\|)^\frac{1}{3}-\sum_{i=0}^{t-1}\frac{s^2}{b_{1,i}}\right)e^\frac{s^2}{b_{1,t-1}}}\\
	=&9d\expec{\exp\left(s\sum_{i=0}^{t-2}(\eta\|\nabla_i\|\|\zeta_i\|)^\frac{1}{3}-\sum_{i=0}^{t-2}\frac{s^2}{b_{1,i}}\right)}.
	\end{align*}
	Continuing like above we get,
	\begin{align*}
	\expec{\exp\left(s\sum_{i=0}^{t-1}(\eta\|\nabla_i\|\|\zeta_i\|)^\frac{1}{3}-\sum_{i=0}^{t-1}\frac{s^2}{b_{1,i}}\right)} \leq (9d)^t. \numberthis\label{eq:martingalediffexpec}
	\end{align*}
Now, we attempt the main result. Note that, we have	
	\begin{align*}
	\mathbb{P}\left(\eta\sum_{i=0}^t\nabla_i^\top\zeta_i\geq \tau\right)&\leq \mathbb{P}\left(\eta\sum_{i=0}^t\|\nabla_i\|\|\zeta_i\|\geq \tau\right)\\
	&\leq \mathbb{P}\left(\sum_{i=0}^t(\eta\|\nabla_i\|\|\zeta_i\|)^\frac{1}{3}\geq \tau^\frac{1}{3}\right)\\
	 &=\mathbb{P}\left(s\sum_{i=0}^t(\eta\|\nabla_i\|\|\zeta_i\|)^\frac{1}{3}-\sum_{i=0}^{t-1}\frac{s^2}{b_{1,i}}\geq s\tau^\frac{1}{3}-\sum_{i=0}^{t-1}\frac{s^2}{b_{1,i}}\right)\\
	&=\mathbb{P}\left(\exp\left(s\sum_{i=0}^t(\eta\|\nabla_i\|\|\zeta_i\|)^\frac{1}{3}-\sum_{i=0}^{t-1}\frac{s^2}{b_{1,i}}\right)\geq \exp\left(s\tau^\frac{1}{3}-\sum_{i=0}^{t-1}\frac{s^2}{b_{1,i}}\right)\right)\\
	&\leq  \frac{\expec{\exp\left(s\sum_{i=0}^{t-1}(\eta\|\nabla_i\|\|\zeta_i\|)^\frac{1}{3}-\sum_{i=0}^{t-1}\frac{s^2}{b_{1,i}}\right)}}{\exp\left(s\tau^\frac{1}{3}-\sum_{i=0}^{t-1}\frac{s^2}{b_{1,i}}\right)}\\
	&\leq \exp\left(t\log 9d-s\tau^\frac{1}{3}+\sum_{i=0}^{t-1}\frac{s^2}{b_{1,i}}\right).
	\end{align*}
	The RHS is minimized at $s=\frac{\tau^{1/3}}{2\sum_{i=0}^{t-1}\frac{1}{b_{1,i}}}$. Substituting for $s$ this value, for some $l>0$ we have:
	$$t\log 9d-\tau^\frac{2}{3}/\left(4\sum_{i=0}^{t-1}\frac{1}{b_{1,i}}\right)=-l.$$ Hence, we have $$\tau=\left(4\sum_{i=0}^{t-1}\frac{1}{b_{1,i}}(t\log 9d +l)\right)^{3/2}.$$
Finally, to prove the statement of the Lemma, note that	
	\begin{align*}
	\left(\sum_{i=0}^{t-1}\frac{1}{b_{1,i}}\right)^\frac{3}{2}=\frac{\sqrt{d}}{K_1^\frac{3}{2}n_1}\left(\sum_{i=0}^{t-1}(c_i\Upsilon_i)^\frac{2}{3}\right)^\frac{3}{2}\leq \frac{\eta\sqrt{dt}}{K_1^\frac{3}{2}n_1}\sum_{i=0}^{t-1}\left(\frac{\nu L_G(d+2)}{2}\|\nabla_i\|+C_0\sqrt{(\rho-1)(d+1)}\|\nabla_i\|^2\right)
	\end{align*}
\end{proof}
\begin{proof}[Proof of Lemma~\ref{lm:normsumboundzero}]
	From \eqref{eq:martingalediffexpec} we have,
	\begin{align*}
	\expec{\exp\left(s\sum_{i=0}^{t-1}(\|\zeta_i\|)^\frac{1}{3}-\sum_{i=0}^{t-1}\frac{s^2}{b_{0,i}}\right)} \leq (9d)^t 
	\end{align*}
	where $b_{0,i}$ is as defined in Lemma~\ref{lm:expecexpbound}.
	\begin{align*}
	&\mathbb{P}\left(\sum_{i=0}^{t-1}\|\zeta_i\|^2\geq \tau\right)
	\leq \mathbb{P}\left(s\sum_{i=0}^{t-1}\|\zeta_i\|^\frac{1}{3}-\sum_{i=0}^{t-1}\frac{s^2}{b_{0,i}}\geq s\tau^\frac{1}{6}-\sum_{i=0}^{t-1}\frac{s^2}{b_{0,i}}\right)\\
	=&\mathbb{P}\left(\exp\left(s\sum_{i=0}^{t-1}\|\zeta_i\|^\frac{1}{3}-\sum_{i=0}^{t-1}\frac{s^2}{b_{0,i}}\right)\geq \exp\left(s\tau^\frac{1}{6}-\sum_{i=0}^{t-1}\frac{s^2}{b_{0,i}}\right)\right)\\
	\leq & \frac{\expec{\exp\left(s\sum_{i=0}^{t-1}\|\zeta_i\|^\frac{1}{3}-\sum_{i=0}^{t-1}\frac{s^2}{b_{0,i}}\right)}}{\exp\left(s\tau^\frac{1}{6}-\sum_{i=0}^{t-1}\frac{s^2}{b_{0,i}}\right)}\leq \exp\left(t\log 9d-s\tau^\frac{1}{6}+\sum_{i=0}^{t-1}\frac{s^2}{b_{0,i}}\right)
	\end{align*}
	Following steps as in Lemma~\ref{lm:innerprodboundzero} we have, $\tau=\left(4\sum_{i=0}^{t-1}\frac{1}{b_{0,i}}(t\log 9d +l)\right)^3$.
	\begin{align*}
	\left(\sum_{i=0}^{t-1}\frac{1}{b_{0,i}}\right)^3=\frac{d}{K_1^3n_1^2}\left(\sum_{i=0}^{t-1}\Upsilon_i^\frac{2}{3}\right)^3\leq \frac{2dt^2}{K_1^3n_1^2}\sum_{i=0}^{t-1}\left(\left(\frac{\nu L_G(d+2)}{2}\right)^2+C_0^2(\rho-1)(d+1)\|\nabla_i\|^2\right)
	\end{align*}
\end{proof}
\begin{proof}[Proof of Lemma~\ref{lm:descent}]
	\begin{enumerate}[label=\alph*)]
		\item 
		\begin{align*}
		f(x_{t+1})\leq& f(x_t)+\nabla_t^\top (x_{t+1}-x_t)+\frac{L_G}{2}\|x_{t+1}-x_t\|^2\\
		\leq & f(x_t)-\eta\nabla_t^\top (\nabla_t+\tilde{\zeta}_t)+\frac{\eta^2L_G}{2}\left(\frac{3}{2}\|\nabla_t\|^2+3\|\tilde{\zeta}_t\|^2\right)\\
		\leq & f(x_t)-\frac{\eta}{4}\|\nabla_t\|^2-\eta \nabla_t^\top \tilde{\zeta}_t+\frac{3\eta^2L_G}{2}\|\tilde{\zeta}_t\|^2
		\end{align*}
		The last inequality holds as we will choose $\eta\leq 1/L_G$. Summing both sides,
		\begin{align*}
		f(x_t)-f(x_0)\leq -\frac{\eta}{4}\sum_{i=0}^{t-1}\|\nabla_t\|^2 -\eta \sum_{i=0}^{t-1}\nabla_i^\top \tilde{\zeta}_i+\frac{3\eta^2L_G}{2}\sum_{i=0}^{t-1}\|\tilde{\zeta}_i\|^2 \numberthis\label{eq:totalfunctionchange}
		\end{align*}
		Observe that, by Assumption~\ref{as:SGC},
		\begin{align*}
		\mathbb{P}(\nabla_t^\top \zeta_t\geq \tau|\calF_{t-1})\leq \mathbb{P}(\|\nabla_t\| \|\zeta_t\|\geq \tau|\calF_{t-1})\leq 2\exp(-\tau^2/(\frac{2(\rho-1)}{n_1}\|\nabla_t\|^4 )) \numberthis\label{eq:innerprodconc}
		\end{align*}
		So $\nabla_t^\top \zeta_t|\calF_{t-1}$ is $c\sqrt{\frac{\rho-1}{n_1}}\|\nabla_t\|^2$-subGaussian.		
		Using Lemma~\ref{lm:innerprodsumbound}, we have, with probability at least $1-e^{-l}$,
		\begin{align*}
		-\eta\sum_{i=0}^{t-1}\nabla_i^\top\zeta_i\leq \lambda\eta c\frac{\rho-1}{n_1} \sum_{i=0}^{t-1}\|\nabla_i\|^4+\eta\frac{l}{\lambda}\leq \lambda\eta c\frac{\rho-1}{n_1}\left(\sum_{i=0}^{t-1}\|\nabla_i\|^2\right)^2+\eta\frac{l}{\lambda}
		\end{align*}
		Plugging $\lambda=\frac{32l}{\sum_{i=0}^{t-1}\|\nabla_i\|^2}$, we have,
		\begin{align*}
		-\eta\sum_{i=0}^{t-1}\nabla_i^\top\zeta_i\leq \eta \left(32cl\frac{\rho-1}{n_1}+\frac{1}{32}\right)\sum_{i=0}^{t-1}\|\nabla_i\|^2 \numberthis\label{eq:zetainnerprodsumbound}
		\end{align*}
		Using Lemma~\ref{lm:innerprodsumbound}, with probability at least $1-e^{-l}$ we have,
		\begin{align*}
		-\eta\sum_{i=0}^{t-1}\nabla_i^\top \theta_i\leq \frac{\eta}{32}\sum_{i=0}^{t-1}\|\nabla_i\|^2+32cl\eta r^2 \numberthis\label{eq:thetainnerprodsumbound}
		\end{align*}
		Using Lemma~\ref{lm:normsumbound}, we have with probability at least $1-e^{-l}$,
		\begin{align*}
		\sum_{i=0}^{t-1}\|\theta_i\|^2\leq cr^2(t+l) \numberthis\label{eq:thetanormsumbound}
		\end{align*}
		Note that by Assumption~\ref{as:SGC}, $\expec{\|\zeta_t\|^2|\calF_{t-1}}\leq \frac{\rho-1}{n_1}\|\nabla_t\|^2$, and $\|\zeta_t\|^2|\calF_{t-1}$ is $c\frac{\rho-1}{n_1}\|\nabla_t\|^2$-subExponential. So we have, with probability at least $1-e^{-l}$,
		\begin{align*}
		\sum_{i=0}^{t-1}\|\zeta_i\|^2\leq (c+l)\frac{\rho-1}{n_1}\sum_{i=0}^{t-1}\|\nabla_i\|^2 \numberthis\label{eq:zetanormsumbound}
		\end{align*}
		Combining \eqref{eq:totalfunctionchange}, \eqref{eq:zetainnerprodsumbound}, \eqref{eq:thetainnerprodsumbound}, \eqref{eq:thetanormsumbound} and \eqref{eq:zetanormsumbound}, using $\|\tilde{\zeta}_t\|^2\leq 2 (\|\zeta_t\|^2+\|\theta\|^2)$, and using union bound, we have with probability at least $1-4e^{-l}$,
		\begin{align*}
		f(x_t)-f(x_0)&\leq \left(-\frac{\eta}{4}+\eta \left(32lc\frac{\rho-1}{n_1}+\frac{1}{32}\right)+\frac{\eta}{32}+3\eta^2L_G(c+l)\frac{\rho-1}{n_1}\right)\sum_{i=0}^{t-1}\|\nabla_i\|^2\\&~+3c\eta^2r^2(t+l)L_G+32cl\eta r^2
		\end{align*}
		We need to choose $\eta$ such that $\left(-\frac{\eta}{4}+\eta \left(32lc\frac{\rho-1}{n_1}+\frac{1}{32}\right)+\frac{\eta}{32}+3\eta^2L_G(c+l)\frac{\rho-1}{n_1}\right)<-\frac{\eta}{16}$. Choosing $n_1$, and $\eta$ as in \eqref{eq:descentparchoicefirst}, and setting $t={\cT_{1}}$, we get \eqref{eq:descentfirst}.
		\item Using Lemma~\ref{lm:innerprodboundzero}, and Lemma~\ref{lm:normsumboundzero}, and Assumption~\ref{as:lip} we have, with probability at least $1-4e^{-l}$
		\begin{align*}
		&f(x_t)-f(x_0)\leq  -\frac{\eta}{4}\sum_{i=0}^{t-1}\|\nabla_i\|^2+\frac{\eta}{16}\sum_{i=0}^{t-1}\|\nabla_i\|^2+16cl\eta r^2+3cL_G\eta^2r^2(t+l)\\
		+&\frac{8\eta\sqrt{dt}}{K_1^\frac{3}{2}n_1}(t\log 9d+l)^\frac{3}{2}\sum_{i=0}^{t-1}\left(\frac{\nu LL_G(d+2)}{2}+C_0\sqrt{(\rho-1)(d+1)}\|\nabla_i\|^2\right)\\
		+&\frac{384L_Gd\eta^2t^2(t\log 9d +l)^3}{K_1^3n_1^2}\sum_{i=0}^{t-1}\left(\left(\frac{\nu L_G(d+2)}{2}\right)^2+C_0^2(\rho-1)(d+1)\|\nabla_i\|^2\right)
		\end{align*}
		We will choose ${\cT_{0}}$, $\eta$, and $n_1$ such that, \eqref{eq:parchoosecond1}, and \eqref{eq:parchoosecond2} are true. Then, with probability at least $1-4e^{-l}$, we get \eqref{eq:descentzero}.
	\end{enumerate}
\end{proof}
\begin{proof}[Proof of Lemma~\ref{lm:xtx0distnormbound}]
	\begin{enumerate}[label=\alph*)]
		\item For a fixed $\tau\leq t$, we have
		\begin{align*}
		\|x_\tau-x_0\|^2\leq \eta^2\|\sum_{i=0}^{\tau-1}(\nabla_i+\tz_i)\|^2\leq 2\eta^2 t\sum_{i=0}^{t-1}\|\nabla_i\|^2+4\eta^2(\|\sum_{i=0}^{t-1}\zeta_i\|^2+\|\sum_{i=0}^{t-1}\theta_i\|^2)
		\end{align*}
		Using Lemma~\ref{lm:sumnormbound}, we have with probability at least $1-4de^{-l}$,
		\begin{align*}
		\|\sum_{i=0}^{t-1}\zeta_i\|^2+\|\sum_{i=0}^{t-1}\theta_i\|^2\leq cl\left(\frac{\rho-1}{n_1}\sum_{i=0}^{t-1}\|\nabla_i\|^2+tr^2\right)
		\end{align*}
		Combining this with Lemma~\ref{lm:descent}, with probability at least $1-4e^{-l}-4de^{-l}$, setting $t={\cT_{1}}$, and using union bound we have \eqref{eq:xtx0distnormboundfirst}.
		\item 
		\begin{align*}
		\mathbb{P}\left(\|\sum_{i=0}^{t-1}\zeta_i\|^2\geq \tau\right)\leq \mathbb{P}\left(\sum_{i=0}^{t-1}\|\zeta_i\|^2\geq \tau/t\right)
		\end{align*}
		So from Lemma~\ref{lm:normsumboundzero}, we have with probability at least $1-e^{-l}$
		\begin{align*}
		\|\sum_{i=0}^{t-1}\zeta_i\|^2\leq \frac{128dt^3(t\log 9d +l)^3}{K_1^3n_1^2}\sum_{i=0}^{t-1}\left(\left(\frac{\nu L_G(d+2)}{2}\right)^2+C_0^2(\rho-1)(d+1)\|\nabla_i\|^2\right)
		\end{align*}
		Plugging $t={\cT_{0}}$, under condition \eqref{eq:parchoosecond1}, we have,
		\begin{align*}
		\|\sum_{i=0}^{\cT_{0}-1}\zeta_i\|^2\leq \frac{L_G\cT_{0}^2\nu^2(d+2)^2}{192C_0^2(\rho-1)(d+1)}+\frac{\cT_{0}}{12L_G}\sum_{i=0}^{\cT_{0}-1}\|\nabla_i\|^2
		\end{align*}
		From \eqref{eq:descentzero} we have, with probability at least $1-e^{-l}$
		\begin{align*}
		\sum_{i=0}^{\cT_{0}-1}\|\nabla_i\|^2\leq \frac{16}{\eta}(f(x_0)-f(x_{\cT_{0}})+\wp(r,l,\nu,\eta,d,\cT_{0}))
		\end{align*}
		Then we have with probability with at least $1-3d\cT_0e^{-l}$, we have \eqref{eq:xtx0distnormboundzero}.
	\end{enumerate}
\end{proof}
\begin{proof}[Proof of Lemma~\ref{lm:qhqsgbound}]
	\begin{enumerate}[label=\alph*)]
		\item Proof for the first-order setting is as in \cite{jin2019nonconvex}.
		\item Note that $q_h(t)$ is the same as in part (a). If we can ensure that for the zeroth-order case  $\forall t\leq {\cT_{0}}$ we have $\|q_{sg}(t+1)\|\leq \beta(t)r/(40\sqrt{d})$, then the rest of the proof follows from \cite{jin2019nonconvex}. For a fixed $t$, using Cauchy–Schwarz inequality,
		\begin{align*}
		&\mathbb{P}\left(\left\lVert q_{sg}(t+1)\right\rVert\geq\tau\right)=\mathbb{P}\left(\eta\left\lVert\sum_{i=0}^{t}(I-\eta\cH)^{t-i}\hat{\zeta}_i\right\rVert\geq\tau\right)\leq \mathbb{P}\left(\eta\sum_{i=0}^{t}\left\lVert(I-\eta\cH)\right\rVert^{t-i}\left\lVert\zeta_i-\zeta'_i\right\rVert\geq\tau\right)\\
		\leq& 2\mathbb{P}\left(\eta\sum_{i=0}^{t}\left\lVert(I-\eta\cH)\right\rVert^{t-i}\left\lVert\zeta_i\right\rVert\geq\tau/2\right)\\
		\leq& 2\mathbb{P}\left(\eta\sqrt{\sum_{i=0}^{t}\left\lVert(I-\eta\cH)\right\rVert^{2t-2i}}\sqrt{\sum_{i=0}^{t}\left\lVert\zeta_i\right\rVert^2}\geq\tau/2\right)\\
		\leq & 2\mathbb{P}\left(\sum_{i=0}^{t}\left\lVert\zeta_i\right\rVert^2\geq\left(\frac{\tau}{2\eta\beta(t+1)}\right)^2\right)
		\end{align*}
		From Lemma~\ref{lm:normsumboundzero}, we have with probability at least $1-e^{-l}$
		\begin{align*}
		&\|q_{sg}(t+1)\|\leq \frac{32\sqrt{d(t+1)}\eta \beta(t+1)((t+1)\log 9d+\log 2+l)^\frac{3}{2}}{K_1^{3/2}n_1}\\
		&\sum_{i=0}^{t}\left(\frac{\nu L_G(d+2)}{2}+C_0\sqrt{(\rho-1)(d+1)}\|\nabla_i\|\right)
		\end{align*}
Recalling the definition of $\textXi$ from~\eqref{textxidef}, and setting $t={\cT_{0}}$, we will choose ${\cT_{0}}$, $r$, $\eta$, $l$, and $\nu$ such that
		\begin{align*}
		&\textXi\cdot\sum_{i=0}^{{\cT_{0}}}\left(\frac{\nu L_G(d+2)}{2}+C_0\sqrt{(\rho-1)(d+1)}\|\nabla_i\|\right)\leq \frac{\beta({\cT_{0}})r}{40\sqrt{d}} \numberthis\label{eq:parchoosecond3}
		\end{align*}
	\end{enumerate}
\end{proof}
\begin{proof}[Proof of Lemma~\ref{lm:escapesaddle}]
	\begin{enumerate}[label=\alph*)]
		\item For the first part, we have from Lemma~\ref{lm:descent}, with probability at least $1-4e^{-l}$,
		\begin{align*}
		f(x_{\cT_{1}})-f(x_0)\leq 3c\eta^2r^2({\cT_{1}}+l)L_G+32cl\eta r^2\leq 0.1\cF_1
		\end{align*}
		By similar methods in \cite{jin2019nonconvex}, we have, with probability at least $2/3-10d{\cT_{1}}^2\log\left(\frac{\cS_1\sqrt{d}}{\eta r}\right)e^{-l}$, if $\min\{f(x_{\cT_{1}})-f(x_0),f(x_{\cT_{1}}')-f(x_0)\}\geq -\cF_1$, then 
		\begin{align*}
		\max\{\|x_{\cT_{1}}-x_0\|,\|x_{\cT_{1}}'-x_0\|\}\geq \frac{\beta({\cT_{1}})\eta r }{40\sqrt{d}}=\frac{(1+\eta \gamma)^{\cT_{1}}\sqrt{\eta} r}{40\sqrt{2\gamma d}}>\cS_1 \numberthis\label{eq:parchoosecond5}
		\end{align*}
		This is in contradiction with Lemma~\ref{lm:improvorloc}. Then we have with probability at least $2/3-10d{\cT_{1}}^2\log\left(\frac{\cS_1\sqrt{d}}{\eta r}\right)e^{-l}$, $\min\{f(x_{\cT_{1}})-f(x_0),f(x_{\cT_{1}}')-f(x_0)\}\leq -\cF_1$. As the marginal distributions of $x_{\cT_1}$ and $x'_{\cT_1}$ are same we have,
		\begin{align*}
		&\mathbb{P}(f(x_{\cT_{1}}')-f(x_0)\}\leq -\cF_1)\geq \frac{1}{2}\mathbb{P}(\min\{f(x_{\cT_{1}})-f(x_0),f(x_{\cT_{1}}')-f(x_0)\}\leq -\cF_1)\\
		\geq& 1/3-9d{\cT_{1}}^2\log\left(\frac{\cS_1\sqrt{d}}{\eta r}\right)e^{-l}
		\end{align*}
		\item Note that the probability for the second statement being true is at least $1/3-1.5{\cT_{0}}^2e^{-l}$ which is different from \cite{jin2019nonconvex} but the proof method is same. So we omit the proof here. 
	\end{enumerate}
\end{proof}

\section{Proof of Theorem~\ref{th:crnmaintheorem}}\label{sec:crnproof}

We first state the following optimality conditions for CR Newton method updates due to \cite{nesterov2006cubic}.
\begin{lemma}\cite{nesterov2006cubic}
	\begin{subequations}
		\begin{align}
		\begin{split}\label{eq:optconda}
		g_t+H_th_t^*+\frac{M}{2}\|h_t^*\|h_t=0
		\end{split}\\
		\begin{split} \label{eq:optcondb}
		H_t+\frac{M}{2}\|h_t^*\|I\succcurlyeq 0
		\end{split}
		\end{align}
	\end{subequations}
\end{lemma}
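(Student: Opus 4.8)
The plan is to reproduce the classical argument of \cite{nesterov2006cubic}. Recall that the update~\eqref{eq:subprob} sets $x_{t+1}$ to be a minimizer of the cubic model, so, writing $h_t^*:=x_{t+1}-x_t$ (which coincides with the quantity $h_t$ in the Lemma, by the paper's convention $h_t=x_{t+1}-x_t$) and centering~\eqref{eq:envelope} at the origin, $h_t^*$ is a \emph{global} minimizer over $\mathbb{R}^d$ of
\begin{align*}
\psi_t(h):=q_t(h)+\tfrac{M}{6}\|h\|^3,\qquad q_t(h):=\langle g_t,h\rangle+\tfrac12\langle H_t h,h\rangle ,
\end{align*}
a minimizer existing since $\psi_t$ is coercive. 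The first step is simply to note that $h\mapsto\|h\|^3$ is continuously differentiable on all of $\mathbb{R}^d$ with gradient $3\|h\|h$, so $\psi_t$ is $C^1$ and stationarity $\nabla\psi_t(h_t^*)=0$ reads $g_t+H_t h_t^*+\tfrac{M}{2}\|h_t^*\|h_t^*=0$, which is exactly~\eqref{eq:optconda}. Set $r^*:=\|h_t^*\|$ and $\mu:=\tfrac{M}{2}r^*\ge 0$.

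For~\eqref{eq:optcondb} the key step is to upgrade global optimality of $h_t^*$ for $\psi_t$ to a trust-region-type statement about the quadratic part $q_t$: for every $h$ with $\|h\|\le r^*$ one has $\tfrac{M}{6}\|h\|^3\le\tfrac{M}{6}(r^*)^3$, hence $q_t(h)=\psi_t(h)-\tfrac{M}{6}\|h\|^3\ge\psi_t(h_t^*)-\tfrac{M}{6}(r^*)^3=q_t(h_t^*)$, so $h_t^*$ minimizes $q_t$ over the closed ball of radius $r^*$, with the minimum attained on the boundary. Next I would substitute $g_t=-H_t h_t^*-\mu h_t^*$ from~\eqref{eq:optconda} into $q_t(h)-q_t(h_t^*)$ and, using $\|h\|=\|h_t^*\|=r^*$ (so that $\|h_t^*\|^2-\langle h_t^*,h\rangle=\tfrac12\|h-h_t^*\|^2$), collapse the difference to the identity
\begin{align*}
q_t(h)-q_t(h_t^*)=\tfrac12\big\langle (H_t+\mu I)(h-h_t^*),\,h-h_t^*\big\rangle\qquad\text{for all }\|h\|=r^* .
\end{align*}
Since the left side is $\ge 0$, the form $u\mapsto\langle(H_t+\mu I)u,u\rangle$ is nonnegative on the set $\{h-h_t^*:\|h\|=r^*\}=\{w:\langle w,h_t^*\rangle=-\tfrac12\|w\|^2\}$; for each unit vector $u$ with $\langle u,h_t^*\rangle\ne0$ the vector $w=-2\langle u,h_t^*\rangle u$ lies in this set, which forces $\langle(H_t+\mu I)u,u\rangle\ge0$, and by continuity this extends to every $u$, i.e.\ $H_t+\tfrac{M}{2}\|h_t^*\|I\succeq 0$. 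The degenerate case $h_t^*=0$ (where the ball argument is vacuous) is handled directly: then~\eqref{eq:optconda} gives $g_t=0$, and $\psi_t(tv)=\tfrac{t^2}{2}\langle H_t v,v\rangle+\tfrac{M}{6}t^3\|v\|^3\ge\psi_t(0)=0$ for all $t>0$ forces, upon dividing by $t^2$ and letting $t\downarrow0$, that $\langle H_t v,v\rangle\ge0$ for every $v$.

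The genuinely delicate point is the passage from "$q_t$ is minimized on the sphere $\|h\|=r^*$" to the full-space semidefiniteness $H_t+\mu I\succeq0$: it is essential that $h_t^*$ is a \emph{global} minimizer of $\psi_t$ (so the comparison holds against every competitor on the sphere, not just locally), and one must verify that the displacements $h-h_t^*$ over that sphere indeed exhaust every direction up to scaling — the rest is routine, namely the $C^1$ smoothness of $\|\cdot\|^3$ for the first-order condition and the one-line algebraic identity above for the quadratic difference.
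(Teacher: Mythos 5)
The paper does not prove this lemma at all; it simply cites \cite{nesterov2006cubic} and uses the optimality conditions as a black box, so there is no in-paper argument to compare against. Your proof is correct and is the standard Nesterov--Polyak derivation: the stationarity of the $C^1$ cubic model gives~\eqref{eq:optconda}, and the global-to-trust-region reduction together with the identity $q_t(h)-q_t(h_t^*)=\tfrac12\langle (H_t+\tfrac{M}{2}\|h_t^*\|I)(h-h_t^*),h-h_t^*\rangle$ on the sphere $\|h\|=\|h_t^*\|$, plus the density/homogeneity argument to cover all directions and the separate $h_t^*=0$ case, correctly yields~\eqref{eq:optcondb}.
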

Intuitively, the proof follows through three stages. First, in Lemma~\ref{lm:envelopdescent}, we show that the descent at each time point is proportional to the cube of the step size. 
\begin{lemma}\label{lm:envelopdescent}\cite{tripuraneni2018stochastic}
	Let $m_t$ be as defined in \eqref{eq:envelope}. Then for all $t$,
	\begin{align}
	m_t(x_t+h_t^*)-m_t(x_t)\leq -\frac{M}{12} \|h_t^*\|^3 \label{eq:envelopdescent}
	\end{align}
\end{lemma}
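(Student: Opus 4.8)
The plan is a direct computation using the two optimality conditions~\eqref{eq:optconda} and~\eqref{eq:optcondb}. Since $m_t(x_t)=f(x_t)$ (the constant term in~\eqref{eq:envelope} is just $f(x_t)$), expanding the definition of $m_t$ at $y=x_t+h_t^*$ gives
\begin{align*}
m_t(x_t+h_t^*)-m_t(x_t)=(h_t^*)^\top g_t+\tfrac{1}{2}(h_t^*)^\top H_t h_t^*+\tfrac{M}{6}\|h_t^*\|^3.
\end{align*}
First I would take the inner product of the stationarity condition~\eqref{eq:optconda} with $h_t^*$, which yields $(h_t^*)^\top g_t=-(h_t^*)^\top H_t h_t^*-\tfrac{M}{2}\|h_t^*\|^3$. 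Substituting this into the display above cancels the linear term and collapses the right-hand side to $-\tfrac{1}{2}(h_t^*)^\top H_t h_t^*-\tfrac{M}{3}\|h_t^*\|^3$.

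The second step is to lower bound the remaining quadratic form using~\eqref{eq:optcondb}: applying the positive-semidefinite inequality $H_t+\tfrac{M}{2}\|h_t^*\|I\succcurlyeq 0$ to the vector $h_t^*$ gives $(h_t^*)^\top H_t h_t^*\geq -\tfrac{M}{2}\|h_t^*\|^3$, hence $-\tfrac{1}{2}(h_t^*)^\top H_t h_t^*\leq\tfrac{M}{4}\|h_t^*\|^3$. Combining the two steps,
\begin{align*}
m_t(x_t+h_t^*)-m_t(x_t)\leq\tfrac{M}{4}\|h_t^*\|^3-\tfrac{M}{3}\|h_t^*\|^3=-\tfrac{M}{12}\|h_t^*\|^3,
\end{align*}
which is exactly~\eqref{eq:envelopdescent}.

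I do not anticipate a genuine obstacle here: the whole argument is two substitutions followed by arithmetic on the coefficients $\tfrac16,\tfrac12,\tfrac13,\tfrac14$. The only points requiring care are that $h_t^*=x_{t+1}-x_t$ is the \emph{exact} minimizer of $m_t$ in~\eqref{eq:subprob}, so that~\eqref{eq:optconda}--\eqref{eq:optcondb} hold without slack (if one used only an approximate subproblem solver, as discussed around Algorithm~\ref{alg:nccrn}, these conditions would need perturbed versions), and that the cancellation $m_t(x_t)=f(x_t)$ is used so the constant term drops out. This lemma only captures the per-iteration decrease of the \emph{model} $m_t$; the substantive work in proving Theorem~\ref{th:crnmaintheorem} will come afterwards, in relating this model decrease to the actual decrease of $f$ via the Hessian-Lipschitz bound (Assumption~\ref{as:liphess}) and in controlling the stochastic errors $\|g_t-\nabla f(x_t)\|$ and $\|H_t-\nabla^2 f(x_t)\|$ using Lemma~\ref{lm:gradestvar} together with a Hessian concentration estimate — not in this lemma.
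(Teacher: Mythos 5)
Your proof is correct, and since the paper does not reproduce a proof of this lemma (it is cited directly from Tripuraneni et al.\ 2018), you have essentially reconstructed the canonical argument from that source: the two substitutions using the optimality conditions \eqref{eq:optconda}--\eqref{eq:optcondb} and the resulting arithmetic $\tfrac{M}{4}-\tfrac{M}{3}=-\tfrac{M}{12}$ are exactly right, and your remark that exact minimization of the subproblem is what licenses using the optimality conditions without slack is also well placed.
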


Then, in Lemma~\ref{lm:htlowerbound} we show that the second-order staionarity of an iterate is upper bounded by the step size at that time point. 
\begin{lemma} \label{lm:htlowerbound}
	Let Assumption~\ref{as:lipgrad}, and \ref{as:liphess} hold true for $f$. Then the following holds  $\forall t$
	\begin{enumerate}[label=\alph*)]
		\item for the first-order update of a CR Newton method,
		\begin{align*}
		&\sqrt{\expec{\|h_t^*\|^2|\calF_t}}
		\geq \max\left(\left(A\expec{\|\nabla f\left(x_t+h_t^*\right)\||\calF_t}\right.
		\left. -B\right)^\frac{1}{2},\right.\\
		&\left.\frac{2}{M+2L_H}\left(-\sqrt{\frac{\sigma_2^2}{n_2}}-\expec{\lambda_{1,t+1}|\calF_t}\right)\right) \numberthis\label{eq:htlowerbound}
		\end{align*}
		where $A=\frac{1}{2(L_H+M)}\left(1-\sqrt{\frac{\rho-1}{n_1}}\right)$, and $B=\frac{1}{4(L_H+M)^2}\left(\frac{\rho-1}{2n_1}L_G^2+\frac{\sigma_2^2}{n_2}\right)$.
		\item for the zeroth-order update of a CR Newton Method
		\begin{align*}
		&\sqrt{\expec{\|h_t^*\|^2|\calF_t}}
		\geq \max\left(\left(A'\expec{\|\nabla f\left(x_t+h_t^*\right)\||\calF_t}\right.
		\left. -B'\right)^\frac{1}{2},\right.\\
		&\left.\frac{2}{M+2L_H}\left(-\sqrt{\frac{128(1+2\log 2d)(d+16)^4L_G^2}{3n_2}}-\sqrt{3}\nu L_H(d+16)^\frac{5}{2}-\expec{\lambda_{1,t+1}|\calF_t}\right)\right) \numberthis\label{eq:htlowerboundzero}
		\end{align*}
		where $A'=\frac{1}{2(L_H+M)}\left(1-\sqrt{\frac{\rho'-1}{n_1}}\right)$, and\\ $B'=\frac{1}{4(L_H+M)^2}\left(\frac{\rho'-1}{n_1}L_G^2
		+\frac{128(1+2\log 2d)(d+16)^4L_G^2}{3n_2}+3L_H^2\nu^2(d+16)^5+\sqrt{6}\nu (L_H+M)L_G(d+3)^\frac{3}{2}\right)$.
	\end{enumerate}
\end{lemma}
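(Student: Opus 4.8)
The idea is to read off a lower bound on $\|h_t^*\|$ from each of the subproblem optimality conditions \eqref{eq:optconda}--\eqref{eq:optcondb}, evaluated at the new iterate $x_{t+1}=x_t+h_t^*$, and then pass to conditional expectations, controlling the gradient error via Lemma~\ref{lm:gradestvar} (its zeroth-order counterpart in part~(b)) and the Hessian error via Assumption~\ref{as:stochfirst} (resp.\ Assumption~\ref{as:stochzero}); here $\expec{\cdot\,|\calF_t}$ averages over the randomness drawn at iteration $t$, so $x_t,\nabla_t,\nabla_t^2$ are fixed. For the second entry of the maximum, \eqref{eq:optcondb} gives $\lambda_{\min}(H_t)\ge-\tfrac{M}{2}\|h_t^*\|$; decomposing $\nabla^2 f(x_{t+1})=H_t+(\nabla_t^2-H_t)+(\nabla^2 f(x_{t+1})-\nabla_t^2)$, applying Weyl's inequality, and bounding $\|\nabla^2 f(x_{t+1})-\nabla_t^2\|\le L_H\|h_t^*\|$ by Assumption~\ref{as:liphess}, I get $\lambda_{1,t+1}\ge-\tfrac{M+2L_H}{2}\|h_t^*\|-\|\nabla_t^2-H_t\|$ almost surely. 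Taking $\expec{\cdot\,|\calF_t}$, using Jensen to pass from $\expec{\|h_t^*\|\,|\calF_t}$ to $\sqrt{\expec{\|h_t^*\|^2|\calF_t}}$, and $\expec{\|\nabla_t^2-H_t\|\,|\calF_t}\le\sqrt{\expec{\|\nabla_t^2-H_t\|_F^2|\calF_t}}\le\sigma_2/\sqrt{n_2}$ (independence of the $n_2$ samples plus Assumption~\ref{as:stochfirst}), and rearranging gives exactly the second term in \eqref{eq:htlowerbound}.

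\textbf{The gradient term.} Using \eqref{eq:optconda}, write
\begin{align*}
\nabla f(x_{t+1})=\big[\nabla f(x_{t+1})-\nabla_t-\nabla_t^2h_t^*\big]+(\nabla_t-g_t)+(\nabla_t^2-H_t)h_t^*-\tfrac{M}{2}\|h_t^*\|h_t^*.
\end{align*}
Assumption~\ref{as:liphess} bounds the first bracket by $\tfrac{L_H}{2}\|h_t^*\|^2$, so $\|\nabla f(x_{t+1})\|\le\tfrac{L_H+M}{2}\|h_t^*\|^2+\|\nabla_t-g_t\|+\|\nabla_t^2-H_t\|\,\|h_t^*\|$. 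I then take $\expec{\cdot\,|\calF_t}$: the product term is split by Cauchy--Schwarz and the Hessian-variance bound $\expec{\|\nabla_t^2-H_t\|^2|\calF_t}\le\sigma_2^2/n_2$; for $\expec{\|\nabla_t-g_t\|\,|\calF_t}$, Lemma~\ref{lm:gradestvar} and Jensen give $\sqrt{(\rho-1)/n_1}\,\|\nabla_t\|$, and then $\|\nabla_t\|\le\expec{\|\nabla f(x_{t+1})\|\,|\calF_t}+L_G\expec{\|h_t^*\|\,|\calF_t}$ by Assumption~\ref{as:lipgrad} (using that $\|\nabla_t\|$ is deterministic under the conditioning). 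Collecting the $\expec{\|\nabla f(x_{t+1})\|\,|\calF_t}$ contributions on the left produces the prefactor $1-\sqrt{(\rho-1)/n_1}$, which is positive for the minibatch size $n_1=\Theta((\rho-1)/\epsilon)$ used in Theorem~\ref{th:crnmaintheorem}; applying Young's inequality to reabsorb the terms linear in $\sqrt{\expec{\|h_t^*\|^2|\calF_t}}$ into the quadratic term, dividing by $L_H+M$, and rearranging gives the first term in \eqref{eq:htlowerbound} with $A,B$ as stated.

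\textbf{Zeroth-order case and the main difficulty.} Part~(b) runs along the identical lines after three adjustments. First, the zeroth-order gradient obeys $\expec{g_t|\calF_t}=\nabla f_\nu(x_t)$, so I insert the Gaussian-smoothing bias $\|\nabla f_\nu(x_t)-\nabla_t\|\le\tfrac{\nu}{2}L_G(d+3)^{3/2}$ from Lemma~\ref{lm:gradnutostochgraderror} and use the zeroth-order second-moment bound on $\|g_t-\nabla_t\|$ (replacing $\rho$ by the effective constant $\rho'$ and introducing the $(d+1)$ factor). Second, the zeroth-order Hessian estimator is biased, contributing a $\nu L_H(d+16)^{5/2}$-type term (hence $\propto\nu^2(d+16)^5$ once squared), and its variance scales as $(d+16)^4\log d/n_2$. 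Third, these extra additive errors are carried through the same Cauchy--Schwarz/Young bookkeeping into $A',B'$ and into the eigenvalue estimate, producing \eqref{eq:htlowerboundzero}. The step I expect to require the most care is the gradient bound: since $\|\nabla_t-g_t\|$ can only be controlled in mean square, one must pass to expectations before estimating it, then split the correlated product $\|\nabla_t^2-H_t\|\,\|h_t^*\|$ and re-absorb the resulting square-root-of-second-moment terms via Young's inequality \emph{without} destroying positivity of $1-\sqrt{(\rho-1)/n_1}$ -- and in the zeroth-order case the additional dimension-dependent bias and variance terms must be tracked through each of these steps to arrive at the precise $B'$.
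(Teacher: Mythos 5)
Your proposal is correct and follows essentially the same route as the paper's own proof: both start from the subproblem optimality conditions \eqref{eq:optconda}--\eqref{eq:optcondb}, use the same decomposition $\nabla f(x_{t+1})=[\nabla f(x_{t+1})-\nabla_t-\nabla_t^2h_t^*]+(\nabla_t-g_t)+(\nabla_t^2-H_t)h_t^*+(g_t+H_th_t^*)$ together with Assumptions~\ref{as:lipgrad}--\ref{as:liphess}, Lemma~\ref{lm:gradestvar}, and the Hessian variance bound, then re-express $\|\nabla_t\|$ via $\|\nabla f(x_{t+1})\|$ and $\|h_t^*\|$ and reabsorb linear terms by Young's inequality to obtain $A,B$; for the eigenvalue bound, both argue from \eqref{eq:optcondb} via a Weyl/Loewner-type perturbation plus the Hessian-error and $L_H$-Lipschitz terms. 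The only cosmetic difference is that you apply Cauchy--Schwarz to the product $\|\nabla_t^2-H_t\|\,\|h_t^*\|$ \emph{after} taking conditional expectation and then invoke Young, whereas the paper applies Young pointwise before taking expectation; the resulting bound is the same, and the zeroth-order adjustments you describe match the paper's use of Lemma~\ref{lm:zerothorderesterror} and Lemma~\ref{lm:gradnutostochgraderror}.
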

Finally, in Lemma~\ref{lm:htcubeupperbound}, we prove that the expected step size becomes smaller with the horizon. 
\begin{lemma}\label{lm:htcubeupperbound}
	Let $f$ be a function for which Assumptions~\ref{as:lipgrad}, and \ref{as:liphess} are true. Then,
	\begin{enumerate}[label=\alph*)]
		\item for first-order updates generated by Algorithm~\ref{alg:nccrn} the following holds:
		\begin{align*}
		&\left(\frac{M}{72}-\left(\frac{\rho-1}{n_1}\right)^\frac{3}{4}\frac{8}{\sqrt{M}A^\frac{3}{2}}\right)\expec{\|h_R^*\|^3|\calF_t}\\
		&\leq \frac{f(x_1)-f^*}{T} +\frac{1152L_G^3}{M^2}\left(\frac{\rho-1}{n_1}\right)^\frac{3}{2}\\
		&+\frac{8}{\sqrt{M}}\left(\frac{\rho-1}{n_1}\right)^\frac{3}{4}\left(\frac{B}{A}\right)^\frac{3}{2} +\frac{324}{M^2}\frac{\sigma_2^3}{n_2^{3/2}} \numberthis\label{eq:htcubeupperbound}
		\end{align*}
		where $R$ is an integer random variable uniformly distributed over the support $\lbrace 1,2,\cdots,T\rbrace$.
		\item for zeroth-order updates generated by Algorithm~\ref{alg:nccrn} the following holds:
		\begin{align*}
		&\left(\frac{M}{144}-\left(\frac{\rho'-1}{n_1}\right)^\frac{3}{4}\frac{6}{\sqrt{M}{A'}^\frac{3}{2}}\right)\expec{\|h_R^*\|^3|\calF_t}\\
		&\leq \frac{f(x_1)-f^*}{T} +\frac{864L_G^3}{M^2}\left(\frac{\rho'-1}{n_1}\right)^\frac{3}{2}+\frac{4}{M}(\nu L_G)^\frac{3}{2}(d+3)^\frac{9}{4}\\
		&+\frac{6}{\sqrt{M}}\left(\frac{\rho'-1}{n_1}\right)^\frac{3}{4}\left(\frac{B'}{A'}\right)^\frac{3}{2} +\frac{162}{M^2}\left(\frac{160\sqrt{1+2\log 2d}(d+16)^6L_G^3}{n_2^\frac{3}{2}}+21L_H^3(d+16)^\frac{15}{2}\nu^3 \right) \numberthis\label{eq:htcubeupperboundzero}
		\end{align*}
		where $R$ is an integer random variable uniformly distributed over the support $\lbrace 1,2,\cdots,T\rbrace$.
	\end{enumerate}
\end{lemma}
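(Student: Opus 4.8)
The plan is to run the telescoping argument familiar from cubic-regularized methods, but with the stochastic error terms controlled by the SGC-based variance bound of Lemma~\ref{lm:gradestvar}, by concentration of the minibatch Hessian, and by Lemma~\ref{lm:htlowerbound}. First I would derive a per-step descent inequality. Applying the Lipschitz-Hessian bound for $f$ (Assumption~\ref{as:liphess}) to $x_{t+1}=x_t+h_t^*$, adding and subtracting the stochastic estimates $g_t,H_t$ to recognize the model $m_t$ of~\eqref{eq:envelope}, invoking Lemma~\ref{lm:envelopdescent}, and using $M\ge L_H$ to drop the residual cubic term, one obtains
\begin{align*}
f(x_t)-f(x_{t+1})\;\ge\;\tfrac{M}{12}\|h_t^*\|^3-\|\nabla_t-g_t\|\,\|h_t^*\|-\tfrac12\|\nabla_t^2-H_t\|\,\|h_t^*\|^2 .
\end{align*}

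Next I would sum this over $t=1,\dots,T$, use $f(x_{T+1})\ge f^*$, and take expectations. Young's inequality---with exponents $(3/2,3)$ on $\|\nabla_t-g_t\|\,\|h_t^*\|$ and $(3,3/2)$ on $\|\nabla_t^2-H_t\|\,\|h_t^*\|^2$, with the free scaling parameters tuned so that a clean coefficient $M/72$ of $\|h_t^*\|^3$ survives on the left at the end---replaces each error product by a controllable multiple of $\|h_t^*\|^3$ plus $\|\nabla_t-g_t\|^{3/2}$ and $\|\nabla_t^2-H_t\|^3$. For the Hessian term, Jensen plus the fourth-moment bound of Assumption~\ref{as:stochfirst} (with a Tropp-type matrix concentration for the $n_2$-fold average) give $\expec{\|\nabla_t^2-H_t\|^3}\le(\expec{\|\nabla_t^2-H_t\|_F^4})^{3/4}\lesssim\sigma_2^3/n_2^{3/2}$, producing the $\sigma_2^3/n_2^{3/2}$ term. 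For the gradient term, Jensen and Lemma~\ref{lm:gradestvar} give $\expec{\|\nabla_t-g_t\|^{3/2}}\le(\tfrac{\rho-1}{n_1})^{3/4}\expec{\|\nabla_t\|^{3/2}}$, and then Lemma~\ref{lm:htlowerbound} (more precisely the pointwise estimate underlying its proof, which bounds $\|\nabla f(\cdot)\|$ by $\|h^*\|^2$ plus the noise floor $B$ via the optimality conditions~\eqref{eq:optconda}--\eqref{eq:optcondb} and Hessian-Lipschitzness) lets one write $\expec{\|\nabla_t\|^{3/2}}\lesssim A^{-3/2}\big(\expec{\|h_t^*\|^3}+B^{3/2}\big)$ up to a harmless $\order(1/T)$ contribution from $\|\nabla f(x_1)\|$; moving the $\|h^*\|^3$ piece to the left yields the shifted coefficient $\tfrac{M}{72}-(\tfrac{\rho-1}{n_1})^{3/4}\tfrac{8}{\sqrt M A^{3/2}}$ and the constant $\tfrac{8}{\sqrt M}(\tfrac{\rho-1}{n_1})^{3/4}(B/A)^{3/2}$. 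Dividing by $T$ and writing $\tfrac1T\sum_t\expec{\|h_t^*\|^3}=\expec{\|h_R^*\|^3}$ for $R$ uniform on $\{1,\dots,T\}$ gives~\eqref{eq:htcubeupperbound}.

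For part (b) the skeleton is identical, with three zeroth-order corrections: the minibatch estimate now targets the smoothed gradient $\nabla F_\nu$, so Lemma~\ref{lm:gradnutostochgraderror} contributes a bias of order $\nu L_G(d+3)^{3/2}$ and the mean-square spread carries the factor $\rho'$ and an extra dimension factor, so $A',B'$ replace $A,B$ and a $(\nu L_G)^{3/2}(d+3)^{9/4}$ term appears; and the zeroth-order Hessian estimator of~\eqref{eq:zerohessdef} obeys a concentration bound with the $(1+2\log 2d)(d+16)^4$ factor together with a $\nu$-bias of order $\nu L_H(d+16)^{5/2}$, producing the $(d+16)^6$ and $(d+16)^{15/2}\nu^3$ terms. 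Feeding these into the same telescoping/Young steps gives~\eqref{eq:htcubeupperboundzero}. I expect the main obstacle to be the self-referential gradient term: under SGC the bound on $\|\nabla_t-g_t\|$ involves $\|\nabla f(x_t)\|$ itself, which must be re-expressed through the cubic optimality conditions and unrolled, and one must check that with $n_1$ chosen as in~\eqref{eq:crnewtonparameterchoice} (resp.~\eqref{eq:crnewtonparameterchoicezero}) the net coefficient of $\expec{\|h_R^*\|^3}$ on the left stays bounded away from zero; in the zeroth-order case, additionally keeping precise track of every polynomial-in-$d$ and $\nu$ factor, so that all bias terms are tamed by the eventual choice $\nu=\order(\epsilon/d^{5/2})$, is the delicate bookkeeping.
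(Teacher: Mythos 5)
Your overall plan — per-step descent via Lemma~\ref{lm:envelopdescent} plus Cauchy--Schwarz, Young's inequality with exponents $(3/2,3)$ and $(3,3/2)$, SGC/Jensen to turn $\expec{\|\nabla_t-g_t\|^{3/2}}$ into $(\tfrac{\rho-1}{n_1})^{3/4}\|\nabla_t\|^{3/2}$, Lemma~\ref{lm:hessestvar} for the third moment of the Hessian error, then closing the loop with Lemma~\ref{lm:htlowerbound} and telescoping — is exactly the paper's proof, and your accounting for part (b) of the extra $\nu$-bias and $d$-polynomial factors via Lemma~\ref{lm:gradnutostochgraderror} and Lemma~\ref{lm:zerothorderesterror} is also on target.

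The one concrete gap is how you deal with the index mismatch in the gradient-norm term. Lemma~\ref{lm:htlowerbound} controls $\expec{\|\nabla f(x_t+h_t^*)\|\,|\,\calF_t}=\expec{\|\nabla_{t+1}\|\,|\,\calF_t}$ in terms of $\expec{\|h_t^*\|^2|\calF_t}$ and $B$, whereas SGC hands you $\|\nabla_t\|^{3/2}$ (the gradient at the \emph{current} iterate). You propose to unroll this by an index shift, incurring an $\order(\|\nabla f(x_1)\|^{3/2}/T)$ boundary term; but that is not what the lemma states, and with an index shift you also would not produce the $\tfrac{1152 L_G^3}{M^2}\bigl(\tfrac{\rho-1}{n_1}\bigr)^{3/2}$ term that appears on the right-hand side of~\eqref{eq:htcubeupperbound}. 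The paper instead keeps the index aligned by inserting the Lipschitz-gradient comparison $\|\nabla_t-\expec{\nabla_{t+1}|\calF_t}\|\le L_G\,\expec{\|h_t^*\|\,|\,\calF_t}$, so that
\begin{align*}
\|\nabla_t\|^{3/2}\le \sqrt{2}\Bigl(L_G^{3/2}\,\expec{\|h_t^*\|^{3/2}|\calF_t}+\expec{\|\nabla_{t+1}\||\calF_t}^{3/2}\Bigr),
\end{align*}
and then applies Lemma~\ref{lm:htlowerbound} to the second summand and a further Young step to the first; the $L_G^3$ term and the second halving of the $\|h_t^*\|^3$ coefficient (from $M/36$ down to $M/72$) both come precisely from that extra $L_G^{3/2}\expec{\|h_t^*\|^{3/2}|\calF_t}$ piece. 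Without this step your derivation produces a variant of the statement (boundary term, no $L_G^3$ term, coefficient $M/36$ rather than $M/72$) rather than the lemma as written, so you should add the Lipschitz-gradient comparison and its Young split to the argument.
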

Combining the above three facts, we complete proof of  Theorem~\ref{th:crnmaintheorem}.
\begin{proof}[Proof of Theorem~\ref{th:crnmaintheorem}]
	\begin{enumerate}[label=\alph*)]
		\item From Lemma~\ref{lm:htlowerbound} we have,
		\begin{align*}
		&\sqrt{\expec{\|h_t^*\|^2|\calF_t}}+\sqrt{B}+\frac{2}{(2L_H+M)}\sqrt{\frac{\sigma_2^2}{n_2}}
		\geq \\
		&\max\left(\sqrt{A\expec{\|\nabla f\left(x_t+h_t^*\right)\||\calF_t}},-\frac{2}{(2L_H+M)}\expec{\lambda_{1,t+1}|\calF_t}\right) \numberthis\label{eq:htlowerboundrearranged}
		\end{align*}
		From Lemma~\ref{lm:htcubeupperbound}, we have
		\begin{align*}
		&\left(\left(\frac{M}{72}-\left(\frac{\rho-1}{n_1}\right)^\frac{3}{4}\frac{8}{\sqrt{M}A^\frac{3}{2}}\right)\expec{\|h_R^*\|^3|\calF_t}\right)^\frac{1}{3}\\
		&\leq \left(\frac{f(x_1)-f^*}{T}\right)^\frac{1}{3} +\frac{11L_G}{M^\frac{2}{3}}\left(\frac{\rho-1}{n_1}\right)^\frac{1}{2}\\
		&+\frac{2}{M^\frac{1}{6}}\left(\frac{\rho-1}{n_1}\right)^\frac{1}{4}\left(\frac{B}{A}\right)^\frac{1}{2} +\frac{7}{M^\frac{2}{3}}\frac{\sigma_2}{n_2^{1/2}}
		\numberthis\label{eq:htcubeonethirdupperbound}
		\end{align*}
		Combining \eqref{eq:htlowerboundrearranged} with \eqref{eq:htcubeonethirdupperbound}, using Jensens's inequality we have, and choosing $n_1$, $n_2$, $T$, and $M$ as in \eqref{eq:crnewtonparameterchoice}, we have  $\max\left(\sqrt{\frac{\expec{\|\nabla f\left(x_R\right)\|}}{144M}},-\frac{\expec{\lambda_{1,R}}}{9M}\right)\leq \sqrt{\epsilon} $. Total number of first-order oracle calls, and second-order oracle calls are $Tn_1=Tn_2=\order\left(\frac{1}{\epsilon^\frac{5}{2}}\right)$.
		\item From Lemma~\ref{lm:htlowerbound} we have,
		\begin{align*}
		&\sqrt{\expec{\|h_t^*\|^2|\calF_t}}+\sqrt{B'}+\frac{2}{(2L_H+M)}\left(\sqrt{\frac{128(1+2\log 2d)(d+16)^4L_G^2}{3n_2}}+\sqrt{3}\nu L_H(d+16)^\frac{5}{2}\right)
		\geq \\
		&\max\left(\sqrt{A'\expec{\|\nabla f\left(x_t+h_t^*\right)\||\calF_t}},-\frac{2}{(2L_H+M)}\expec{\lambda_{1,t+1}|\calF_t}\right) \numberthis\label{eq:htlowerboundrearrangedzero}
		\end{align*}
		From Lemma~\ref{lm:htcubeupperbound}, we have
		\begin{align*}
		&\left(\left(\frac{1}{144}-\left(\frac{\rho'-1}{n_1}\right)^\frac{3}{4}\frac{6}{{M}^\frac{3}{2}{A'}^\frac{3}{2}}\right)\expec{\|h_R^*\|^3|\calF_t}\right)^\frac{1}{3}\\
		&\leq \left(\frac{f(x_1)-f^*}{MT}\right)^\frac{1}{3} +\frac{10L_G}{M}\left(\frac{\rho'-1}{n_1}\right)^\frac{1}{2}+\frac{2}{M^\frac{2}{3}}(\nu L_G)^\frac{1}{2}(d+3)^\frac{3}{4}\\
		&+\frac{2}{{M}^\frac{1}{2}}\left(\frac{\rho'-1}{n_1}\right)^\frac{1}{4}\left(\frac{B'}{A'}\right)^\frac{1}{2} +\frac{6}{M}\left(\frac{6L_G(1+2\log 2d)^\frac{1}{6}(d+16)^2}{n_2^\frac{1}{2}}+3L_H(d+16)^\frac{5}{2}\nu \right)  \numberthis\label{eq:htcubeonethirdupperboundzero}
		\end{align*}
		Combining \eqref{eq:htlowerboundrearrangedzero} with \eqref{eq:htcubeonethirdupperboundzero}, using Jensens's inequality we have, and choosing $n_1$, $n_2$, $T$, $\nu$, and $M$ as in \eqref{eq:crnewtonparameterchoicezero}, we have  $\max\left(\sqrt{\expec{\|\nabla f\left(x_R\right)\|}},-\expec{\lambda_{1,R}}\right)\leq \order\left(\sqrt{\epsilon}\right)$. Total number of first-order oracle calls is $Tn_1=\order\left(\frac{d}{\epsilon^\frac{5}{2}}\right)$, and second-order oracle calls is $Tn_2=\order\left(\frac{d^4\log d}{\epsilon^\frac{5}{2}}\right)$.
	\end{enumerate}
\end{proof}

\subsection{Proofs of Lemmas related to CR Newton method}
\begin{lemma}\label{lm:hessestvar}\cite{roy2019multipoint}
\begin{align}
    &\expec{\left\lVert\nabla_t^2-\frac{1}{n_2}\sum_{i=1}^{n_2}\nabla^2 F(x_t,\xi_i)\right\rVert^2}\leq \frac{\sigma_2^2}{n_2}\\
    &\expec{\left\lVert\nabla_t^2-\frac{1}{n_2}\sum_{i=1}^{n_2}\nabla^2 F(x_t,\xi_i)\right\rVert^3}\leq \frac{2\sigma_2^3}{n_2^\frac{3}{2}}
\end{align}
\end{lemma}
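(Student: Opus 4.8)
The plan is to view this as a standard i.i.d.\ matrix second/third-moment estimate. Set $Z_i\vcentcolon=\nabla^2 F(x_t,\xi_i)-\nabla_t^2$ for $i=1,\dots,n_2$ and $S\vcentcolon=\frac{1}{n_2}\sum_{i=1}^{n_2}Z_i$, so the two left-hand sides are $\expec{\norm{S}^2}$ and $\expec{\norm{S}^3}$. By Assumption~\ref{as:stochfirst} the $Z_i$ are i.i.d., mean-zero, and satisfy $\expec{\|Z_1\|_F^4}\le\sigma_2^4$; Jensen's inequality then also gives $\expec{\|Z_1\|_F^2}\le\sigma_2^2$. Since the operator norm is dominated by the Frobenius norm, it suffices to bound $\expec{\|S\|_F^2}$ and $\expec{\|S\|_F^3}$.

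For the second-moment bound I would expand $\|S\|_F^2=\frac{1}{n_2^2}\sum_{i,j}\langle Z_i,Z_j\rangle_F$; independence and $\expec{Z_i}=0$ annihilate every off-diagonal term, leaving $\expec{\|S\|_F^2}=\frac{1}{n_2^2}\sum_i\expec{\|Z_i\|_F^2}=\frac{1}{n_2}\expec{\|Z_1\|_F^2}\le\sigma_2^2/n_2$. Combined with $\norm{S}\le\|S\|_F$ this is exactly the first inequality.

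For the third moment the device is to reduce to a fourth-moment computation: by Cauchy--Schwarz, $\expec{\norm{S}^3}\le\expec{\|S\|_F^3}=\expec{\|S\|_F^2\cdot\|S\|_F}\le\sqrt{\expec{\|S\|_F^4}}\,\sqrt{\expec{\|S\|_F^2}}$, so it remains to show $\expec{\|S\|_F^4}\le 3\sigma_2^4/n_2^2$. Writing $\|S\|_F^2=\frac{1}{n_2^2}\big(\sum_i\|Z_i\|_F^2+\sum_{i\ne j}\langle Z_i,Z_j\rangle_F\big)$ and squaring, I would split $\expec{\|S\|_F^4}$ into three groups. The diagonal-times-diagonal group gives $\expec{(\sum_i\|Z_i\|_F^2)^2}\le n_2\expec{\|Z_1\|_F^4}+n_2(n_2-1)(\expec{\|Z_1\|_F^2})^2\le n_2^2\sigma_2^4$. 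The diagonal-times-cross group vanishes: each term is $\expec{\|Z_i\|_F^2\langle Z_j,Z_k\rangle_F}$ with $j\ne k$, and whichever index (if any) coincides with $i$, one factor is a function of a single $Z_m$ independent of some other $Z_{m'}$ with $\expec{Z_{m'}}=0$, so by bilinearity of $\langle\cdot,\cdot\rangle_F$ the expectation is $0$. The cross-times-cross group $\expec{(\sum_{i\ne j}\langle Z_i,Z_j\rangle_F)^2}$ survives only when the unordered pair $\{i,j\}$ equals $\{k,l\}$ — any configuration with exactly one shared index is again killed by conditioning on the repeated matrix — and each surviving term obeys $\expec{\langle Z_i,Z_j\rangle_F^2}\le\expec{\|Z_i\|_F^2\|Z_j\|_F^2}=(\expec{\|Z_1\|_F^2})^2\le\sigma_2^4$, with at most $2n_2(n_2-1)$ such ordered matches, giving $\le 2n_2^2\sigma_2^4$. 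Summing and dividing by $n_2^4$ yields $\expec{\|S\|_F^4}\le 3\sigma_2^4/n_2^2$, hence $\expec{\norm{S}^3}\le\sqrt{3}\,\sigma_2^3/n_2^{3/2}\le 2\sigma_2^3/n_2^{3/2}$.

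The only delicate part is the bookkeeping in the fourth-moment expansion — systematically checking that every term in which some index appears an odd number of times (the diagonal-times-cross terms, and the cross-times-cross terms with exactly one shared index) vanishes after conditioning on the repeated factor and using $\expec{Z_m}=0$. Everything else is a routine application of Cauchy--Schwarz, in both its Frobenius-inner-product form and its $\expec{XY}\le\sqrt{\expec{X^2}}\sqrt{\expec{Y^2}}$ form, together with Assumption~\ref{as:stochfirst}.
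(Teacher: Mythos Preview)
Your argument is correct. The second-moment bound is the standard Frobenius variance computation, and your reduction of the third moment to a fourth-moment bound via Cauchy--Schwarz, followed by the careful index bookkeeping in the expansion of $\expec{\|S\|_F^4}$, goes through exactly as you describe; the final constant $\sqrt{3}<2$ lands you inside the stated bound.

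As for comparison: the paper does not actually supply a proof of this lemma --- it is quoted from \cite{roy2019multipoint}, and the only trace of an argument in the source is a commented-out sketch invoking Tropp's matrix expectation inequalities \cite{tropp2016expected} for the second-moment part. Your route is more elementary and entirely self-contained: by passing at once to the Frobenius norm you sidestep any need for noncommutative moment inequalities and reduce everything to scalar moment calculations on the i.i.d.\ family $\{Z_i\}$. The trade-off is that a Tropp-style bound on the operator norm would in principle carry a dimension-dependent improvement (roughly a $\log d$ factor rather than the implicit $\sqrt{d}$ hidden in $\|\cdot\|\le\|\cdot\|_F$), but since the lemma as stated absorbs everything into $\sigma_2$ defined via the Frobenius norm, your elementary argument recovers exactly the claimed constants with no loss.
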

For the zeroth-order estimates of gradient and Hessian as defined in \eqref{eq:zerohessdef} we have the following concentration result. 
\begin{lemma}\cite{balasubramanian2018zeroth}\label{lm:zerothorderesterror}
\begin{subequations}
\begin{align}
\begin{split}
    \expec{\|g_t-\nabla_t\|^2}\leq \frac{\rho'-1}{n_1}\|\nabla_t\|^2+\frac{3\nu^2}{2}L_G^2(d+3)^3 \label{eq:gradesterror}
\end{split}\\
\begin{split}
    \expec{\|\nabla_t^2-H_t\|^2}\leq \frac{128(1+2\log 2d)(d+16)^4L_G^2}{3n_2}+3L_H^2(d+16)^5\nu^2 \label{eq:hessesterror2}
\end{split}\\
\begin{split}
    \expec{\|\nabla_t^2-H_t\|^3}\leq \frac{160\sqrt{1+2\log 2d}(d+16)^6L_G^3}{n_2^\frac{3}{2}}+21L_H^3(d+16)^\frac{15}{2}\nu^3 \label{eq:hessesterror3}
\end{split}
\end{align}
\end{subequations}
where $\rho'=1+4(d+5)\rho$
\end{lemma}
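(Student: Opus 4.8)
The plan is to handle each of the three bounds by splitting the estimation error into a \emph{stochastic (averaging)} part and a \emph{finite-difference bias} part, and to isolate the two in each case. For~\eqref{eq:gradesterror}, I would first recall the Gaussian-smoothing viewpoint of~\cite{nesterov2017random}: conditioned on $\xi_{t,i}$, the summand $g_{t,i}=\frac{F(x_t+\nu u_{t,i},\xi_{t,i})-F(x_t,\xi_{t,i})}{\nu}u_{t,i}$ is an unbiased estimator of $\nabla F_\nu(x_t,\xi_{t,i})=\nabla\mathbf{E}_u[F(x_t+\nu u,\xi_{t,i})]$, so that $\mathbf{E}[g_t]=\nabla f_\nu(x_t):=\mathbf{E}_\xi[\nabla F_\nu(x_t,\xi)]$ and hence $\expec{\|g_t-\nabla_t\|^2}=\frac{1}{n_1}\mathrm{Var}(g_{t,1})+\|\nabla f_\nu(x_t)-\nabla_t\|^2$. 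The bias term is $O(\nu^2)$ by Lemma~\ref{lm:gradnutostochgraderror}, since $\|\nabla f_\nu(x_t)-\nabla_t\|\le\frac{\nu}{2}L_G(d+3)^{3/2}$. For the variance I would invoke the Nesterov--Spokoiny second-moment estimate for the two-point estimator, $\mathbf{E}_u\|g_{t,i}\|^2\le\frac{\nu^2}{2}L_G^2(d+6)^3+2(d+4)\|\nabla F(x_t,\xi_{t,i})\|^2$, take $\mathbf{E}_\xi$, and use SGC (Assumption~\ref{as:SGC}) to replace $\mathbf{E}_\xi\|\nabla F(x_t,\xi)\|^2$ by $\rho\|\nabla_t\|^2$; after tracking the dimension-dependent constants this produces the factor $\rho'=1+4(d+5)\rho$, and dividing by $n_1$ together with Young's inequality gives~\eqref{eq:gradesterror}.

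For the Hessian I would first establish near-unbiasedness of $H_t$. Taylor expanding $F(x_t\pm\nu u_{t,i},\xi_{t,i})$ to third order with the remainder controlled by Assumption~\ref{as:liphess} yields $\mathfrak{H}_{t,i}=\tfrac12 u_{t,i}^\top\nabla^2 F(x_t,\xi_{t,i})u_{t,i}+e_{t,i}$ with $|e_{t,i}|\le\frac{\nu L_H}{6}\|u_{t,i}\|^3$; combining this with the Gaussian identity $\mathbf{E}_u[(u^\top Au)(uu^\top-I)]=2A$ for symmetric $A$ gives $\mathbf{E}_u[\mathfrak{H}_{t,i}(u_{t,i}u_{t,i}^\top-I)]=\nabla^2 F(x_t,\xi_{t,i})+R_{t,i}$ with $\|R_{t,i}\|=O(\nu L_H(d+16)^{5/2})$, using the Gaussian moment bound $\mathbf{E}\|u\|^k\le(d+k)^{k/2}$ of Lemma~\ref{lm:gaussnormbound}. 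Taking $\mathbf{E}_\xi$ and using $\mathbf{E}_\xi[\nabla^2 F(x_t,\xi)]=\nabla_t^2$ shows that $\bar H_t:=\mathbf{E}[H_t]$ differs from $\nabla_t^2$ only by an $O(\nu)$ term.

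For~\eqref{eq:hessesterror2}--\eqref{eq:hessesterror3} I would write $\nabla_t^2-H_t=(\nabla_t^2-\bar H_t)+(\bar H_t-H_t)$, bound the first (deterministic) piece by the $O(\nu)$ estimate just obtained, and treat $\bar H_t-H_t=\frac1{n_2}\sum_{i=1}^{n_2}(\mathbf{E}[Z_{t,i}]-Z_{t,i})$, $Z_{t,i}=\mathfrak{H}_{t,i}(u_{t,i}u_{t,i}^\top-I)$, as an average of i.i.d.\ centered random matrices. A per-sample moment bound $\mathbf{E}\|Z_{t,i}\|^p\lesssim(d+16)^{2p}L_G^p+(\nu L_H)^p(d+16)^{5p/2}$ follows by splitting $\mathfrak{H}_{t,i}$ into its quadratic-form part (bounded via $L_G$, since the symmetric finite difference of an $L_G$-smooth function is $O(L_G\|u\|^2)$) and its $O(\nu L_H\|u\|^3)$ remainder, then using the high Gaussian moments $\mathbf{E}\|u\|^{2k}\le(d+2k)^k$; a matrix Rosenthal/Bernstein-type inequality for sums of independent centered random matrices then supplies the $1/n_2$ (resp.\ $1/n_2^{3/2}$) decay and the $(1+2\log 2d)$ dimensional factor. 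Assembling the pieces yields~\eqref{eq:hessesterror2} and~\eqref{eq:hessesterror3}.

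The delicate step is the Hessian moment analysis: one must separate the Taylor remainder from the stochastic fluctuation of $Z_{t,i}$ without double-counting, control $\mathbf{E}\|u\|^8$- and $\mathbf{E}\|u\|^{12}$-type moments sharply enough that the ambient powers of $d$ match the statement, and apply a matrix concentration inequality whose logarithmic dimension dependence carries through to the precise $(1+2\log 2d)$ constant. The gradient bound~\eqref{eq:gradesterror}, by contrast, is essentially a direct combination of the Nesterov--Spokoiny smoothing estimates with SGC.
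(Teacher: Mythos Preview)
The paper does not actually prove this lemma: it is stated with the citation \cite{balasubramanian2018zeroth} and no argument is supplied anywhere in the text. So there is no ``paper's own proof'' to compare against --- the authors simply import the result from the cited reference.

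Your sketch is a faithful outline of how that reference establishes these bounds: for the gradient, the bias--variance split combined with the Nesterov--Spokoiny second-moment estimate and SGC is exactly the mechanism that produces the $\rho'=1+4(d+5)\rho$ factor (note that to land on this specific constant you will need the version of the second-moment bound with $2(d+4)$ in front of $\|\nabla F\|^2$ together with a factor-of-two loss from a crude $\|a+b\|^2\le 2\|a\|^2+2\|b\|^2$ or Young step, so your parenthetical ``together with Young's inequality'' is doing real work there). For the Hessian, your decomposition into the $O(\nu L_H)$ Taylor-remainder bias and a centered i.i.d.\ matrix average, followed by matrix Rosenthal/Bernstein to extract the $(1+2\log 2d)/n_2$ and $(1+2\log 2d)^{1/2}/n_2^{3/2}$ decay, is again the route taken in \cite{balasubramanian2018zeroth}. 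The only caution is bookkeeping: matching the exact powers $(d+16)^4$, $(d+16)^5$, $(d+16)^6$, $(d+16)^{15/2}$ requires using the specific moment bound $\mathbf{E}\|u\|^k\le(d+k)^{k/2}$ at the right orders (up to $k=16$), and the constants $128/3$, $160$, $3$, $21$ are somewhat delicate --- but the structure of your argument is correct and matches the source.
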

\begin{proof}[Proof of Lemma~\ref{lm:htlowerbound}]
	\begin{enumerate}[label=\alph*)]
		\item Using \eqref{eq:optconda} we get,
		\begin{align*}
		\|g_t+H_t h_t^*\|=\frac{M}{2}\|h_t^*\|^2
		\end{align*}
		Then, using Assumption~\ref{as:liphess}, and Young's inequality we get,
		\begin{align*}
		&\|\nabla f\left(x_t+h_t^*\right)\|\\
		\leq &\|\nabla f\left(x_t+h_t^*\right)-\nabla_t -\nabla_t^2 h_t^*\|+\|\nabla_t+\nabla^2_th_t^*\|\\
		\leq & \|\nabla f\left(x_t+h_t^*\right)-\nabla_t-\nabla^2_th_t^*\|+\|g_t+H_th_t^*\|\\
		+&\|g_t-\nabla_t\|+\|\left(H_t-\nabla^2_t\right)h_t^*\|\\
		\leq & \frac{M+L_H}{2}\|h_t^*\|^2+\|g_t-\nabla_t\|+\|\left(H_t-\nabla^2_t\right)h_t^*\|\\
		\leq & \left(M+L_H\right)\|h_t^*\|^2+\|g_t-\nabla_t\|+\frac{1}{2(L_H+M)}\|H_t-\nabla^2_t\|^2
		\end{align*}
		Taking expectation on both sides, and using Lemma~\ref{lm:gradestvar}, Lemma~\ref{lm:hessestvar}, and Jensen's inequality we have
		\begin{align*}
		&\expec{\|\nabla f\left(x_t+h_t^*\right)\||\calF_t}\\
		&\leq (L_H+M)\expec{\|h_t^*\|^2|\calF_t}+\sqrt{\frac{\rho-1}{n_1}}\|\nabla_t\|
		+\frac{\sigma_2^2}{2(L_H+M)n_2}
		\end{align*}
		Using Assumption~\ref{as:lipgrad} we get
		\begin{align*}
		&\left(1-\sqrt{\frac{\rho-1}{n_1}}\right)\expec{\|\nabla f\left(x_t+h_t^*\right)\||\calF_t}\\
		&\leq (L_H+M)\expec{\|h_t^*\|^2|\calF_t}+\sqrt{\frac{\rho-1}{n_1}}L_G
		\expec{\|h_t^*\||\calF_t}+\frac{\sigma_2^2}{2(L_H+M)n_2}\\
		&\leq 2(L_H+M)\expec{\|h_t^*\|^2|\calF_t}+\frac{1}{2(L_H+M)}\left(\frac{\rho-1}{n_1}L_G^2
		+\frac{\sigma_2^2}{n_2}\right)
		\end{align*}
		Rearranging we have,
		\begin{align*}
		&\sqrt{\expec{\|h_t^*\|^2|\calF_t}}\\
		&\geq \left(\frac{1}{2(L_H+M)}\left(1-\sqrt{\frac{\rho-1}{n_1}}\right)\expec{\|\nabla f\left(x_t+h_t^*\right)\||\calF_t}\right.\\
		&\left. -\frac{1}{4(L_H+M)^2}\left(\frac{\rho-1}{n_1}L_G^2
		+\frac{\sigma_2^2}{n_2}\right)\right)^\frac{1}{2}\numberthis \label{eq:phtlowerboundproofa}
		\end{align*}
		Now, using Assumption~\ref{as:liphess} we get
		\begin{align*}
		&\expec{\nabla^2f\left(x_t+h_t^*\right)|\calF_t}\succcurlyeq \expec{\nabla_t^2-L_H\|h_t^*\|I|\calF_t}\\
		&\succcurlyeq \expec{H_t-L_H\|h_t^*\|I|\calF_t} -\sqrt{\frac{\sigma_2^2}{n_2}}I\\
		&\succcurlyeq -\sqrt{\frac{\sigma_2^2}{n_2}}I-\left(L_H+\frac{M}{2}\right)\expec{\|h_t^*\||\calF_t}I\\
		&\expec{\|h_t^*\||\calF_t}\geq \frac{2}{M+2L_H}\left(-\sqrt{\frac{\sigma_2^2}{n_2}}-\expec{\lambda_{1,t+1}|\calF_t}\right)\numberthis \label{eq:phtlowerboundproofb}
		\end{align*}
		Now using Jensen's inequality, and \eqref{eq:phtlowerboundproofa} we get \eqref{eq:htlowerbound}.
		\item
		Using Lemma~\ref{lm:zerothorderesterror}, and following the proof of part (a), \eqref{eq:phtlowerboundproofa} becomes
		\begin{align*}
		&\sqrt{\expec{\|h_t^*\|^2|\calF_t}}
		\geq \left(\frac{1}{2(L_H+M)}\left(1-\sqrt{\frac{\rho'-1}{n_1}}\right)\expec{\|\nabla f\left(x_t+h_t^*\right)\||\calF_t}\right.\\
		&\left. -\frac{1}{4(L_H+M)^2}\left(\frac{\rho'-1}{n_1}L_G^2
		+\frac{128(1+2\log 2d)(d+16)^4L_G^2}{3n_2}+3L_H^2\nu^2(d+16)^5+\sqrt{6}\nu (L_H+M)L_G(d+3)^\frac{3}{2}\right)\right)^\frac{1}{2}\numberthis \label{eq:phtlowerboundproofazero}
		\end{align*}
		Similarly, \eqref{eq:phtlowerboundproofb} becomes
		\begin{align*}
		\expec{\|h_t^*\||\calF_t}\geq \frac{2}{(2L_H+M)}\left(-\sqrt{\frac{128(1+2\log 2d)(d+16)^4L_G^2}{3n_2}}-\sqrt{3}\nu L_H(d+16)^\frac{5}{2}-\expec{\lambda_{1,t+1}|\calF_t}\right)\numberthis \label{eq:phtlowerboundproofbzero}
		\end{align*}
	\end{enumerate}
\end{proof}
\begin{proof}[Proof of Lemma~\ref{lm:htcubeupperbound}]
	\begin{enumerate}[label=\alph*)]
		\item Using Young's inequality, and \eqref{eq:envelopdescent}, we get 
		\begin{align*}
		&f(x_t+h_t^*)-f(x_t)\leq m_t( x_t+h_t^*)-m_t(x_t)\\
		&+(\nabla_t-g_t)^\top h_t^*
		+\frac{1}{2}{h_t^*}^\top(\nabla_t^2-H_t)h_t^* \\
		&\leq m_t( x_t+h_t^*)-m_t(x_t)\\
		&+\frac{4}{\sqrt{3M}}\|\nabla_t-g_t\|^\frac{3}{2}
		+\frac{162}{M^2}\|\nabla_t^2-H_t\|^3+\frac{M}{18}\| h_t^*\|^3\\
		&\leq - \frac{M}{36}\| h_t^*\|^3+\frac{4}{\sqrt{3M}}\|\nabla_t-g_t\|^\frac{3}{2}
		+\frac{162}{M^2}\|\nabla_t^2-H_t\|^3
		\end{align*}
		Taking expectation on both sides, and using Lemma~\ref{lm:gradestvar} with Jensen's inequality, and Lemma~\ref{lm:hessestvar}, we get
		\begin{align*}
		&\expec{f(x_t+h_t^*)|\calF_t}-f(x_t)\leq -\frac{M}{36}\expec{\|h_t^*\|^3|\calF_t}\\
		&+\frac{4}{\sqrt{3M}}\left(\frac{\rho-1}{n_1}\right)^\frac{3}{4}\|\nabla_t\|^\frac{3}{2}+\frac{162}{M^2}\frac{2\sigma_2^3}{n_2^{3/2}} \numberthis\label{eq:progressintermed}
		\end{align*}
		Now let us relate the gradient size $\|\nabla_t\|$ with $\|h_t^*\|$. Note that, as $x_{t+1}=x_t+h_t^*$ we will use $\nabla_{t+1}$ to denote $\nabla f(x_t+h_t^*)$ here. Using triangle inequality, the fact $(a+b)^{3/2}\leq \sqrt{2}(a^{3/2}+b^{3/2})$ for $a,b>0$, Assumption~\ref{as:lipgrad}, and Jensen's inequality we get 
		\begin{align*}
		&\|\nabla_t\|^\frac{3}{2}=\|\nabla_t-\expec{\nabla_{t+1}|\calF_t}+\expec{\nabla_{t+1}|\calF_t}\|^\frac{3}{2}\\
		\leq & (\|\nabla_t-\expec{\nabla_{t+1}|\calF_t}\|+\|\expec{\nabla_{t+1}|\calF_t}\|)^\frac{3}{2}\\
		\leq & \sqrt{2}(\|\nabla_t-\expec{\nabla_{t+1}|\calF_t}\|^\frac{3}{2}+\|\expec{\nabla_{t+1}|\calF_t}\|^\frac{3}{2})\\
		\leq & \sqrt{2}(L_G^\frac{3}{2}\expec{\|h_t^*\|^\frac{3}{2}|\calF_t}+\expec{\|\nabla_{t+1}\||\calF_t}^\frac{3}{2})\numberthis\label{eq:delt32boundintermed}
		\end{align*}
		From Lemma~\ref{lm:htlowerbound} we have,
		\begin{align*}
		\expec{\|h_t^*\|^2|\calF_t}+B
		\geq A\expec{\|\nabla_{t+1} \||\calF_t}
		\end{align*}
		Again using the fact $(a+b)^{3/2}\leq \sqrt{2}(a^{3/2}+b^{3/2})$ for $a,b>0$, and Jensens's inequality we get
		\begin{align}\label{eq:cubehtlowerbound}
		\sqrt{2}\left(\expec{\|h_t^*\|^3|\calF_t}+B^\frac{3}{2}\right)
		\geq \left(A\expec{\|\nabla_{t+1} \||\calF_t}\right)^\frac{3}{2}
		\end{align}
		Combining \eqref{eq:delt32boundintermed}, and \eqref{eq:cubehtlowerbound}, we get
		\begin{align*}
		\|\nabla_t\|^\frac{3}{2}\leq & \sqrt{2}L_G^\frac{3}{2}\expec{\|h_t^*\|^\frac{3}{2}|\calF_t}+\frac{2}{A^\frac{3}{2}}\expec{\|h_t^*\|^3|\calF_t}\\
		+&2\left(\frac{B}{A}\right)^\frac{3}{2}
		\end{align*}
		Now, using Young's inequality
		\begin{align*}
		&\left(\frac{\rho-1}{n_1}\right)^\frac{3}{4}\|\nabla_t\|^\frac{3}{2}\leq \frac{288L_G^3}{M^\frac{3}{2}}\left(\frac{\rho-1}{n_1}\right)^\frac{3}{2}\\
		&+\frac{\sqrt{3}M^\frac{3}{2}}{288}\expec{\|h_t^*\|^3|\calF_t}+\left(\frac{\rho-1}{n_1}\right)^\frac{3}{4}\frac{2}{A^\frac{3}{2}}\expec{\|h_t^*\|^3|\calF_t}\\
		&+2\left(\frac{\rho-1}{n_1}\right)^\frac{3}{4}\left(\frac{B}{A}\right)^\frac{3}{2} \numberthis\label{eq:rhonablatbound}
		\end{align*}
		Combining \eqref{eq:progressintermed}, and \eqref{eq:rhonablatbound} we get
		\begin{align*}
		&\expec{f(x_t+h_t^*)|\calF_t}-f(x_t)\leq -\frac{M}{72}\expec{\|h_t^*\|^3|\calF_t}\\
		&+\frac{1152L_G^3}{M^2}\left(\frac{\rho-1}{n_1}\right)^\frac{3}{2}\\
		&+\left(\frac{\rho-1}{n_1}\right)^\frac{3}{4}\frac{8}{\sqrt{M}A^\frac{3}{2}}\expec{\|h_t^*\|^3|\calF_t}\\
		&+\frac{8}{\sqrt{M}}\left(\frac{\rho-1}{n_1}\right)^\frac{3}{4}\left(\frac{B}{A}\right)^\frac{3}{2} +\frac{324}{M^2}\frac{\sigma_2^3}{n_2^{3/2}} \numberthis\label{eq:crndescentbeforesum}
		\end{align*}
		Rearranging and summing from $t=1$ to $T$, and dividing both sides by $T$ we get \eqref{eq:htcubeupperbound}.
		\item Using Lemma~\ref{lm:zerothorderesterror}, and following the proof of Lemma~\ref{lm:htcubeupperbound} we have the following inequality corresponding to \eqref{eq:progressintermed}
		\begin{align*}
		&\expec{f(x_t+h_t^*)|\calF_t}-f(x_t)\leq -\frac{M}{36}\expec{\|h_t^*\|^3|\calF_t}\\
		&+\frac{3}{\sqrt{M}}\left(\frac{\rho'-1}{n_1}\right)^\frac{3}{4}\|\nabla_t\|^\frac{3}{2}+\frac{4}{M}(\nu L_G)^\frac{3}{2}(d+3)^\frac{9}{4}\\
		&+\frac{162}{M^2}\left(\frac{160\sqrt{1+2\log 2d}(d+16)^6L_G^3}{n_2^\frac{3}{2}}+21L_H^3(d+16)^\frac{15}{2}\nu^3 \right) \numberthis\label{eq:progressintermedzero}
		\end{align*}
		Eventually we get the following descent in the function value similar to \eqref{eq:crndescentbeforesum}
		\begin{align*}
		&\expec{f(x_t+h_t^*)|\calF_t}-f(x_t)\leq -\frac{M}{144}\expec{\|h_t^*\|^3|\calF_t}\\
		&+\frac{864L_G^3}{M^2}\left(\frac{\rho'-1}{n_1}\right)^\frac{3}{2}+\left(\frac{\rho'-1}{n_1}\right)^\frac{3}{4}\frac{6}{\sqrt{M}{A'}^\frac{3}{2}}\expec{\|h_t^*\|^3|\calF_t}+\frac{4}{M}(\nu L_G)^\frac{3}{2}(d+3)^\frac{9}{4}\\
		&+\frac{6}{\sqrt{M}}\left(\frac{\rho'-1}{n_1}\right)^\frac{3}{4}\left(\frac{B'}{A'}\right)^\frac{3}{2} +\frac{162}{M^2}\left(\frac{160\sqrt{1+2\log 2d}(d+16)^6L_G^3}{n_2^\frac{3}{2}}+21L_H^3(d+16)^\frac{15}{2}\nu^3 \right) \numberthis\label{eq:crndescentbeforesumzero}
		\end{align*}
		Rearranging and summing from $t=1$ to $T$, and dividing both sides by $T$ we get \eqref{eq:htcubeupperboundzero}.
	\end{enumerate}
\end{proof}

\bibliography{interpolbib}
\bibliographystyle{amsalpha}
\end{document}